\newcommand{\dblnc}{\ensuremath{\mathop{\parallel\!\sim}}} 
 \definecolor{shadecolor}{HTML}{EEF0E7} \definecolor{framecolor}{HTML}{892A54} \definecolor{labelkey}{HTML}{892A54} 
\title{Relevance in Structured Argumentation\makeatletter{\renewcommand*{\@m}{}
    \footnotetext{The paper has been published in the proceedings of IJCAI 2018, main track~\cite{BorStr18}.}
\makeatother}}
 \author{
AnneMarie Borg {\normalfont and} 
Christian Stra{\ss}er,
\\ 
Ruhr-University Bochum, Germany \\
annemarie.borg@rub.de,
christian.strasser@rub.de
}
\date{}
\begin{document}

\maketitle
\begin{abstract}
 We study properties related to relevance in non-monotonic consequence relations obtained by systems of structured argumentation. Relevance desiderata concern the robustness of a consequence relation under the addition of irrelevant information. For an account of what (ir)relevance amounts to we use syntactic and semantic considerations. Syntactic criteria have been proposed in the domain of relevance logic and were recently used in argumentation theory under the names of non-interference and crash-resistance. The basic idea is that the conclusions of a given argumentative theory should be robust under adding information that shares no propositional variables with the original database. Some semantic relevance criteria are known from non-monotonic logic. For instance, cautious monotony states that if we obtain certain conclusions from an argumentation theory, we may expect to still obtain the same conclusions if we add some of them to the given database. In this paper we investigate properties of structured argumentation systems that warrant relevance desiderata. 

\end{abstract}


\newenvironment{lbox}{%
    \def\FrameCommand{\fboxrule=\FrameRule\fboxsep=0.6\FrameSep \fcolorbox{framecolor}{shadecolor}}%
    \MakeFramed {\advance\hsize-\width \FrameRestore}}%
    {\endMakeFramed}
\renewcommand*\showkeyslabelformat[1]{%
\fbox{\normalfont\tiny#1}}

\section{Introduction}
\label{sec:orgb150017}

In this paper we investigate conditions under which the non-monotonic consequence relation of a given structured argumentation system is robust when irrelevant information is added or removed. 
 Relevance can hereby be understood in two ways. First, syntactically as information that shares propositional variables with the information at hand. Second, semantically, as information that for some reason should not be considered to have defeating power over previously accepted arguments. 

Structured argumentation has been studied in various settings such as ASPIC~\cite{ModPra14,Pra10}, ABA~\cite{Bondarenko1997,Toni14}, and logic-based argumentation~\cite{ArStr15argcomp,BesHun01,BesHun14}. These frameworks share the underlying idea that arguments are to have a logical structure and attacks between them are at least partially determined by logical considerations. Although investigations into translations between these frameworks have been intensified  recently~\cite{HeyStr16}, the frameworks are in various aspects difficult to compare and results obtained in one do not easily transfer to others. For this reason, we decided in this paper to study relevance-related properties for structured argumentation on the basis of a simple framework for structured argumentation that allows us, on the one hand, to abstract away from particularities of the systems from the literature and, on the other hand, to translate these frameworks easily. The framework is simple in that arguments are premise-conclusion pairs \((\Gamma,\gamma)\) obtained from a given consequence relation and it only allows for one type of attack (attacks in premises). The obtained simplicity makes studying meta-theory technically straight-forward and the availability of the translations makes results easily transferable. 

The paper is structured as follows. In Section~\ref{sec:GeneralSetting} we introduce our general setting for structured argumentation. In Section~\ref{sec:RelevanceProperties} we define the basic relevance-related properties that we will investigate in this paper. In Section~\ref{sec:org98c77f8} we show how many of the most common systems of structured argumentation can be represented in our setting. In Section~\ref{sec:org530aa41} we prove our main results. We conclude in Section~\ref{sec:Conclusion}.


\section{General Setting}
\label{sec:org41edb80}
\label{sec:GeneralSetting}

In the following we work with a simple setting for structured argumentation. It is abstract in the sense that it allows for instantiations that are adequate representations of many of the available systems of structured argumentation such as logic-based argumentation, ASPIC, ABA, etc.\ (see Section~\ref{sec:org98c77f8}). In this contribution we restrict ourselves to non-prioritized settings.

We suppose to have available a formal language \(\mathcal{L}\) (we denote the set of well-formed formulas over \(\mathcal{L}\) also by \(\mathcal{L}\)) and a relation \({\vdash} \subseteq \wp_{\sf fin}(\mathcal{L}) \times \mathcal{L}\) (where \(\wp_{\sf fin}\) denotes the set of \emph{finite} subsets) which we will refer to as the \emph{deducability relation}. We do not suppose any of the usual Tarskian properties in what follows (reflexivity, transitivity, and monotonicity). 

\begin{definition}[$\mathit{Arg}_{\vdash}(\cdot)$]
\label{definition:arg}
Given a set of formulas \(\mathcal{S} \subseteq \mathcal{L}\) we denote by \(\mathit{Arg}_{\vdash}(\mathcal{S})\) the set of \(\mathcal{S}\)-based arguments: \((\Gamma, \gamma) \in \mathit{Arg}_{\vdash}(\mathcal{S})\) iff \(\Gamma \vdash \gamma\) for $\Gamma\subseteq\mathcal{S}$. Given \(a = (\Gamma, \gamma) \in \mathit{Arg}_{\vdash}(\mathcal{S})\), \(\mathsf{Conc}(a) = \gamma\) and \(\mathsf{Supp}(a) = \Gamma\).
\end{definition}

To accommodate argumentative attacks we suppose to have two functions: a contrariness function \(\overline{\cdot}: \mathcal{L} \rightarrow \wp(\mathcal{L})\) that associates each formula with a set of conflicting formulas and a function \(\widehat{\cdot}: \wp_{\sf fin}(\mathcal{L}) \setminus \emptyset \rightarrow \wp_{\sf fin}(\mathcal{L})\) that associates support sets with sets of formulas in which they can be attacked. 

\begin{remark}
\label{rem:widehat}
Often \(\widehat{\cdot}\) will simply be the identity function, although another option is, e.g., \(\widehat{\Gamma} = \{ \bigwedge \Gamma' \mid \emptyset \neq \Gamma' \subseteq \Gamma\}\). 
\end{remark}


\begin{definition}[$\mathcal{AF}_{\vdash}$]
An \emph{(argumentation) setting} is a triple \(\mathcal{AF}_{\vdash} = (\vdash, \overline{\cdot}, \widehat{\cdot})\). A setting based on \(\mathcal{S} \subseteq \mathcal{L}\) is given by the quadruple \(\mathcal{AF}_{\vdash}(\mathcal{S}) = (\mathcal{S}, \vdash, \overline{\cdot}, \widehat{\cdot})\).
\end{definition}


\begin{example}
\label{xmpl:CL:id}
A simple example of a setting is \(\mathcal{AF}_{\vdash_{\sf CL}}^{\sf pdef} = (\vdash_{\sf CL}, \overline{\cdot}, \mathsf{id})\) where \(\vdash_{\sf CL}\) is the deducability relation of classical propositional logic and \(\overline{\phi}= \{\neg \phi\}\). 
\end{example}

\begin{example}
\label{xmpl:CL:def}
Another example is the setting \(\mathcal{AF}_{\vdash_{\sf CL}}^{\sf def} = (\vdash_{\sf CL}, \overline{\cdot}, \widehat{\cdot})\) where \(\overline{\phi} = \{\neg \phi \}\) and \(\widehat{\Gamma} = \{ \bigwedge \Delta \mid \emptyset \neq \Delta \subseteq \Gamma\}\).
\end{example}

\begin{definition}[Attacks]
Given a setting \(\mathcal{AF}_{\vdash}(\mathcal{S})\), where \(a = (\Gamma,\gamma) \in \mathit{Arg}_{\vdash}(\mathcal{S})\) and \(b = (\Gamma', \gamma') \in \mathit{Arg}_{\vdash}(\mathcal{S})\), \(a\) \emph{attacks} \(b\) (in \(\phi\)) iff there is a \(\phi \in \widehat{\Gamma'}\) for which \(\gamma \in \overline{\phi}\).
\end{definition}
Our attack form is sometimes called premise-attack~\cite{Pra10} or directed undercut~\cite{BesHun14}. In Section~\ref{sec:org98c77f8} we will show that by adjusting \(\overline{\cdot}\) and \(\widehat{\cdot}\) adequately we are able to accommodate many other attack forms defined in the literature.


\begin{definition}[Attack Diagram]
\label{definition:att:dia}
Given a setting \(\mathcal{AF}_{\vdash}(\mathcal{S})\), its \emph{attack diagram} is the directed graph with the set of nodes \(\mathit{Arg}_{\vdash}(\mathcal{S})\) and edges between \(a\) and \(b\) iff \(a\) attacks \(b\). 
\end{definition}

%
%

%
\begin{definition}[Dung Semantics, \cite{Dung95}]
\label{definition:dung}
Where \(\mathcal{AF}_{\vdash}(\mathcal{S})\) is a setting and \(\mathcal{A} \subseteq \mathit{Arg}_{\vdash}(\mathcal{S})\) we define:
\(\mathcal{A}\) is \emph{conflict-free} iff there are no \(a,b \in \mathcal{A}\) such that \(a\) attacks \(b\).
\(\mathcal{A}\) \emph{defends} \(a \in \mathit{Arg}_{\vdash}(\mathcal{S})\) iff for each attacker \(b \in \mathit{Arg}_{\vdash}(\mathcal{S})\) of \(a\) there is a \(c \in \mathcal{A}\) that attacks \(b\).
\(\mathcal{A}\) is \emph{admissible} iff it is conflict-free and it defends every \(a \in \mathcal{A}\).
\(\mathcal{A}\) is \emph{complete} iff it is admissible and it contains every \(a \in \mathit{Arg}_{\vdash}(\mathcal{S})\) it defends.
\(\mathcal{A}\) is \emph{preferred} iff it is $\subseteq$-maximal complete.
\(\mathcal{A}\) is \emph{grounded} iff it is $\subseteq$-minimal complete. $\mathcal{A}$ is \emph{stable} iff it is admissible and for all $a \in \mathit{Arg}_{\vdash}(\mathcal{S}) \setminus \mathcal{A}$ there is a $b \in \mathcal{A}$ that attacks $a$.

We denote the set of all admissible [complete, preferred, stable] sets $\mathcal{A}$ (also called ``extensions'') by \(\mathsf{Adm}(\mathcal{AF}_{\vdash}(\mathcal{S}))\) [\(\mathsf{Cmp}(\mathcal{AF}_{\vdash}(\mathcal{S})), \mathsf{Prf}(\mathcal{AF}_{\vdash}(\mathcal{S})), \mathsf{Stb}(\mathcal{AF}_{\vdash}(\mathcal{S}))\)] and the grounded set by \(\mathsf{Grd}(\mathcal{AF}_{\vdash}(\mathcal{S}))\). 
\end{definition}


\begin{definition}[Consequence Relations]
\label{definition:nc}
Where \(\mathsf{Sem} \in \{ \mathsf{Grd}, \mathsf{Prf}, \mathsf{Stb}\}\), and given a setting \(\mathcal{AF}_{\vdash}\) we define: 
\begin{itemize}
\item \(\mathcal{S} \nc_{\cap\mathsf{Sem}}^{\mathcal{AF}_{\vdash}} \phi\) iff there is an \(a\in\bigcap \mathsf{Sem}(\mathcal{AF}_{\vdash}(\mathcal{S}))\) with \(\mathsf{Conc}(a) = \phi\);
\item \(\mathcal{S} \nc_{\Cap\mathsf{Sem}}^{\mathcal{AF}_{\vdash}} \phi\) iff for all \(\mathcal{A} \in \mathsf{Sem}(\mathcal{AF}_{\vdash}(\mathcal{S}))\) there is an \(a \in \mathcal{A}\) with \(\mathsf{Conc}(a) = \phi\);
\item \(\mathcal{S} \nc_{\cup\mathsf{Sem}}^{\mathcal{AF}_{\vdash}} \phi\) iff for some \(\mathcal{A} \in \mathsf{Sem}(\mathcal{AF}_{\vdash}(\mathcal{S}))\) there is an \(a \in \mathcal{A}\) with \(\mathsf{Conc}(a) = \phi\);
\end{itemize} 
\end{definition}
Where the setting \(\mathcal{AF}_{\vdash}\) is clear from the context we will simply write \(\nc_{\star {\sf Sem}}\) for \(\star\in\{\cap,\Cap,\cup\}\) to avoid clutter.



\section{The Relevance Properties}
\label{sec:orgfeba745}
\label{sec:RelevanceProperties}


  In this section we consider two kinds of relevance. The first, syntactic relevance, is concerned with the information at hand. Known from relevance logics~\cite{AnBel75,DuRes02}, the intuitive idea is that a conclusion should only follow from a set of premises, when the conclusion is relevant. This is implemented by requiring that a formula can only be entailed by a premise set, when the former shares propositional variables with the latter. The second, semantic relevance, is concerned with the addition of information. Known from non-monotonic logic~\cite{KLM90}, the intuitive idea is that the set of consequences from a set of premises should not change if some of these consequences would be added to the premises.

\subsection{Syntactic Relevance}
\label{sec:org0eea835}

A syntactical relevance property that has been proposed in the context of structured argumentation is non-interference~\cite{CaCaDu11}. Let us call two sets of formulas syntactically disjoint if no atom that occurs in a formula in \(\mathcal{S}_1\) also occurs in a formula in \(\mathcal{S}_2\) and vice versa: so \(\mathsf{Atoms}(\mathcal{S}_1) \cap \mathsf{Atoms}(\mathcal{S}_2) = \emptyset\) where \(\mathsf{Atoms}(\mathcal{S})\) is the set of atoms occurring in formulas in \(\mathcal{S}\). In such cases we write: \(\mathcal{S}_1 \mid \mathcal{S}_2\).

\begin{definition}[Non-Interference, \cite{CaCaDu11}]
  \label{def:non-interference}
\(\nc \subseteq \wp(\mathcal{L}) \times \mathcal{L}\) satisfies \emph{Non-Interference} iff for all \(S_1 \cup \{\phi\} \cup \mathcal{S}_2 \subseteq \mathcal{L}\) for which \((\mathcal{S}_1 \cup \{\phi\}) \mid \mathcal{S}_2\) we have:\footnote{A similar property is Basic Relevance \cite[Definition~3.1]{Avr14}.}  $\mathcal{S}_1 \nc \phi \mbox{ iff } \mathcal{S}_1 \cup \mathcal{S}_2 \nc \phi.$
\end{definition}




\begin{definition}[Contamination, \cite{CaCaDu11}]
  Let \(\nc \subseteq \wp(\mathcal{L}) \times \mathcal{L}\) be a consequence relation. A set \(\mathcal{S}\subseteq\mathcal{L}\), such that $\mathsf{Atoms}(\mathcal{S})\subset\mathsf{Atoms}(\mathcal{L})$, is called \emph{contaminating} (with respect to $\nc$), if for any set of formulas $\mathcal{S}'\subseteq\mathcal{L}$ such that \(\mathcal{S} \mid\mathcal{S}'\) and for every $\phi\in\mathcal{L}$, it holds that $\mathcal{S}\:\nc\:\phi$ if and only if $\mathcal{S}\cup\mathcal{S}'\:\nc\:\phi$. 
\end{definition}

Consequence relations that are non-trivial and satisfy Non-Interference also satisfy Crash-Resistance:\footnote{$\nc$ is non-trivial if there are always two sets of formulas with the same atoms but different conclusions (see \cite{CaCaDu11}).}

\begin{definition}[Crash-Resistance, \cite{CaCaDu11}]
 A consequence relation \(\nc \subseteq \wp(\mathcal{L}) \times \mathcal{L}\) satisfies \emph{Crash-Resistance} iff there is no set $\mathcal{S}\subseteq\mathcal{L}$ that is contaminating with respect to $\nc$.
\end{definition}

Given a setting \(\mathcal{AF}_{\vdash}\), a natural question is whether Non-Interference is a property that gets inherited on the level of non-monotonic inference \(\nc_{\sf sem}\) from \(\vdash\): we will show below that in case \(\vdash\) satisfies Non-Interference so does \(\nc_{\sf sem}\). In fact, the following less requiring criterion is sufficient:

\begin{definition}[Pre-Relevance]
\({\vdash} \subseteq \wp(\mathcal{L}) \times \mathcal{L}\) satisfies \emph{Pre-Relevance} iff for all \(\mathcal{S}_1 \cup \{\phi\} \cup \mathcal{S}_2 \subseteq \mathcal{L}\) for which \(\mathcal{S}_1 \cup \{\phi\} \mid \mathcal{S}_2\): if \(\mathcal{S}_1 \cup \mathcal{S}_2 \vdash \phi\) then there is a \(\mathcal{S}_1' \subseteq \mathcal{S}_1\) such that \(\mathcal{S}_1' \vdash \phi\).
\end{definition}

When considering attacks we need to extend the notion of Pre-Relevance by taking into account $\widehat{\cdot}$ and $\overline{\cdot}$. We first define:

\begin{definition}[Prime Settings]
\label{def:prime:setting}
  A setting \((\vdash, \overline{\cdot}, \widehat{\cdot})\) is \emph{prime} iff for all sets of at\-oms \(\mathcal{A}_1\) and \(\mathcal{A}_2\) in \(\mathcal{L}\) for which \(\mathcal{A}_1 \mid \mathcal{A}_2\), for all \(\mathcal{S}_1, \mathcal{T}_1, \mathcal{S}_2, \mathcal{T}_2 \in \wp_{\sf fin}(\mathcal{L})\) for which \(\mathsf{Atoms}(\mathcal{S}_1), \allowbreak \mathsf{Atoms}(\mathcal{T}_1) \subseteq \mathcal{A}_1\) and \(\mathsf{Atoms}(\mathcal{S}_2), \mathsf{Atoms}(\mathcal{T}_2) \subseteq \mathcal{A}_2\), and for all \(\phi\) and \(\psi\) such that \(\psi \in \overline{\phi}\) and \(\phi \in \widehat{\mathcal{T}_1 \cup \mathcal{T}_2}\), we have: \\
  if \(\mathcal{S}_1 \cup \mathcal{S}_2 \vdash \psi\) then there are \(i \in \{1,2\}\), \(\mathcal{S}_i' \subseteq \mathcal{S}_i\), \(\phi_i \in \widehat{\mathcal{T}_i}\) and \(\psi_i \in \overline{\phi_i}\) for which \(\mathcal{S}_i' \vdash \psi_i\).
\end{definition}

\begin{definition}[Pre-Relevant Settings]
  \label{def:pre-rel-AF}
  A setting \(\mathcal{AF} = ({\vdash,} \overline{\cdot}, \widehat{\cdot})\) is \emph{Pre-Relevant} iff (i) \(\vdash\) is Pre-Relevant, (ii) \(\mathcal{AF}\) is prime, and (iii) \(\widehat{\cdot}\) is \(\subseteq\)-monotonic (i.e., for all $\Delta, \Delta' \in \wp_{\mathsf{fin}}(\mathcal{L}), \widehat{\Delta} \subseteq \widehat{\Delta \cup \Delta'}$).
\end{definition}

\begin{example}
  Note that \(\widehat{\cdot}: \Delta \mapsto \Delta\) (see Example~\ref{xmpl:CL:id}) and \(\widehat{\cdot}: \Delta \mapsto \{\bigwedge \Delta' \mid \emptyset \subset \Delta' \subseteq \Delta\}\) (see Example~\ref{xmpl:CL:def}) are both \(\subseteq\)-monotonic.
\end{example}

\begin{fact}
  Where \(\widehat{\cdot} = \mathsf{id}(\cdot)\) (see Example~\ref{xmpl:CL:id}) and \(\overline{\gamma} = \{\neg \gamma\}\), the Pre-Relevance of \((\vdash, \overline{\cdot}, \widehat{\cdot})\) follows from the Pre-Relevance of \(\vdash\).
\end{fact}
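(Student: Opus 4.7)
The plan is to unpack the definition of a Pre-Relevant setting and verify conditions (i)--(iii) separately. Condition (i) is given as a hypothesis, and condition (iii) --- the $\subseteq$-monotonicity of $\widehat{\cdot}$ --- is immediate once we note that $\widehat{\Delta} = \Delta \subseteq \Delta \cup \Delta' = \widehat{\Delta \cup \Delta'}$. So the only real content is verifying primeness (condition (ii)).

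To verify primeness, I would fix arbitrary disjoint atom sets $\mathcal{A}_1, \mathcal{A}_2$ (with $\mathcal{A}_1 \mid \mathcal{A}_2$), sets $\mathcal{S}_1, \mathcal{T}_1, \mathcal{S}_2, \mathcal{T}_2$ with atoms confined to the respective $\mathcal{A}_i$, and formulas $\phi, \psi$ such that $\psi \in \overline{\phi}$ and $\phi \in \widehat{\mathcal{T}_1 \cup \mathcal{T}_2}$, together with the assumption $\mathcal{S}_1 \cup \mathcal{S}_2 \vdash \psi$. Since $\widehat{\cdot} = \mathsf{id}$, we have $\phi \in \mathcal{T}_1 \cup \mathcal{T}_2$, so $\phi \in \mathcal{T}_j$ for some $j \in \{1,2\}$; hence $\mathsf{Atoms}(\phi) \subseteq \mathcal{A}_j$. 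Since $\overline{\phi} = \{\neg \phi\}$, necessarily $\psi = \neg \phi$, and therefore $\mathsf{Atoms}(\psi) \subseteq \mathcal{A}_j$ as well.

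The key step is then a direct appeal to the Pre-Relevance of $\vdash$: with the roles in the definition instantiated by taking $\mathcal{S}_j$ to play the role of ``$\mathcal{S}_1$'', $\mathcal{S}_{3-j}$ the role of ``$\mathcal{S}_2$'', and $\psi$ the role of the conclusion, the syntactic disjointness condition $\mathcal{S}_j \cup \{\psi\} \mid \mathcal{S}_{3-j}$ holds by the atom confinement just noted together with $\mathcal{A}_1 \mid \mathcal{A}_2$. Pre-Relevance of $\vdash$ then yields $\mathcal{S}_j' \subseteq \mathcal{S}_j$ with $\mathcal{S}_j' \vdash \psi$. Taking $\phi_j := \phi \in \mathcal{T}_j = \widehat{\mathcal{T}_j}$ and $\psi_j := \psi \in \overline{\phi_j}$ produces exactly the witness required by the primeness definition.

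I do not expect any serious obstacle here: the statement is essentially a bookkeeping exercise that shows the atom-tracking already encoded in Pre-Relevance of $\vdash$ transfers to attack configurations once $\widehat{\cdot}$ is the identity (so ``attack points'' don't mix atoms across sides) and $\overline{\cdot}$ is classical negation (so the contrary shares atoms with the attacked formula). The only subtlety worth flagging in writing the proof is making the choice of $j$ explicit and observing why $\mathsf{Atoms}(\psi) \subseteq \mathcal{A}_j$ --- this is where the specific shape of $\overline{\cdot}$ is used.
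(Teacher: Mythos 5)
Your proof is correct and follows essentially the same route as the paper's: (i) and (iii) are dispatched as trivial, and for primeness you pick the side $j$ containing $\phi$, use $\overline{\phi}=\{\neg\phi\}$ to confine $\mathsf{Atoms}(\psi)$ to $\mathcal{A}_j$, and invoke Pre-Relevance of $\vdash$ to shrink the support to $\mathcal{S}_j' \subseteq \mathcal{S}_j$. Your write-up is merely more explicit about the instantiation of the Pre-Relevance definition and the choice of witnesses $\phi_j,\psi_j$, which the paper leaves implicit.
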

\begin{proof}
  Items (i) and (iii) are trivial. For Item (ii) suppose that \(\mathcal{S}_1 \cup \mathcal{S}_2 \vdash \psi\), where \(\psi = \neg \phi\) and \(\phi \in \widehat{\mathcal{T}_1 \cup \mathcal{T}_2} = \mathcal{T}_1 \cup \mathcal{T}_2\) and where \(\mathcal{S}_1, \mathcal{S}_2, \mathcal{T}_1, \mathcal{T}_2\) are as in Definition~\ref{def:prime:setting}. Thus, there is an \(i \in \{1,2\}\) s.t.\ \(\phi \in \mathcal{T}_i\). Thus, \(\mathsf{Atoms}(\psi) \subseteq \mathcal{A}_i\). By the Pre-Relevance of \(\vdash\), there is an \(\mathcal{S}_i' \subseteq \mathcal{S}_i\) for which \(\mathcal{S}_i' \vdash \psi\).
\end{proof}

\begin{fact}
  Where \(\widehat{\Delta} = \{\bigwedge \Delta' \mid \emptyset \subset \Delta' \subseteq \Delta\}\) (see Example~\ref{xmpl:CL:def}), \(\overline{\gamma} = \{\neg \gamma\}\) and \(\vdash\) is contrapositable (i.e., \(\mathcal{S} \vdash \neg \bigwedge (\Delta \cup \Delta')\) implies \(\mathcal{S} \cup \Delta' \vdash \neg \bigwedge \Delta\)), the Pre-Relevance of \((\vdash, \overline{\cdot}, \widehat{\cdot})\) follows from the Pre-Relevance of \(\vdash\).
\end{fact}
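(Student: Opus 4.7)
The plan is to verify the three clauses of Definition~\ref{def:pre-rel-AF}. Clause (i) is given by hypothesis, and clause (iii) is immediate from the shape of $\widehat{\cdot}$: enlarging $\Delta$ to $\Delta \cup \Delta'$ can only add further nonempty subsets, hence further conjunctions, so $\widehat{\Delta} \subseteq \widehat{\Delta \cup \Delta'}$. All the real work therefore concerns clause (ii), i.e., showing that the setting is prime.

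Fix atom sets $\mathcal{A}_1 \mid \mathcal{A}_2$, sets $\mathcal{S}_i,\mathcal{T}_i$ whose atoms lie in $\mathcal{A}_i$, and formulas $\phi,\psi$ with $\psi \in \overline{\phi}$ and $\phi \in \widehat{\mathcal{T}_1 \cup \mathcal{T}_2}$. By the definition of $\overline{\cdot}$ and $\widehat{\cdot}$, we have $\psi = \neg\phi$ and $\phi = \bigwedge \Delta$ for some nonempty $\Delta \subseteq \mathcal{T}_1 \cup \mathcal{T}_2$. Write $\Delta_i := \Delta \cap \mathcal{T}_i$ (assigning any atom-free formula in the intersection to either side arbitrarily), so $\Delta = \Delta_1 \cup \Delta_2$ with $\mathsf{Atoms}(\Delta_i) \subseteq \mathcal{A}_i$. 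Assume $\mathcal{S}_1 \cup \mathcal{S}_2 \vdash \psi$.

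If one of the $\Delta_i$ is empty, say $\Delta_2 = \emptyset$, then $\Delta \subseteq \mathcal{T}_1$ and $\mathsf{Atoms}(\psi) \subseteq \mathcal{A}_1$, so the Pre-Relevance of $\vdash$ yields some $\mathcal{S}_1' \subseteq \mathcal{S}_1$ with $\mathcal{S}_1' \vdash \psi$; take $\phi_1 := \phi \in \widehat{\mathcal{T}_1}$ and $\psi_1 := \psi$. Otherwise both $\Delta_1,\Delta_2$ are nonempty. Here contraposition does the decisive work: from $\mathcal{S}_1 \cup \mathcal{S}_2 \vdash \neg\bigwedge(\Delta_1 \cup \Delta_2)$ we infer $\mathcal{S}_1 \cup (\mathcal{S}_2 \cup \Delta_2) \vdash \neg\bigwedge \Delta_1$. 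Now the atoms of $\mathcal{S}_1 \cup \{\neg\bigwedge\Delta_1\}$ lie in $\mathcal{A}_1$, while those of $\mathcal{S}_2 \cup \Delta_2$ lie in $\mathcal{A}_2$, so Pre-Relevance of $\vdash$ again furnishes an $\mathcal{S}_1' \subseteq \mathcal{S}_1$ with $\mathcal{S}_1' \vdash \neg\bigwedge\Delta_1$. Since $\Delta_1$ is a nonempty subset of $\mathcal{T}_1$, $\phi_1 := \bigwedge \Delta_1 \in \widehat{\mathcal{T}_1}$ and $\psi_1 := \neg\bigwedge\Delta_1 \in \overline{\phi_1}$ witness primeness.

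The only delicate point is the mixed case where $\Delta$ straddles both $\mathcal{T}_1$ and $\mathcal{T}_2$: a direct appeal to Pre-Relevance of $\vdash$ fails because $\psi = \neg\bigwedge\Delta$ contains atoms from both sides. The contraposition hypothesis is exactly what allows one to ``push'' the offending $\Delta_2$-conjuncts into the premise set, thereby restoring a syntactic partition to which Pre-Relevance applies. After that move the argument is routine book-keeping.
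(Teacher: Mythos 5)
Your proof is correct, and it follows what is evidently the intended argument: the paper states this Fact without proof, but the contraposition hypothesis is formulated in exactly the shape your mixed case requires, and your treatment of the unmixed case mirrors the paper's proof of the preceding Fact for $\widehat{\cdot}=\mathsf{id}$. Nothing is missing.
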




In Section~\ref{sec:org31d680c} we will show that:
\begin{restatable}{theorem}{noninter}
\label{thm:non-inter}
If \(\mathcal{AF}_{\vdash}\) satisfies Pre-Relevance then $\nc_{\cap\sem}^{\calAF_\vdash}$, \(\nc_{\Cap{\sf sem}}^{\mathcal{AF}_{\vdash}}\)and \(\nc_{\cup{\sf sem}}^{\calAF_{\vdash}}\) satisfy Non-Inter\-ference for each \(\sem\in\{\grd,\cmp,\prf\}\).
\end{restatable}

\begin{example}
\label{xmpl:RM}
We take the setting \(\mathcal{AF}_{\vdash_{\sf RM}} = (\vdash_{\sf RM}, \overline{\cdot}, \mathsf{id})\), where \(\vdash_{\sf RM}\) is the consequence relation of the semi-relevance logic \(\mathsf{RM}\) and $\overline{\cdot}: \phi \mapsto \{\neg \phi\}$. \(\vdash_{\sf RM}\) satisfies Pre-Relevance (see \cite[Proposition~6.5]{Avr16}) and thus \(\mathcal{AF}_{\vdash_{\sf RM}}\) satisfies Non-Inference and Crash-Resistance. Similar for other relevance logics.
\end{example}

\begin{example}
\label{xmpl:CL:top}
Although \(\vdash_{\sf CL}\) does not satisfy Pre-Relevance, \(\vdash_{\sf CL}^{\top}\) does, where \(\vdash_{\sf CL}^{\top}\) is the restriction of \(\vdash_{\sf CL}\) to pairs \((\Gamma,\gamma)\) for which \(\nvdash_{\sf CL} \neg \bigwedge \Gamma\). Hence, \(\mathcal{AF}_{\vdash_{\sf CL}^{\top}}  = \bigl(\vdash_{\sf CL}^{\top}, \overline{\cdot}, \mathsf{id} \bigr)\) where $\overline{\cdot}: \phi \mapsto \{\neg \phi\}$ satisfies Non-Interference. In~\cite{WuPod14} such a restriction is applied in the context of ASPIC.
\end{example}

\begin{example}
\label{xmpl:mcs}
Recently paraconsistent logics based on maximal consistent subsets~\cite{GroPra16} have been used in the context of structured argumentation. Let \(\Gamma \vdash_{\sf mcs}^{\Cap} \phi\) [\(\Gamma \vdash_{\sf mcs}^{\cup} \phi\)] iff for all [some] maximal consistent subsets \(\Gamma'\) of \(\Gamma\), \(\Gamma' \vdash_{\sf CL} \phi\). (\(\Gamma' \subseteq \Gamma\) is a maximal consistent subset of \(\Gamma\) if it is consistent and there are no consistent \(\Gamma'' \subseteq \Gamma\) such that \(\Gamma' \subset \Gamma''\).) Such consequence relations satisfy Pre-Relevance and thus, argumentative settings based on them satisfy Non-Interference.
\end{example}

A refinement of Theorem \ref{thm:non-inter} is given in Corollary \ref{cor:non-inter} below.

\begin{definition}
Given a setting \((\vdash, \overline{\cdot}, \widehat{\cdot})\) let \(\vdash^{\emptyset}\) be the restriction of \(\vdash\) to pairs \((\Gamma,\gamma)\) for which there is no \((\emptyset, \delta) \in {\vdash}\) such that \(\delta \in \overline{\psi}\) for some \(\psi \in \widehat{\Gamma}\).
\end{definition}

Since arguments with empty supports have no attackers we have:
\begin{lemma}
Where \(\sem\in\{\grd,\cmp,\prf\}\) and \(\mathcal{S} \subseteq \mathcal{L}\),  \[\textstyle \mathcal{S}\: \nc_{\star{\sf sem}}^{(\vdash, \overline{\cdot}, \widehat{\cdot})}\: \phi \mbox{ \ iff \ } \mathcal{S}\: \nc_{\star{\sf sem}}^{(\vdash^{\emptyset}, \overline{\cdot}, \widehat{\cdot})}\: \phi \qquad\qquad \text{ where }\star\in\{\cap,\Cap,\cup\}.\]
\end{lemma}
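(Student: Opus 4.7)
The plan is to show that the complete extensions of $\mathcal{AF}_\vdash(\mathcal{S})$ and $\mathcal{AF}_{\vdash^\emptyset}(\mathcal{S})$ coincide as collections of sets of arguments. Once this is established, the statement for both $\mathsf{Grd}$ (the $\subseteq$-minimal complete extension) and $\mathsf{Prf}$ (the $\subseteq$-maximal complete extensions) follows immediately, and hence so does the coincidence of skeptical consequence for these semantics.

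Write $F$ and $F^\emptyset$ for the two frameworks and call an argument $a = (\Gamma,\gamma) \in \mathit{Arg}_{\vdash}(\mathcal{S}) \setminus \mathit{Arg}_{\vdash^\emptyset}(\mathcal{S})$ \emph{bad}. By the definition of $\vdash^\emptyset$, each bad $a$ is attacked in $F$ by some argument of the form $(\emptyset,\delta)$. Two observations drive the proof. First, any argument with empty support has no attackers in either framework (attack requires a $\psi \in \widehat{\Gamma'}$ in the support of the target), so such an argument is vacuously defended and therefore lies in every complete extension of both $F$ and $F^\emptyset$. Second, no bad argument can belong to a complete extension of $F$ (it conflicts with the $(\emptyset,\delta)$ that is forced to be present), and the defense of any argument against a bad attacker is automatic in $F$, via the very same $(\emptyset,\delta)$.

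I would then verify $\mathsf{Cmp}(F) = \mathsf{Cmp}(F^\emptyset)$. For the forward inclusion: given $\mathcal{E} \in \mathsf{Cmp}(F)$, the two observations above yield $\mathcal{E} \subseteq \mathit{Arg}_{\vdash^\emptyset}(\mathcal{S})$; conflict-freeness and defense of elements of $\mathcal{E}$ against attackers in $F^\emptyset$ (a subset of those in $F$) are inherited; for the completeness clause in $F^\emptyset$, if $a \in \mathit{Arg}_{\vdash^\emptyset}(\mathcal{S})$ is defended by $\mathcal{E}$ in $F^\emptyset$, then it is also defended in $F$ (the additional, bad attackers are handled automatically), so $a \in \mathcal{E}$. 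For the backward inclusion: given $\mathcal{E}' \in \mathsf{Cmp}(F^\emptyset)$, conflict-freeness in $F$ is trivial, and defense of each $a \in \mathcal{E}'$ in $F$ again follows from the handling of bad attackers; for completeness in $F$, if $a \in \mathit{Arg}_\vdash(\mathcal{S})$ is defended by $\mathcal{E}'$ in $F$, then $a$ cannot be bad (otherwise the empty-support attacker of $a$ already lies in $\mathcal{E}'$ and admits no counterattack), so $a \in \mathit{Arg}_{\vdash^\emptyset}(\mathcal{S})$ and is defended in $F^\emptyset$, hence in $\mathcal{E}'$.

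The only mildly delicate step is the completeness clause in each direction, where one must carefully separate attackers in $F$ from those in $F^\emptyset$ and invoke the two observations to bridge the two. The rest is bookkeeping, and the lifting from $\mathsf{Cmp}$ to $\mathsf{Grd}$ and $\mathsf{Prf}$, and thence to skeptical consequence, is immediate.
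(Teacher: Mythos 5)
Your argument is correct and rests on exactly the observation the paper uses to justify this lemma (it is stated there without a written-out proof, prefaced only by ``Since arguments with empty supports have no attackers''): empty-support arguments are unattacked, hence belong to every complete extension, which both excludes the ``bad'' arguments from complete extensions of $\mathcal{AF}_{\vdash}(\mathcal{S})$ and automatically defends against them. Your verification that $\mathsf{Cmp}$ coincides for the two frameworks, and the lifting to $\mathsf{Grd}$, $\mathsf{Prf}$ and skeptical consequence, is a sound elaboration of that same idea.
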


\begin{corollary}
  \label{cor:non-inter}
If \(\mathcal{AF}_{\vdash^{\emptyset}}\) satisfies Pre-Relevance then $\nc_{\cap\sem}^{\calAF_\vdash}$, \(\nc_{\Cap{\sf sem}}^{\mathcal{AF}_{\vdash}}\) and \(\nc_{\cup{\sf sem}}^{\AF_\vdash}\) sat\-is\-fy Non-Interference for each \(\sem\in\{\grd,\cmp,\prf\}\).
\end{corollary}

We illustrate the latter point with an example. 

\begin{example}
Also the setting \(\mathcal{AF}_{\vdash_{\sf CL}}^{\sf def}\) in Example~\ref{xmpl:CL:def} satisfies Non-Interference. Note for this that \({\vdash_{\sf CL}^{\emptyset} }= {\vdash_{\sf CL}^{\top}}\) (where the latter is defined as in Example~\ref{xmpl:CL:top}) in the context of \(\mathcal{AF}_{\vdash_{\sf CL}}^{\sf def}\).
\end{example}

\begin{remark}
  Odd cycles of arguments (e.g., for arguments $a$, $b$, and $c$, $a$ attacks $b$ attacks $c$ attacks $a$), cause the absence of stable semantics. In such a case the consequence relation for stable semantics would violate non-interference. Thus, the above results do not hold for stable semantics. Therefore, we will not always consider stable semantics in the remainder of the paper.  
\end{remark}

In the following sections we will relate these results to  systems of structured argumentation from the literature.

\subsection{Semantic Relevance}
\label{sec:orgf079c13}


We now turn to properties concerned with information that should not have defeating power over previously accepted arguments. For this, we study a criterion known from non-monotonic logic, namely Cumulativity. Intuitively, Cumulativity states that adding derivable formulas to the premise set, does not change the set of consequences.


\begin{definition}
  \label{def:consequence:plusphi}
  Given \({\vdash} \subseteq \wp(\mathcal{L}) \times \mathcal{L}\) and \(\phi \in \mathcal{L}\), let \({\vdash^{+\phi}}\) be the transitive closure of \({\vdash} \cup \{(\emptyset, \phi)\}\). Given a setting \(\mathcal{AF}_{\vdash}\) and a semantics \(\sem\in\{\grd,\cmp,\prf\}\) let \(\nc_{\star\sf sem}^{+\phi}\) be an abbreviation of \(\nc_{\star\sf sem}^{\mathcal{AF}_{\vdash^{+\phi}}}\) for $\star\in\{\cap,\Cap,\cup\}$ and $\mathcal{AF}_{\vdash}^{+\phi}$ for $\mathcal{AF}_{\vdash^{+\phi}}$.
\end{definition}


On the level of consequence relations Cumulativity is the following property:
\begin{definition}[Cumulativity]
A setting \(\mathcal{AF}_{\vdash}\) satisfies \emph{Cumulativity} for \(\sem\in\{\grd,\allowbreak\cmp,\allowbreak\prf,\allowbreak\stb\}\) and $\star\in\{\cap,\Cap,\cup\}$, iff, for all \(\mathcal{S} \cup \{\phi, \psi\} \subseteq \mathcal{L}\) such that \(\mathcal{S} \nc_{\sf sem}^{\star} \phi\) we have: 
$\mathcal{S} \nc_{\star\sf sem}^{+\phi} \psi \mbox{ iff } \mathcal{S} \nc_{\sf sem}^{\star} \psi$.
\end{definition}

\begin{definition}[Monotonicity]
  \label{def:monotonicity}
  A setting \(\mathcal{AF}_\vdash\) satisfies \emph{Monotonicity} for \(\sem\in\{\grd,\allowbreak\cmp,\allowbreak\prf,\allowbreak\stb\}\) and \(\star\in\{\cap,\Cap,\cup\}\), iff, for all \(\mathcal{S} \cup \{\phi, \psi\} \subseteq \mathcal{L}\), \(\mathcal{S}\:\nc_{{\sf sem}}^\star\:\psi\) implies \(\mathcal{S}\cup\{\phi\}\:\nc_{{\sf sem}}^\star\:\psi\). 
\end{definition}

On the level of Dung-extensions, Cumulativity is:
\begin{definition}[Extensional Cumulativity]
  A setting \(\mathcal{AF}_{\vdash}\) satisfies \emph{Extensional Cumulativity} for \(\sem\in\{\grd,\cmp,\prf\}\) if and only if for all $\mathcal{S}\cup\{\phi\}\subseteq\mathcal{L}$ such that $\mathcal{S}\:\nc_{\mathsf{sem}}\:\phi$ we have that: \(\exts_\sem(\mathcal{AF}_{\vdash}(\mathcal{S})) = \left\{ \mathcal{E} \cap \mathit{Arg}_{\vdash}(\mathcal{S}) \mid \mathcal{E} \in \exts_\sem(\mathcal{AF}_{\vdash^{+\phi}}(\calS)) \right\}.\)
\end{definition}


We will show, in Section~\ref{sec:org67d6cb1}, that a  setting \(\mathcal{AF}_{\vdash}\) satisfies Cumulativity for grounded semantics if \(\mathcal{AF}_{\vdash}\) is pointed. In the definition of a pointed setting, we will also introduce the notion of Cut. Though known from sequent calculi~\cite{Gen34} in which it represents transitivity, we consider Cut here with respect to $\vdash^{+\phi}$, from Definition~\ref{def:consequence:plusphi}: 

\begin{definition}[Pointed Settings]
\((\vdash, \overline{\cdot}, \widehat{\cdot})\) is \emph{pointed} iff 
\begin{enumerate}
\item for all \(\Gamma, \Delta \in \wp_{\sf fin}(\mathcal{L})\), \(\widehat{\Gamma \cup \Delta} = \widehat{\Gamma} \cup \widehat{\Delta}\) (in this case we say that \(\widehat{\cdot}\) is pointed), and
\item \(\vdash\) satisfies Cut w.r.t.\ $\vdash^{+\phi}$ for any $\phi \in \mathcal{L}$, i.e., for every \(\Gamma \cup \{\gamma\} \subseteq \mathcal{L}\), \(\Gamma \cup \Delta \vdash \gamma\) if \(\Gamma \vdash \phi\) and \(\Delta \vdash^{+\phi} \gamma\).
\end{enumerate}
\end{definition}


\begin{theorem}
  \label{thm:cum}
  Where \(\mathcal{AF}_{\vdash}\) is pointed, \(\mathcal{AF}_{\vdash}\) satisfies Cumulativity and Extensional Cumulativity for grounded semantics.
\end{theorem}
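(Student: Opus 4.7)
My plan is to reduce both claims to the single set-theoretic identity $\mathcal{G} = \mathcal{G}^+ \cap \mathit{Arg}_\vdash(\mathcal{S})$, where $\mathcal{G}$ and $\mathcal{G}^+$ denote the (unique) grounded extensions of $\mathcal{AF}_\vdash(\mathcal{S})$ and $\mathcal{AF}_{\vdash^{+\phi}}(\mathcal{S})$ respectively. Since grounded semantics is single-valued, this identity is precisely Extensional Cumulativity, and Cumulativity then drops out at the level of conclusions: the forward direction uses $\mathcal{G} \subseteq \mathcal{G}^+$, and the backward direction uses the lift construction below to turn any $c \in \mathcal{G}^+$ with conclusion $\psi$ into an element of $\mathcal{G}$ with the same conclusion. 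Throughout I fix an $a_\phi = (\Gamma_0, \phi) \in \mathcal{G}$, which exists by $\mathcal{S} \nc_{\mathsf{Grd}} \phi$.

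The central device is a lift map $\mathsf{Lift}\colon \mathit{Arg}_{\vdash^{+\phi}}(\mathcal{S}) \to \mathit{Arg}_\vdash(\mathcal{S})$ sending $(\Delta, \delta)$ to $(\Delta \cup \Gamma_0, \delta)$; well-definedness is precisely what the Cut clause of pointedness provides ($\Gamma_0 \vdash \phi$ and $\Delta \vdash^{+\phi} \delta$ yield $\Gamma_0 \cup \Delta \vdash \delta$). Two observations do the heavy lifting. (L1) Since $c$ and $\mathsf{Lift}(c)$ share a conclusion, $\mathsf{Lift}(c)$ attacks $b$ iff $c$ does. (L2) Using that $\widehat{\cdot}$ is pointed, $\widehat{\mathsf{Supp}(\mathsf{Lift}(b))} = \widehat{\mathsf{Supp}(b)} \cup \widehat{\Gamma_0}$, so $d$ attacks $\mathsf{Lift}(b)$ iff $d$ attacks $b$ or $d$ attacks $a_\phi$.

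With these in hand I would prove $\mathcal{G} \subseteq \mathcal{G}^+$ by induction on the characteristic-function iterates $F^i(\emptyset)$ and $(F^+)^i(\emptyset)$: an attacker $b$ of $a \in F^{i+1}(\emptyset)$ in the extended framework is translated to $\mathsf{Lift}(b) \in \mathit{Arg}_\vdash(\mathcal{S})$, the inductive hypothesis produces a defender $c \in F^i(\emptyset) \subseteq \mathcal{G}$ against $\mathsf{Lift}(b)$, and L2 together with conflict-freeness of $\mathcal{G}$ (which contains $a_\phi$) rules out the ``$c$ attacks $a_\phi$'' branch, leaving that $c$ attacks $b$ itself. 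For the reverse inclusion I would show that $\mathcal{G}^+ \cap \mathit{Arg}_\vdash(\mathcal{S})$ is complete in $\mathcal{AF}_\vdash(\mathcal{S})$ and appeal to the minimality of $\mathcal{G}$. The key sublemma here is closure of $\mathcal{G}^+$ under $\mathsf{Lift}$: since $a_\phi \in \mathcal{G} \subseteq \mathcal{G}^+$ is already available from the first inclusion, L2 shows that every attacker of $\mathsf{Lift}(c)$ is defended by $\mathcal{G}^+$, so completeness of $\mathcal{G}^+$ places $\mathsf{Lift}(c)$ inside it; this lets me exhibit defenders of members of $\mathcal{G}^+ \cap \mathit{Arg}_\vdash(\mathcal{S})$ that again sit in $\mathit{Arg}_\vdash(\mathcal{S})$, and to lift attackers of defended arguments back into the extended framework to invoke completeness of $\mathcal{G}^+$.

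The main obstacle is the recurring L2 case analysis in which the ``$d$ attacks $a_\phi$'' branch threatens to yield a valid attack with no matching defender at the $\vdash$-level; in each use it is blocked by conflict-freeness of whichever grounded extension contains $a_\phi$, which is why proving $\mathcal{G} \subseteq \mathcal{G}^+$ first, so that $a_\phi \in \mathcal{G}^+$ is available in the second half, is the right order of attack.
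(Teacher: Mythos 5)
Your overall architecture (fix $a_\phi=(\Gamma_0,\phi)$ in the grounded extension, lift arguments of $\mathcal{AF}_{\vdash^{+\phi}}(\mathcal{S})$ into $\mathcal{AF}_{\vdash}(\mathcal{S})$ via Cut, and use pointedness of $\widehat{\cdot}$ to split an attack on a lifted argument into an attack on the original or an attack on $a_\phi$) is exactly the paper's, and your stage-wise induction for $\mathcal{G}\subseteq\mathcal{G}^+$ is sound. But there is a genuine gap in the reverse inclusion $\mathcal{G}^+\cap\mathit{Arg}_{\vdash}(\mathcal{S})\subseteq\mathcal{G}$. Showing that $\mathcal{G}^+\cap\mathit{Arg}_{\vdash}(\mathcal{S})$ is a complete extension of $\mathcal{AF}_{\vdash}(\mathcal{S})$ and then appealing to the $\subseteq$-minimality of $\mathcal{G}$ only yields $\mathcal{G}\subseteq\mathcal{G}^+\cap\mathit{Arg}_{\vdash}(\mathcal{S})$ --- an inclusion you already have from $\mathcal{G}\subseteq\mathcal{G}^+$ --- and a complete extension containing the grounded one need not equal it. The same gap infects the backward direction of Cumulativity: your Lift-closure sublemma places $\mathsf{Lift}(c)$ in $\mathcal{G}^+\cap\mathit{Arg}_{\vdash}(\mathcal{S})$, but to conclude that a $\psi$-concluding argument of $\mathcal{G}^+$ gives one in $\mathcal{G}$ you need membership in $\mathcal{G}$ itself, which is exactly the unproved inclusion.

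To close the gap you need a second induction, this time on the approximants of $\mathcal{G}^+$ (this is what items 3 and 4 of the paper's Theorem~\ref{thm:grd:cum} do): show simultaneously that every $a\in\mathsf{Grd}_\alpha(\mathcal{AF}_{\vdash^{+\phi}}(\mathcal{S}))\cap\mathit{Arg}_{\vdash}(\mathcal{S})$ lies in $\mathcal{G}$ and that for every $a\in\mathsf{Grd}_\alpha(\mathcal{AF}_{\vdash^{+\phi}}(\mathcal{S}))\setminus\mathit{Arg}_{\vdash}(\mathcal{S})$ the lift $\mathsf{Lift}(a)$ lies in $\mathcal{G}$; your observations L1 and L2 are precisely the tools needed in that inductive step. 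Alternatively, and closer in spirit to your own style of argument, verify that $E^+=\{a\in\mathit{Arg}_{\vdash^{+\phi}}(\mathcal{S})\mid\mathsf{Lift}(a)\in\mathcal{G}\}$ is complete in $\mathcal{AF}_{\vdash^{+\phi}}(\mathcal{S})$ and appeal to the minimality of $\mathcal{G}^+$ (not of $\mathcal{G}$) to obtain $\mathcal{G}^+\subseteq E^+$; combined with closure of complete extensions under $\preceq$ (the paper's Fact~\ref{fact:att:cmp}) this delivers the missing inclusion and, with it, both Extensional Cumulativity and the backward half of Cumulativity.
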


\begin{example}
Any setting \((\vdash, \overline{\cdot}, \mathsf{id})\) is pointed iff \(\vdash\) satisfies Cut. For instance, each of the consequence relations in Examples \ref{xmpl:CL:id} and \ref{xmpl:RM}  satisfies Cut and thus the corresponding settings are pointed and therefore satisfy Cumulativity.
\end{example}

\begin{remark}
  In Section~\ref{sec:org67d6cb1} we show, after the proof, that Theorem~\ref{thm:cum} does not hold for preferred semantics, nor does it hold when $\vdash$ does not satisfy Cut. 
\end{remark}

If we restrict $\vdash$ to consistent sets on the left side, denoted by $\vdash_{\mathsf{con}}$ (see Definition~\ref{df:consistent:AF} below) and if $\vdash$ satisfies Cut and Contraposition (see Definition~\ref{df:contrapositable-setting} below), then the setting $\mathcal{AF}_{\mathsf{con}} =  (\vdash_{\mathsf{con}},$ $\overline{\cdot}, \mathsf{id})$ is cumulative. In more detail:

\begin{definition}
\label{df:contrapositable-setting}
  \((\vdash, \overline{\cdot})\) is \emph{contrapositable} iff for all \(\Theta \in \wp_{\mathsf{fin}}(\mathcal{L})\), if \(\Theta \vdash \gamma'\) where \(\gamma' \in \overline{\gamma}\) then for all \(\sigma \in \Theta\), \((\Theta \cup \{\gamma\}) \setminus \{\sigma\} \vdash \sigma'\) for some \(\sigma' \in \overline{\sigma}\). By extension we call \(\mathcal{AF}_{\vdash} = \langle \vdash, \overline{\cdot}, \widehat{\cdot} \rangle\) contrapositable if \((\vdash, \overline{\cdot})\) is contrapositable.
\end{definition}

\begin{definition}
  \label{df:consistent:AF}  
  Where \(\mathcal{AF}_{\vdash} = \langle \vdash, \overline{\cdot}, \mathsf{id} \rangle\), a set \(\Theta \subseteq \mathcal{S}\) is \emph{\(\mathcal{AF}_{\vdash}(\mathcal{S})\)-inconsistent} iff there is a \(\Theta' \subseteq \Theta\) and a \(\gamma \in \Theta'\) for which  \(\Theta \setminus \{\gamma\} \vdash \gamma'\) where \(\gamma' \in \overline{\gamma}\). \(\Theta\) is \emph{\(\mathcal{AF}_{\vdash}(\mathcal{S})\)-consistent} iff it is not \(\mathcal{AF}_{\vdash}(\mathcal{S})\)-inconsistent.
\end{definition}

\noindent Given $\vdash$, let $\vdash_{\mathsf{con}} = \left\{ (\Gamma, \gamma) \mid \Gamma \vdash \gamma \mbox{ and } \Gamma \mbox{ is } \mathcal{AF}_{\vdash} \mbox{-consistent} \right\}$.


\begin{restatable}{theorem}{CUMcon}
\label{thm:CUM:con}
Where \(\mathcal{AF}_{\vdash} = (\vdash, \overline{\cdot}, \mathsf{id})\) is contrapositable and \(\vdash\) satisfies Cut, \(\mathcal{AF}_{\mathsf{con}} = (\vdash_{\mathsf{con}}, \overline{\cdot}, \mathsf{id})\) is cumulative and extensionally cumulative for \(\sem\in\{\grd,\prf,\stb\}\) and the (weakly) skeptical entailment relation.
\end{restatable}

\begin{example}
  In view of Theorem~\ref{thm:CUM:con}, $\mathcal{AF}_{\vdash_{\mathsf{CL}}^{\top}}$ from Example~\ref{xmpl:CL:top} is cumulative for $\sem\in\{\grd,\allowbreak\stb,\allowbreak\prf\}$.
\end{example}

\begin{restatable}{theorem}{Monotonicity}
\label{thm:monotonicty}
  For argumentation settings \(\mathcal{AF}_{\vdash} = (\vdash, \overline{\cdot}, \mathsf{id})\) that are contrapositable and where $\vdash$ satisfies Cut, \(\mathcal{AF}_{\mathsf{con}} = (\vdash_{\mathsf{con}}, \overline{\cdot}, \mathsf{id})\) is monotonic for \(\sem\in\{\prf,\stb\}\) and the credulous entailment relation.
\end{restatable}

Below we give an example to show that the above theorem does not hold for the skeptical entailment relation. 
\begin{example}
  \label{ex:counter:monotonicskeptical}
  Consider the setting $\AF^{\sf pdef}_{\vdash_{\sf CL}} = (\vdash_{\sf CL}, \overline{\cdot},{\sf id})$, from Example~\ref{xmpl:CL:id} and let $\calS = \{p\}$. Note that $p\vdash_{\sf CL} p$. Moreover, we have that $\mathit{Arg}_{\vdash_{\sf CL}}(\calS) = \{(\Gamma,\phi)\mid \Gamma\subseteq\{p\}\text{ and }\phi\in{\sf CN}(\{p\})\}$, this is also the only extension, for any semantics from Definition~\ref{definition:dung}. Thus $\calS\:\nc^{\AF^{\sf pdef}_{\vdash_{\sf CL}}}_{\star{\sf sem}}\:p$ for $\star\in\{\cap,\Cap,\cup\}$ and ${\sf sem}\in\{{\sf prf}, {\sf stb}\}$. 
  \\
  Now consider $\calS' = \calS\cup\{\neg p\} = \{p,\neg p\}$. Then there are arguments $(p,p), (\neg p, \neg p)\in\textit{Arg}_{\vdash_{\sf CL}}(\calS'))$ and there is no longer just one extension. Therefore, although $\calS'\:\nc^{\AF^{\sf pdef}_{\vdash_{\sf CL}}}_{\cup{\sf sem}}\:p$ and $\calS'\:\nc^{\AF^{\sf pdef}_{\vdash_{\sf CL}}}_{\cup{\sf sem}}\:\neg p$, neither $\calS'\:\nc^{\AF^{\sf pdef}_{\vdash_{\sf CL}}}_{\cap{\sf sem}}\:p$ nor $\calS'\:\nc^{\AF^{\sf pdef}_{\vdash_{\sf CL}}}_{\cap{\sf sem}}\:\neg p$ for ${\sf sem}\in\{{\sf prf}, {\sf stb}\}$. 
\end{example}




\section{Systems of Structured Argumentation}
\label{sec:org98c77f8}

In this section we take a look at several of the structured argumentation frameworks from the literature and show how they can be represented in our setting. 

\begin{example}[Logic-Based Argumentation]
\label{xmpl-lba}
Logic-based argumentation is closest to our setting from Section~\ref{sec:GeneralSetting}.  Systems can be found in, for instance, \cite{ArStr15argcomp,BesHun01}.\footnote{There are differences between these presentations: while \cite{BesHun01,BesHun14} use classical logic as a core logic, \cite{ArStr15argcomp} allows for any Tarskian logic with an adequate sequent calculus to serve as core logic.  \cite{BesHun01,BesHun14} require the support sets of arguments to be consistent and minimal while \cite{ArStr15argcomp} omit this requirement. In what follows we follow the generalized setting of \cite{ArStr15argcomp}. Consistency and minimality can easily be captured by changing the underlying relation \(\vdash\) (see e.g., Example~\ref{xmpl:CL:top}).} The core logic \(\mathsf{L}\) is a finitary Tarskian logic with an adequate consequence relation \({\vdash} \subseteq \wp_{\sf fin}(\mathcal{L}) \times \mathcal{L}\). Given a set \(\mathcal{S} \subseteq \mathcal{L}\), the set of arguments defined by \(\mathit{Arg}_{\vdash}(\mathcal{S})\) consists of all \((\Gamma, \gamma)\) where \(\Gamma \vdash \gamma\) and $\Gamma\subseteq\mathcal{S}$ just like in Definition~\ref{definition:arg}. Different attack rules have been proposed, such as: $(\Gamma, \gamma)$ attacks $(\Delta, \psi)$ iff \ldots
\begin{enumerate}[{\hspace{-2mm}}]
\item \emph{Defeat (Def)}: ~~ \(\gamma \vdash \neg \bigwedge \Delta' \) for some $\emptyset\neq\Delta'\subseteq\Delta$.

\item \emph{Undercut (Ucut)}: ~~ \(\vdash \gamma \equiv \neg \bigwedge \Delta'\) for some $\emptyset\neq\Delta'\subseteq\Delta$.

\item \emph{Direct Compact Defeat (DiCoDef)}:  \(\gamma = \neg \delta'\) for some \(\delta' \in \Delta\).

\item \emph{Direct Undercut (DiUcut)}: ~~ there is a \(\delta \in \Delta\) s.t.\ \(\vdash\gamma \equiv \neg\delta\).

\item \emph{Direct Defeat (DiDef)}: ~~ there is a \(\delta \in \Delta\) s.t.\ \(\gamma \vdash \neg\delta\).

\end{enumerate}

Dung semantics are defined as usual on top of an attack diagram analogous to Definitions~\ref{definition:att:dia} and \ref{definition:dung}. Consequence relations are defined analogous to Definition \ref{definition:nc}, here denoted by \(\mathcal{S} \dblnc_{\mathsf{sem}}^\star \phi\) for $\star\in\{\cap,\Cap,\cup\}$.

Systems of logic-based argumentation translate rather directly to our setting. We only need to adjust the definitions of \(\overline{\cdot}\) and \(\widehat{\cdot}\) so that we can use our attack definition to simulate the attack definitions above. The following table shows how: \smallbreak

\noindent \begin{tabular}{lcc}
 & \(\overline{\delta}\) & \(\widehat{\Delta}\)\\
\hline
DiCoDef & \(\{\neg \delta\}\) & \(\Delta\)\\
Def & \(\{\neg \delta\}\) & \(\{\bigwedge \Delta' \mid \emptyset \subset \Delta'\subseteq\Delta\}\)\\
DiDef & \(\{\gamma \mid \gamma \vdash \neg \delta\}\) & \(\Delta\)\\
DiUcut & \(\{\gamma \mid \gamma \vdash \neg \delta, \neg \delta \vdash \gamma\}\) & \(\Delta\)\\
Ucut & \(\{\gamma \mid \gamma \vdash \neg \delta,  \neg\delta \vdash \gamma\}\) & \(\{\bigwedge \Delta' \mid \emptyset \subset \Delta'\subseteq\Delta\}\)\\
\end{tabular} \smallbreak

The easy proof concerning the adequacy of our representations is omitted.
\end{example}

\begin{remark}
  \label{rem:lba:cum} 
  The definitions for direct attack forms (DiDef, DiUcut, DiCoDef) all give rise to a pointed $\widehat{\cdot}$ (namely $\mathsf{id}$) in our representation. Thus, combining these attack forms with core logics ${\sf L}$ for which $\vdash_{\mathsf{L}}$ satisfies Cut, we obtain Cumulativity.
\end{remark}

\begin{remark}
  \label{rem:lba:non-inter} 
  By instantiating logic-based argumentation with a core logic that satisfies Pre-Relevance (such as the ones in Examples \ref{xmpl:RM}, \ref{xmpl:CL:top}, \ref{xmpl:mcs}) we obtain Non-Interference.
\end{remark}

\smallskip

\begin{example}[Assumption-Based Argumentation (ABA), \cite{Bondarenko1997}]
\label{xmpl:aba}
Let \(\mathcal{L}\) be a formal language, \(\overline{\cdot}: \mathcal{L} \rightarrow \wp(\mathcal{L})\) a contrariness function, \(Ab \subseteq \mathcal{L}\) a subset of so-called \emph{assumptions}, and \(\mathcal{R}\) be a set of rules of the form \(\phi_1, \dotsc, \phi_n \rightarrow \phi\) where \(\phi_1, \dotsc, \phi_n, \phi \in \mathcal{L}\) and \(\phi \notin Ab\).\footnote{In this paper we restrict ourselves to so-called flat frameworks that satisfy the latter requirement.} There is an \emph{\(\mathcal{R}\)-deduction} from some \(\Delta \subseteq Ab\) to \(\phi\) iff there is a sequence \(\phi_1, \dotsc, \phi_n\) for which \(\Delta = \{\phi_1, \dotsc, \phi_n\} \cap Ab\), \(\phi_n = \phi\) and for each \(1 \le i \le n\), \(\phi_i\) is either in \(\Delta\) or there is a rule \(\phi_{i_1}, \dotsc, \phi_{i_m} \rightarrow \phi_i\) where \(i_1, \dotsc, i_m < i\). Given two sets of assumptions \(\Delta, \Delta' \subseteq Ab\), \(\Delta\) \emph{attacks} \(\Delta'\) iff there is a \(\delta \in \Delta'\) for which there is an \(\mathcal{R}\)-deduction of some \(\psi \in \overline{\delta}\) from some \(\Delta'' \subseteq \Delta\). Subsets of assumptions in \(Ab\) and attacks between them give rise to an attack diagram where nodes are sets of assumptions and arcs are attacks. Dung-style semantics are applied to these graphs: \(\Delta\) is conflict-free if it does not attack itself, \(\Delta\) is admissible if it defends itself, it is complete if it contains all assumptions it defends, it is preferred if it is maximally admissible and stable if it is admissible and attacks every assumption it does not contain. Given a semantics \(\mathsf{sem}\), a consequence relation is given by \((Ab,\mathcal{R}) \:\dblnc_{\Cap\mathsf{sem}}^{\sf aba} \:\phi\) [\((Ab,\calR)\:\dblnc_{\cup\sem}^{\sf aba}\:\phi$ respectively \((Ab,\calR)\:\dblnc_{\cap\sem}^{\sf aba}\:\phi$] iff \(\phi\) is \(\mathcal{R}\)-derivable from all [some respectively the intersection of the] sets of assumptions \(\Delta \subseteq Ab\) that satisfy the requirements of \(\mathsf{sem}\).

In most presentations of ABA, the rules $\mathcal{R}$ are considered domain-specific strict inference rules that are part of a given knowledge base. They may also be obtained from an underlying core logic $\mathsf{L}$ with consequence relation $\vdash_{\mathsf{L}}$ by setting $\phi_1, \dotsc, \phi_n \rightarrow \phi$ iff $\{\phi_1, \dotsc, \phi_n\} \vdash_{\mathsf{L}} \phi$.

We can translate ABA into our setting as follows. Where $\mathcal{R}$ represents domain-specific rules that are part of the knowledge base, we define for \(\Delta \subseteq Ab\) and $\mathcal{R}' \subseteq \mathcal{R}$:
\begin{description}
\item[$(\dagger_{\sf aba})$] \(\Delta \cup \mathcal{R}' \vdash \phi\), iff, there is an \(\mathcal{R}\)-deduction of \(\phi\) from \(\Delta\) making use of the rules in $\mathcal{R}'$ (and only of these).\footnote{\label{fn:prem-rules}For this the language $\mathcal{L}$ underlying the original ABA framework is enriched by $\mathcal{R}$ so that ${\vdash} \subseteq \wp_{\sf fin}(\mathcal{L} \cup \mathcal{R}) \times \mathcal{L}$. This is important to track syntactic relevance.}
\end{description}

Where $\mathcal{R}$ is generated from a given core logic $\mathsf{L}$, we define for \(\Delta \subseteq Ab\):
\begin{description}
\item[$(\ddagger_{\sf aba})$] $\Delta \vdash \phi$, iff, $\Delta \vdash_{\mathsf{L}} \phi$.
\end{description}

In both cases, we use the definition of \(\overline{\cdot}\) from ABA, let $\widehat{\cdot} = \mathsf{id}(\cdot)$. Clearly, in our setting \((\Delta, \delta)\) attacks  \((\Gamma, \gamma)\) iff \(\delta \in \overline{\phi}\) for some \(\phi \in \Gamma\). %
We omit the proof that the setting $\mathcal{AF}_{\vdash}(Ab \cup \mathcal{R})$ [respectively $\mathcal{AF}_{\vdash}(Ab)$] adequately represents the ABA framework based on $Ab$ and $\mathcal{R}$ for $\vdash$ in ($\dagger_{\sf aba}$) [respectively ($\ddagger_{\sf aba}$)] and $\star\in\{\Cap,\cup,\cap\}$ so that $(Ab, \mathcal{R})\: \dblnc_{\star{\sf sem}}^{\sf aba}\: \phi$ iff $Ab \cup \mathcal{R} \:\nc_{\star{\sf sem}}^{\mathcal{AF}_{\vdash}}\: \phi$ [respectively $Ab\:\nc_{\star{\sf sem}}^{\mathcal{AF}_{\vdash}} \:\phi$].




\end{example}

\begin{remark}
  \label{rem:aba:non-inter} 
  It is easy to see that for representation ($\dagger_{\sf aba}$) the underlying consequence relation $\vdash$ satisfies Pre-Relevance and if ($\dagger$) $\mathsf{Atoms}(\overline{\phi}) \subseteq \mathsf{Atoms}(\phi)$ for all $\phi \in \mathcal{L}$, we obtain Non-Interference. For the representation $(\ddagger_{\sf aba})$ it depends on the logic $\mathsf{L}$. In case $\vdash_{\mathsf{L}}$ satisfies Pre-Relevance and if ($\dagger$) we obtain Non-Interference.
\end{remark}

\begin{remark}
  \label{rem:aba:cum} 
  Our representation of ABA makes use of the pointed $\widehat{\cdot}$ (namely $\mathsf{id}$) and $\mathcal{R}$-derivability satisfies Cut. Note that $\mathcal{AF}_{\vdash^{+\phi}}(Ab \cup \mathcal{R})$ [resp.\ $\mathcal{AF}_{\vdash^{+\phi}}(Ab)$] adequately represents the ABA framework based on $(Ab, \mathcal{R} \cup \{\rightarrow \phi\})$ for $\vdash$ in ($\dagger_{\sf aba}$) [resp.\ for $\vdash$ in ($\ddagger_{\sf aba}$)]. Thus we obtain Cumulativity.
\end{remark}

\smallskip

\begin{example}[ASPIC, \cite{ModPra14,Pra10}]
\label{xmpl:aspic}
In ASPIC we work with a formal language \(\mathcal{L}\), a contrariness function \(\overline{\cdot}: \mathcal{L} \rightarrow \wp(\mathcal{L})\), a set of defeasible rules \(\mathcal{D}\) and a set of strict rules \(\mathcal{R}\) of the form \(A_1, \dotsc, A_n \Rightarrow A\) resp.\ \(A_1, \dotsc, A_n \rightarrow A\). Similarly as was the case for ABA, the strict rules may reflect domain-specific knowledge or be generated in view of an underlying core logic $\mathsf{L}$. We assume that \(\mathcal{L}\) contains for each defeasible rule \(R \in \mathcal{D}\) a logical atom \(n(R)\) that serves as name of \(R\). An \((\mathcal{D},\mathcal{R})\)-deduction of \(\phi \in \mathcal{L}\) from \(\Delta \subseteq \mathcal{L}\) is given by a tree
  
  
\begin{itemize}
\item whose leaves are labeled by elements in \(\Delta\) (so that each \(\delta \in \Delta\) occurs as label of a leaf),
\item for every non-root node labeled by \(\psi\) there is a rule  \(R = \phi_1, \dotsc, \phi_n \rightarrow \psi \in \mathcal{R}\) or \(R = \phi_1, \dotsc, \phi_n \Rightarrow \psi \in \mathcal{D}\) and its child-nodes are labeled by \(\phi_1, \dotsc, \phi_n\) (if \(R\) has an empty body, the single child-node is unlabeled). The edges connecting the child-nodes with the parent are labeled \(R\).\footnote{Usually edges are not labeled with rules in ASPIC (and so in cases of rules with empty bodies, there are usually no child-nodes either). We introduce these labels since they enable us to define our representation in a simpler way. 
  We also simplify the presentation in that we do not assume there to be defeasible premises.}
\item the root of the tree is labeled by \(\phi\).
\end{itemize}

Given a \((\mathcal{D},\mathcal{R})\)-derivation \(a\), \(\mathsf{DefC}(a)\) [\(\mathsf{StrC}(a)\)] is the set of all node labels to which an edge labeled with a defeasible [strict] rule leads and \(\mathsf{DefR}(a)\) [$\mathsf{StrR}(a)$] is the set of all edge labels that are defeasible [strict] rules.

An argumentation theory is a triple \((\mathcal{P}, \mathcal{R}, \mathcal{D})\) where \(\mathcal{P} \subseteq \mathcal{L}\) is a set of premises, \(\mathcal{R}\) is a set of strict rules and \(\mathcal{D}\) is a set of defeasible rules. The set \(\mathit{Arg}_{\sf aspic}(\mathcal{P},\mathcal{R},\mathcal{D})\) is the set of all \((\mathcal{D},\mathcal{R})\)-derivations of some \(\phi \in \mathcal{L}\) from some finite \(\Delta \subseteq \mathcal{P}\). Given two arguments \(a,b \in \mathit{Arg}_{\sf aspic}(\mathcal{P},\mathcal{R},\mathcal{D})\), \(a\) \emph{rebuts} \(b\) iff there is a \(\phi \in \mathsf{DefC}(b)\) such that \(\mathsf{Conc}(a) \in \overline{\phi}\); \(a\) \emph{undercuts} \(b\) iff \(a \in \overline{n(R)}\) for some \(R \in \mathsf{DefR}(b)\). Attack diagrams, underlying Dung-semantics $\sem$ and consequence relations \(\dblnc_{\star{\sf sem}}^{\sf aspic}\) for $\star\in\{\cap,\Cap,\cup\}$ are then defined in the usual way.

To represent ASPIC in our setting we first need to define our derivability relation and then translate the ASPIC attacks. In case the set of strict rules $\mathcal{R}$ presents domain-specific knowledge we define:
\begin{description}
\item[$(\dagger_{\sf aspic})$] \(\Gamma \vdash \phi\) iff there is a \((\mathcal{D},\mathcal{R})\)-derivation $a$ of \(\phi\) from \(\mathcal{P}\) where \(\Gamma = \{R, n(R) \mid R \in \mathsf{DefR}(a)\} \cup \mathsf{DefC}(a) \cup \mathsf{StrR}(a) \cup \{\rightarrow \psi \mid \psi \in \mathcal{P}\}\).\footnote{Similar as in the case of ABA we enrich the language $\mathcal{L}$ for $\vdash$ to track syntactic relevance. See Footnote~\ref{fn:prem-rules}.}
\end{description}
If $\mathcal{R}$ is generated via an underlying core logic we define:
\begin{description}
\item[$(\ddagger_{\sf aspic})$] \(\Gamma \vdash \phi\) iff there is a \((\mathcal{D},\mathcal{R})\)-derivation $a$ of \(\phi\) from \(\mathcal{P}\) where \(\Gamma = \{R, n(R) \mid R \in \mathsf{DefR}(a)\} \cup \mathsf{DefC}(a) \cup \{ \rightarrow \psi \mid \psi \in \mathcal{P}\}\).
\end{description}

For reasons of space we omit the proof that, where $\mathcal{S} = \{R,n(R), \mathsf{Conc}(R) \mid R \in \mathcal{D}\}  \cup \{ \rightarrow \psi \mid \psi \in \mathcal{P}\}$ and $\widehat{\cdot} = \mathsf{id}(\cdot)$,\footnote{For the variants ASPIC$^{-}$~\cite{CMO14} and ASPIC$^{\ominus}$~\cite{HeyStr17} where rebut is unrestricted we need to add $\mathsf{StrC}(a)$ to $\Gamma$ in ($\dagger_{\sf aspic}$) and ($\ddagger_{\sf aspic}$). For generalized rebut in ASPIC$^{\ominus}$ we can proceed analogous to Example~\ref{xmpl:CL:def}.} the setting $\mathcal{AF}_{\vdash}(\mathcal{S} \cup \mathcal{R})$ [respectively $\mathcal{AF}_{\vdash}(\mathcal{S})$] represents the ASPIC theory $(\mathcal{P},\mathcal{R},\mathcal{D})$ for $\vdash$ in ($\dagger_{\sf aspic}$) [respectively in ($\ddagger_{\sf aspic}$)], where $\star\in\{\cap,\Cap,\cup\}$ so that $(\mathcal{P}, \mathcal{R},\mathcal{D}) \:\dblnc_{\star{\sf sem}}^{\sf aspic}\: \phi$ iff $\mathcal{S} \cup \mathcal{R}\: \nc_{\star{\sf sem}}^{\mathcal{AF}_{\vdash}}\: \phi$ [respectively $\mathcal{S} \nc_{\star{\sf sem}}^{\mathcal{AF}_{\vdash}} \phi$].

\end{example}

\begin{remark}
  \label{rem:aspic:non-inter} 
  Analogous to Remark \ref{rem:aba:non-inter}, if ($\dagger$) holds, we obtain Non-Interference for the presentation $(\dagger_{\sf aspic})$ and for $(\ddagger_{\sf aspic})$ if additionally the underlying logic $\mathsf{L}$ satisfies Pre-Relevance. 
\end{remark}

\begin{remark}
  \label{rem:aspic:cum} 
  Our representation of ASPIC makes use of the pointed $\widehat{\cdot}$ (namely $\mathsf{id}$) and $(\mathcal{D},\mathcal{R})$-derivability satisfies Cut. Note that $\mathcal{AF}_{\vdash^{+\phi}}(\mathcal{S} \cup \mathcal{R})$ [respectively $\mathcal{AF}_{\vdash^{+\phi}}(\mathcal{S})$] adequately represents the ASPIC argumentation theory $(\mathcal{P} \cup \{\phi\}, \mathcal{R},\mathcal{D})$ for $\vdash$ in ($\dagger_{\sf aspic}$) [respectively for $\vdash$ in ($\ddagger_{\sf aspic}$)] and $\mathcal{S}$ as specified in Example~\ref{xmpl:aspic}.
  Thus we obtain Cumulativity for grounded semantics.
\end{remark}

\section{Meta-Theory}
\label{sec:org530aa41}

Now that we have shown how some of the best-known approaches to structured argumentation can be represented in the general framework from Section~\ref{sec:GeneralSetting}, we return to the meta-theory, introduced in Section~\ref{sec:RelevanceProperties}. First we show that non-interference (Definition~\ref{def:non-interference}) holds for argumentation frameworks that satisfy pre-relevance, Theorem~\ref{thm:non-inter}. This means that, under grounded, complete and preferred semantics, for (weakly) skeptical and credulous entailments, a consequence always shares some atomic formula with the premise set. Then we turn to the results for semantic relevance. Theorem~\ref{thm:cum}, based on a general setting, where we just suppose that $\vdash$ satisfies Cut and that $\widehat{\cdot}$ is pointed, shows (extensional) cumulativity for grounded semantics. With a few additional assumptions Theorem~\ref{thm:CUM:con} shows that we have (extensional) cumulativity for skeptical entailment. These two theorems show that, when information that could be derived previously is added to the given information, the conclusions do not change. In the last result, Theorem~\ref{thm:monotonicty}, we show that for credulous entailment we get something even stronger: monotonicity. Meaning that conclusions that could previously be derived, can still be derived when information is added.  

%
%
%

\subsection{Syntactic Relevance}
\label{sec:org31d680c}

In this section we prove Theorem~\ref{thm:non-inter}, concerning non-interference (see Definition~\ref{def:non-interference}). In the following we suppose that $\mathcal{AF}_{\vdash}$ is a setting that satisfies Pre-Relevance (see Definition~\ref{def:pre-rel-AF}). We start with some notations:

\begin{definition}
Where \(\mathcal{S} \subseteq \mathcal{L}\) and $a,b \in \mathit{Arg}_{\vdash}(\mathcal{S})$, we write $a \preceq b$ iff $\widehat{\mathsf{Supp}(a)} \subseteq \widehat{\mathsf{Supp}(b)}$.
\end{definition}

\begin{definition}
Where $\mathcal{S} \subseteq \mathcal{L}$ and $\mathcal{E} \subseteq \mathit{Arg}_{\vdash}(\mathcal{S})$, let 
$\mathsf{Defended}(\mathcal{E}, \mathcal{AF}_{\vdash}(\mathcal{S}))$ be the set of all arguments $a \in \mathit{Arg}_{\vdash}(\mathcal{S})$ that are defended by arguments in $\mathcal{E}$.
\end{definition}

\begin{definition}
  Let $\mathcal{A}^+$ denote the set of arguments attacked by the set of arguments $\mathcal{A}$.
\end{definition}
%
%


In view of the monotonicity of~~ $\widehat{\cdot}$~~ we have:
\begin{fact}
\label{fact:le:prec}
Where \(b' \preceq b\), if \(a\) attacks \(b'\) then \(a\) attacks \(b\).
\end{fact}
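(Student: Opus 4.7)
The claim is a definitional unfolding, so the plan is simply to chase through the definitions of attack and $\preceq$. Assume $b' \preceq b$ and that $a$ attacks $b'$. By the definition of attack, there exists some $\phi \in \widehat{\mathsf{Supp}(b')}$ with $\mathsf{Conc}(a) \in \overline{\phi}$. The hypothesis $b' \preceq b$ expands, by definition of $\preceq$, to $\widehat{\mathsf{Supp}(b')} \subseteq \widehat{\mathsf{Supp}(b)}$; hence the witnessing formula $\phi$ also lies in $\widehat{\mathsf{Supp}(b)}$. Together with $\mathsf{Conc}(a) \in \overline{\phi}$ this is exactly the definition of $a$ attacking $b$.

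Since every step is a one-line unwinding, there is no real obstacle to overcome. The only substantive remark worth making is that the proof itself does not directly invoke the monotonicity of $\widehat{\cdot}$ from Def.\ \ref{def:pre-rel-AF}(iii); that clause is hidden one layer earlier, in the role it plays for verifying $\preceq$ in the first place. Specifically, monotonicity guarantees that whenever $\mathsf{Supp}(b') \subseteq \mathsf{Supp}(b)$ one has $\widehat{\mathsf{Supp}(b')} \subseteq \widehat{\mathsf{Supp}(b)}$, i.e., $b' \preceq b$; so the fact will typically be applied in tandem with monotonicity to transfer attacks along argument support inclusions. With this understood, I would write the proof in one short sentence and leave the rest to the reader.
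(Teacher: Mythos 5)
Your proof is correct and matches what the paper intends: the paper states this fact without proof, prefaced only by ``In view of the monotonicity of $\widehat{\cdot}$'', and your definitional unfolding of attack and $\preceq$ is exactly the omitted argument. Your side remark is also accurate --- with $\preceq$ defined directly as $\widehat{\mathsf{Supp}(b')} \subseteq \widehat{\mathsf{Supp}(b)}$, monotonicity of $\widehat{\cdot}$ is not used in the proof itself but only in establishing $b' \preceq b$ from support inclusion in the fact's applications (e.g.\ in Lemma~\ref{lem:arg:prerel} and Lemma~\ref{lm:prime:att}).
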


Complete extensions are closed under $\preceq$:
\begin{fact}
\label{fact:att:cmp}
Where \(\mathcal{S} \subseteq \mathcal{L}\), \(\mathcal{E} \in \exts_\cmp(\mathcal{AF}_{\vdash}(\mathcal{S}))\), \(a \in \mathcal{E}\), and \(b \in \mathit{Arg}_{\vdash}(\mathcal{S})\), then $b \in \mathcal{E}$ if $b \preceq a$. 
\end{fact}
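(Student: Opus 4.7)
The plan is to show that $\mathcal{E}$ defends $b$, after which completeness of $\mathcal{E}$ immediately yields $b \in \mathcal{E}$.

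First, I would fix an arbitrary attacker $c \in \mathit{Arg}_{\vdash}(\mathcal{S})$ of $b$. Since $b \preceq a$, Fact~\ref{fact:le:prec} gives that $c$ also attacks $a$. Because $\mathcal{E}$ is complete, it is in particular admissible, and it contains $a$; hence it defends $a$. Therefore there exists some $d \in \mathcal{E}$ that attacks $c$. As $c$ was an arbitrary attacker of $b$, this shows $\mathcal{E}$ defends $b$.

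Finally, since $\mathcal{E}$ is complete, it contains every argument it defends, so $b \in \mathcal{E}$.

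There is no real obstacle here: the whole point of Fact~\ref{fact:le:prec} (which rests on the $\subseteq$-monotonicity of $\widehat{\cdot}$ built into Pre-Relevance) is precisely to let attackers propagate upward along $\preceq$, so that any defense of $a$ automatically yields a defense of every $b \preceq a$. The only subtle thing to mention is that no separate conflict-freeness check is required for adding $b$ to $\mathcal{E}$, since completeness already bundles admissibility with closure under defense, so once defense is established, membership follows directly.
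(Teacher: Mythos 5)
Your proof is correct and is exactly the argument the paper intends (the paper states this fact without an explicit proof, immediately after Fact~\ref{fact:le:prec}, which is precisely the attack-propagation step you use): any attacker of $b$ attacks $a$ by $b \preceq a$, is therefore counter-attacked by the admissible $\mathcal{E}$ containing $a$, and completeness then forces $b \in \mathcal{E}$. Your closing remark that no separate conflict-freeness check is needed is also accurate, since the paper's definition of completeness directly requires containment of every defended argument.
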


\begin{lemma}
  \label{lem:arg:prerel}
  Where \(\mathcal{S} \mid \mathcal{S}'\), if $a \in \mathit{Arg}_{\vdash}(\mathcal{S} \cup \mathcal{S}')$ attacks $b \in \mathit{Arg}_{\vdash}(\mathcal{S})$, there is an $a' \in \mathit{Arg}_{\vdash}(\mathcal{S} \cap \mathsf{Supp}(a))$ that attacks $b$.
\end{lemma}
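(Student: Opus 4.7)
The plan is to invoke the prime condition of the pre-relevant setting (Def.~\ref{def:prime:setting}) after a suitable atom-based decomposition of $\mathsf{Supp}(a)$ along $\mathcal{S}$ vs.\ $\mathcal{S}'$. Write $a = (\Gamma, \gamma)$ and $b = (\Gamma', \gamma')$ with $\Gamma \subseteq \mathcal{S} \cup \mathcal{S}'$ and $\Gamma' \subseteq \mathcal{S}$, and let $\phi \in \widehat{\Gamma'}$ with $\gamma \in \overline{\phi}$ witness the attack. Let $\mathcal{A}_1 = \mathsf{Atoms}(\mathcal{S})$ and $\mathcal{A}_2 = \mathsf{Atoms}(\mathcal{S}')$, so that $\mathcal{A}_1 \mid \mathcal{A}_2$ by hypothesis.

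The key decomposition is $\Gamma = \Gamma_1 \cup \Gamma_2$, where $\Gamma_1 := \Gamma \cap \mathcal{S}$ and $\Gamma_2 := \Gamma \setminus \mathcal{S}$. Because $\Gamma \subseteq \mathcal{S} \cup \mathcal{S}'$, we have $\Gamma_2 \subseteq \mathcal{S}'$, so $\mathsf{Atoms}(\Gamma_1) \subseteq \mathcal{A}_1$ and $\mathsf{Atoms}(\Gamma_2) \subseteq \mathcal{A}_2$; also $\mathsf{Atoms}(\Gamma') \subseteq \mathcal{A}_1$. I would then instantiate the prime condition with $\mathcal{S}_1 := \Gamma_1$, $\mathcal{S}_2 := \Gamma_2$, $\mathcal{T}_1 := \Gamma'$, $\mathcal{T}_2 := \emptyset$, together with the given $\phi$ and $\psi := \gamma$. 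All the atom inclusions are fulfilled, $\phi \in \widehat{\mathcal{T}_1 \cup \mathcal{T}_2} = \widehat{\Gamma'}$, $\psi \in \overline{\phi}$, and $\mathcal{S}_1 \cup \mathcal{S}_2 = \Gamma \vdash \gamma = \psi$. The primeness of $\mathcal{AF}_{\vdash}$ then yields $i \in \{1,2\}$, $\mathcal{S}_i' \subseteq \mathcal{S}_i$, $\phi_i \in \widehat{\mathcal{T}_i}$ and $\psi_i \in \overline{\phi_i}$ with $\mathcal{S}_i' \vdash \psi_i$.

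Since $\widehat{\mathcal{T}_2} = \widehat{\emptyset}$ is vacuous (no $\phi_2$ can be chosen), the index must be $i = 1$. Hence there are $\mathcal{S}_1' \subseteq \Gamma_1 = \mathcal{S} \cap \mathsf{Supp}(a)$, $\phi_1 \in \widehat{\Gamma'} = \widehat{\mathsf{Supp}(b)}$ and $\psi_1 \in \overline{\phi_1}$ with $\mathcal{S}_1' \vdash \psi_1$. Setting $a' := (\mathcal{S}_1', \psi_1)$ gives $a' \in \mathit{Arg}_{\vdash}(\mathcal{S} \cap \mathsf{Supp}(a))$, and by construction $a'$ attacks $b$ through $\phi_1$.

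\textbf{Main obstacle.} The only delicate point is the use of $\mathcal{T}_2 = \emptyset$ to force $i=1$ in the conclusion of the prime condition; if one prefers to avoid an empty $\mathcal{T}_2$ (e.g.\ because $\widehat{\cdot}$ is declared on nonempty inputs only), one can instead take $\mathcal{T}_2 := \Gamma_2$ (so $\mathsf{Atoms}(\mathcal{T}_2) \subseteq \mathcal{A}_2$) and argue that the case $i = 2$ still leads to the desired attack on $b$, via the observation that attacks witnessed on the $\mathcal{A}_2$-side would have to involve $\phi_2 \in \widehat{\Gamma_2}$, which is incompatible with $\phi$ lying in $\widehat{\Gamma'}$ modulo atom-disjointness — but the cleaner route is the empty choice, which is permitted since $\wp_{\sf fin}(\mathcal{L})$ includes $\emptyset$ and $\widehat{\mathcal{T}_1 \cup \mathcal{T}_2}$ is evaluated on the nonempty union $\Gamma'$.
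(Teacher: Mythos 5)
Your proposal is correct and follows essentially the same route as the paper's own proof: both instantiate the prime condition (Def.~\ref{def:prime:setting}) with $\mathcal{A}_1 = \mathsf{Atoms}(\mathcal{S})$, $\mathcal{A}_2 = \mathsf{Atoms}(\mathcal{S}')$, $\mathcal{T}_1 = \mathsf{Supp}(b)$, $\mathcal{T}_2 = \emptyset$, and the split of $\mathsf{Supp}(a)$ into its $\mathcal{S}$- and $\mathcal{S}'$-parts, then use the emptiness of $\mathcal{T}_2$ to force $i=1$ and read off the attacking argument $(\mathcal{S}_1', \psi_1)$. The delicate point you flag about $\widehat{\emptyset}$ is present in the paper's proof as well and is resolved exactly as you say.
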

\begin{proof}
  Suppose $a = (\Gamma, \psi) \in \mathit{Arg}_{\vdash}(\mathcal{S} \cup \mathcal{S}')$ attacks $b = (\Lambda, \sigma) \in \mathit{Arg}_{\vdash}(\mathcal{S})$. Then, $\psi \in \overline{\phi}$ for some $\phi \in \widehat{\Lambda}$. Where $\mathcal{A}_1 = \mathsf{Atoms}(\mathcal{S})$, $\mathcal{A}_2 = \mathsf{Atoms}(\mathcal{S}')$, $\mathcal{T}_1 = \Lambda$, $\mathcal{T}_2 = \emptyset$, $\mathcal{S}_1 = \Gamma \cap \mathcal{S}$ and $\mathcal{S}_2 = \Gamma \cap \mathcal{S}'$, with Definition~\ref{def:prime:setting}, $\mathcal{S}_1' \vdash \psi'$ where $\mathcal{S}_1' \subseteq \mathcal{S}_1$,  $\psi' \in \overline{\phi'}$ and $\phi' \in \widehat{\Lambda}$. Thus, $(\mathcal{S}_1', \psi') \preceq a$ attacks $b$.
\end{proof}


\begin{lemma}
\label{lem:oplus:adm}
Where \(\mathcal{S} \mid \mathcal{S}'\), \(\mathcal{E} \in \exts_\cmp(\mathcal{AF}_{\vdash}(\mathcal{S}))\), \(\mathcal{E}' \in \exts_\cmp(\mathcal{AF}_{\vdash}(\mathcal{S}'))\), \(\mathcal{E} \cup \mathcal{E}' \in \exts_\adm(\mathcal{AF}_{\vdash}(\mathcal{S} \cup \mathcal{S}'))\).
\end{lemma}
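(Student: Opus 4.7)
Admissibility of $\mathcal{E} \cup \mathcal{E}'$ in $\mathcal{AF}_{\vdash}(\mathcal{S} \cup \mathcal{S}')$ decomposes into (i) conflict-freeness and (ii) self-defense. For (i) the only new case beyond what the two extensions already exclude is a cross-attack, so suppose $a = (\Gamma, \gamma) \in \mathcal{E}$ attacks $b = (\Delta, \delta) \in \mathcal{E}'$ on some $\phi \in \widehat{\Delta}$, with $\gamma \in \overline{\phi}$. I would apply the prime condition (Def.~\ref{def:prime:setting}) with $\mathcal{A}_1 = \mathsf{Atoms}(\mathcal{S})$, $\mathcal{A}_2 = \mathsf{Atoms}(\mathcal{S}')$, $\mathcal{S}_1 = \Gamma$, $\mathcal{T}_2 = \Delta$ and $\mathcal{S}_2 = \mathcal{T}_1 = \emptyset$. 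Since $\widehat{\mathcal{T}_1}$ cannot supply a witness $\phi_1$, the condition forces $i = 2$, and because $\mathcal{S}_2 = \emptyset$ it delivers an empty-supported $(\emptyset, \psi_2) \in \mathit{Arg}_{\vdash}(\mathcal{S}')$ attacking $b$. But $(\emptyset, \psi_2)$ has no support in which to be attacked, so $\mathcal{E}'$ cannot defend $b$ against it, contradicting admissibility of $\mathcal{E}'$. The symmetric case is analogous.

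For (ii), let $a \in \mathcal{E} \cup \mathcal{E}'$, WLOG $a \in \mathcal{E}$, and let $a^{\dagger} \in \mathit{Arg}_{\vdash}(\mathcal{S} \cup \mathcal{S}')$ attack $a$. Lemma~\ref{lem:arg:prerel} supplies an $a^{\dagger\prime} \in \mathit{Arg}_{\vdash}(\mathcal{S} \cap \mathsf{Supp}(a^{\dagger})) \subseteq \mathit{Arg}_{\vdash}(\mathcal{S})$ that also attacks $a$. Since $\mathcal{E}$ is admissible (being complete), some $c \in \mathcal{E}$ attacks $a^{\dagger\prime}$. The $\subseteq$-monotonicity of $\widehat{\cdot}$ gives $a^{\dagger\prime} \preceq a^{\dagger}$, so by Fact~\ref{fact:le:prec} $c$ attacks $a^{\dagger}$ as well, and $c \in \mathcal{E} \cup \mathcal{E}'$.

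The main obstacle is (i): relying on Lemma~\ref{lem:arg:prerel} alone for the cross-attack would merely relocate the attacker into $\mathit{Arg}_{\vdash}(\mathcal{S}')$ without constraining its support, which invites an infinite regress bouncing between $\mathcal{E}$ and $\mathcal{E}'$ as one alternately relocates counter-attackers. The essential trick is to call the prime condition directly with one $\mathcal{T}_i$ empty, which collapses the choice onto the disjoint side and forces the extracted attacker to have support $\subseteq \mathcal{S}_2' \subseteq \emptyset$; this unattackable witness then immediately contradicts admissibility of the opposing extension. By contrast, the defense step (ii) is a straightforward chaining of Lemma~\ref{lem:arg:prerel} with Fact~\ref{fact:le:prec}.
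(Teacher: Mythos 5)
Your proof is correct and follows essentially the same route as the paper's: both reduce the cross-attack case to an empty-supported attacker on the disjoint side (you by invoking the prime condition directly with $\mathcal{T}_1=\mathcal{S}_2=\emptyset$, the paper via Lemma~\ref{lem:arg:prerel}, which is just that instantiation packaged), and both handle defense by combining Lemma~\ref{lem:arg:prerel} with Fact~\ref{fact:le:prec}. The only cosmetic difference is how the contradiction is closed in the conflict-freeness step: you note that $\mathcal{E}'$ cannot defend $b$ against the unattackable witness (using admissibility), while the paper puts the witness into $\mathcal{E}'$ by completeness and contradicts conflict-freeness; both are sound.
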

\begin{proof}
Suppose \(\mathcal{S} \mid \mathcal{S}'\), \(\mathcal{E} \in \exts_\cmp(\mathcal{AF}_{\vdash}(\mathcal{S}))\) and \(\mathcal{E}' \in \exts_\cmp(\mathcal{AF}_{\vdash}(\mathcal{S}'))\). We now show that \(\mathcal{E} \cup \mathcal{E}'\) is admissible. 

\emph{Conflict-free}: Assume for a contradiction that there are \(a, a' \in \mathcal{E} \cup \mathcal{E}'\) such that \(a\) attacks \(a'\). By the conflict-freeness of \(\mathcal{E}\) and \(\mathcal{E}'\) it is not the case that \(a, a' \in \mathcal{E}\) or \(a, a' \in \mathcal{E}'\). Without loss of generality suppose \(a \in \mathcal{E}\) and \(a' \in \mathcal{E}'\). By Lemma~\ref{lem:arg:prerel}, there is a \(b \in \mathit{Arg}_{\vdash}(\mathcal{S}' \cap \mathsf{Supp}(a)) = \mathit{Arg}_{\vdash}(\emptyset)\) that attacks \(a'\). Thus, \(b\) is trivially defended by \(\mathcal{E}'\) and by the completeness of \(\mathcal{E}'\), \(b \in \mathcal{E}'\). This is a contradiction to the conflict-freeness of \(\mathcal{E}'\). %

\emph{Admissibility}: Suppose some \(b \in \mathit{Arg}_{\vdash}(\mathcal{S} \cup \mathcal{S}')\) attacks some \(a \in \mathcal{E} \cup \mathcal{E}'\). Without loss of generality assume \(a \in \mathcal{E}\). By Lemma \ref{lem:arg:prerel}, there is a \(b' \in \mathit{Arg}_{\vdash}(\mathcal{S} \cap \mathsf{Supp}(b))\) that attacks \(a\). Thus, there is a \(c \in \mathcal{E}\) that attacks \(b'\). By Fact~\ref{fact:le:prec}, \(c\) attacks \(b\). 
\end{proof}


\begin{lemma}
\label{lm:prime:att}
Where \(\mathcal{S}_1 \mid \mathcal{S}_2\), $a, b \in \mathit{Arg}_{\vdash}(\mathcal{S}_1 \cup \mathcal{S}_2)$, $\mathsf{Supp}(b) = \Theta$ and $b$ attacks $a$, 
\begin{enumerate}
\item some $b' \in \mathit{Arg}_{\vdash}(\mathcal{S}_1 \cap \Theta) \cup \mathit{Arg}_{\vdash}(\mathcal{S}_2 \cap \Theta)$ attacks \(a\);
\item if \(a \in \mathit{Arg}_{\vdash}(\mathcal{S}_1)\), some $b' \in \mathit{Arg}_{\vdash}(\mathcal{S}_1 \cap \Theta)$ attacks \(a\).
\end{enumerate}
\end{lemma}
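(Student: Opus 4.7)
The approach is to unpack the attack into the primeness condition from Definition~\ref{def:prime:setting} and apply it directly. Write $a = (\Lambda, \sigma)$ and $b = (\Theta, \psi)$; since $b$ attacks $a$, fix $\phi \in \widehat{\Lambda}$ with $\psi \in \overline{\phi}$. Both $\Lambda$ and $\Theta$ are subsets of $\mathcal{S}_1 \cup \mathcal{S}_2$, so set $\Theta_i = \Theta \cap \mathcal{S}_i$ and $\Lambda_i = \Lambda \cap \mathcal{S}_i$ for $i \in \{1,2\}$. Because $\mathcal{S}_1 \mid \mathcal{S}_2$, we have $\mathsf{Atoms}(\Theta_i), \mathsf{Atoms}(\Lambda_i) \subseteq \mathsf{Atoms}(\mathcal{S}_i)$.

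For part~1, apply Definition~\ref{def:prime:setting} with $\mathcal{A}_i := \mathsf{Atoms}(\mathcal{S}_i)$, the ``$\mathcal{S}_i$'' of that definition instantiated by $\Theta_i$, and $\mathcal{T}_i := \Lambda_i$. Since $\Theta = \Theta_1 \cup \Theta_2 \vdash \psi$ and $\phi \in \widehat{\Lambda_1 \cup \Lambda_2}$, primeness yields some $i \in \{1,2\}$, $\Theta_i' \subseteq \Theta_i$, $\phi_i \in \widehat{\Lambda_i}$ and $\psi_i \in \overline{\phi_i}$ with $\Theta_i' \vdash \psi_i$. By the $\subseteq$-monotonicity of $\widehat{\cdot}$ (clause (iii) of Pre-Relevance), $\phi_i \in \widehat{\Lambda_i} \subseteq \widehat{\Lambda}$, so $b' := (\Theta_i', \psi_i)$ attacks $a$. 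Since $\Theta_i' \subseteq \mathcal{S}_i \cap \Theta$, we have $b' \in \mathit{Arg}_{\vdash}(\mathcal{S}_1 \cap \Theta) \cup \mathit{Arg}_{\vdash}(\mathcal{S}_2 \cap \Theta)$, as required.

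For part~2, suppose additionally $a \in \mathit{Arg}_{\vdash}(\mathcal{S}_1)$, i.e., $\Lambda \subseteq \mathcal{S}_1$. Instantiate primeness as above but with $\mathcal{T}_1 := \Lambda$ and $\mathcal{T}_2 := \emptyset$; then $\widehat{\mathcal{T}_1 \cup \mathcal{T}_2} = \widehat{\Lambda}$ still contains $\phi$. Primeness again gives some $i \in \{1,2\}$ with $\phi_i \in \widehat{\mathcal{T}_i}$; however the case $i = 2$ is impossible since $\widehat{\cdot}$ is undefined (equivalently, vacuous) on the empty set, so no such $\phi_2$ exists. Thus $i = 1$, and the argument $(\Theta_1', \psi_1) \in \mathit{Arg}_{\vdash}(\mathcal{S}_1 \cap \Theta)$ constructed as in part~1 attacks $a$.

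The main technical point is the correct instantiation of the four sets in the primeness definition: the support of $b$ must be split along $\mathcal{S}_1$ vs.\ $\mathcal{S}_2$ to play the role of the ``$\mathcal{S}_i$'' in Def.~\ref{def:prime:setting}, while the support of $a$ plays the role of the ``$\mathcal{T}_i$''; after that, $\subseteq$-monotonicity of $\widehat{\cdot}$ is what lets the witness attacker remain an attacker of the \emph{original} $a$. Forcing $i = 1$ in part~2 by taking $\mathcal{T}_2 = \emptyset$ is the only delicate detail beyond the bookkeeping.
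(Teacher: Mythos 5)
Your proof is correct and takes essentially the same route as the paper's: both parts apply the primeness clause of Pre-Relevance with the support of $b$ split along $\mathcal{S}_1/\mathcal{S}_2$ playing the role of the $\mathcal{S}_i$ and the support of $a$ playing the role of the $\mathcal{T}_i$, then use $\subseteq$-monotonicity of $\widehat{\cdot}$ to see that the resulting witness still attacks $a$, and force $i=1$ in part~2 by setting $\mathcal{T}_2=\emptyset$. Your instantiation and bookkeeping match the paper's proof; no gaps.
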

\begin{proof}
  Let \(a = (\Gamma, \alpha) \in \mathit{Arg}_{\vdash}(\mathcal{S}_1 \cup \mathcal{S}_2)\). Suppose \(b = (\Theta, \beta)\) attacks \(a\). Thus, there is a \(\gamma \in \widehat{\Gamma}\) such that \(\beta \in \overline{\gamma}\). By Definition~\ref{def:pre-rel-AF}~(ii), there are \(i \in \{1,2\}\), \(\Theta' \subseteq \Theta \cap \mathcal{S}_i\), \(\phi \in \widehat{\Gamma \cap \mathcal{S}_i}\) and \(\psi \in \overline{\phi}\) such that \(b' = (\Theta', \psi) \in \mathit{Arg}_{\vdash}(\mathcal{S}_i)\). By Definition~\ref{def:pre-rel-AF}~(iii), $\widehat{\Gamma \cap \mathcal{S}_i} \subseteq \widehat{\Gamma}$ and hence $b'$ attacks \(a\). 
  
For Item 2 note that \(i = 1\) when setting $\mathcal{T}_1 = \Gamma$ and $\mathcal{T}_2 = \emptyset$ in Definition~\ref{def:prime:setting}. 
\end{proof}

\begin{lemma}
\label{lem:defdef}
Where \(\mathcal{S} \mid \mathcal{S}'\), \(\mathcal{E} \in \exts_\cmp(\mathcal{AF}_{\vdash}(\mathcal{S} \cup \mathcal{S}'))\), \(\mathcal{E}_1 = \mathcal{E} \cap \mathit{Arg}_{\vdash}(\mathcal{S})\) and \(\mathcal{E}_2 = \mathcal{E} \cap \mathit{Arg}_{\vdash}(\mathcal{S}')\), 
\begin{enumerate}
\item \(\mathcal{E} = \mathsf{Defended}(\mathcal{E}_1 \cup \mathcal{E}_{2}, \mathcal{AF}_{\vdash}(\mathcal{S} \cup \mathcal{S}'))\);
\item \(\mathcal{E}_1 \in \exts_\cmp(\mathcal{AF}_{\vdash}(\mathcal{S}))\). 
\end{enumerate}
\end{lemma}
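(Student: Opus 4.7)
The proof rests essentially on two tools already available: Lemma~\ref{lm:prime:att}, which lets us shrink attackers so that their supports lie entirely on one side of the syntactic split, and Fact~\ref{fact:att:cmp}, which ensures that if we shrink an argument inside a complete extension (making it $\preceq$-smaller), we stay inside the extension. I will prove Item~1 first and then bootstrap Item~2 from it.

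\textbf{Item 1.} The $\supseteq$-direction is immediate: since $\mathcal{E}_1 \cup \mathcal{E}_2 \subseteq \mathcal{E}$, any argument defended by $\mathcal{E}_1 \cup \mathcal{E}_2$ is defended by $\mathcal{E}$, and by completeness of $\mathcal{E}$ belongs to $\mathcal{E}$. For the $\subseteq$-direction, take $a\in\mathcal{E}$ and any attacker $b\in\mathit{Arg}_{\vdash}(\mathcal{S}\cup\mathcal{S}')$ of $a$. Admissibility of $\mathcal{E}$ gives some $c\in\mathcal{E}$ attacking $b$. Applying Lemma~\ref{lm:prime:att}(1) to $c$ yields $c'\in\mathit{Arg}_{\vdash}(\mathcal{S}\cap\mathsf{Supp}(c))\cup\mathit{Arg}_{\vdash}(\mathcal{S}'\cap\mathsf{Supp}(c))$ that still attacks $b$; since $\mathsf{Supp}(c')\subseteq\mathsf{Supp}(c)$ we have $c'\preceq c$ by monotonicity of $\widehat{\cdot}$, and Fact~\ref{fact:att:cmp} places $c'$ in $\mathcal{E}$. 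As $c'$ lives in $\mathit{Arg}_{\vdash}(\mathcal{S})$ or in $\mathit{Arg}_{\vdash}(\mathcal{S}')$, we get $c'\in\mathcal{E}_1\cup\mathcal{E}_2$, so $a$ is defended by $\mathcal{E}_1\cup\mathcal{E}_2$.

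\textbf{Item 2.} Conflict-freeness of $\mathcal{E}_1$ is inherited from $\mathcal{E}$. For admissibility, given $a\in\mathcal{E}_1$ and an attacker $b\in\mathit{Arg}_{\vdash}(\mathcal{S})$ of $a$, admissibility of $\mathcal{E}$ provides $c\in\mathcal{E}$ attacking $b$; applying Lemma~\ref{lm:prime:att}(2) (which is available because $b\in\mathit{Arg}_{\vdash}(\mathcal{S})$) yields $c'\in\mathit{Arg}_{\vdash}(\mathcal{S}\cap\mathsf{Supp}(c))$ with $c'\preceq c$ attacking $b$, and again $c'\in\mathcal{E}\cap\mathit{Arg}_{\vdash}(\mathcal{S})=\mathcal{E}_1$. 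For completeness, let $a\in\mathit{Arg}_{\vdash}(\mathcal{S})$ be defended by $\mathcal{E}_1$ in $\mathcal{AF}_{\vdash}(\mathcal{S})$. To deduce $a\in\mathcal{E}_1$, via Item~1 it suffices to show $a$ is defended by $\mathcal{E}_1\cup\mathcal{E}_2$ in $\mathcal{AF}_{\vdash}(\mathcal{S}\cup\mathcal{S}')$, for then $a\in\mathcal{E}\cap\mathit{Arg}_{\vdash}(\mathcal{S})=\mathcal{E}_1$. So take any attacker $b\in\mathit{Arg}_{\vdash}(\mathcal{S}\cup\mathcal{S}')$ of $a$; Lemma~\ref{lm:prime:att}(2) replaces $b$ by $b'\preceq b$ with $b'\in\mathit{Arg}_{\vdash}(\mathcal{S})$ still attacking $a$; $\mathcal{E}_1$ defends $a$ against $b'$, and Fact~\ref{fact:le:prec} lifts the defender of $b'$ to one of $b$.

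\textbf{Expected obstacle.} The only subtle point is that the inputs to Lemma~\ref{lm:prime:att} must be checked carefully: Item~1 of that lemma is enough for general attackers but Item~2 is needed whenever the attacked argument lies on a single side of the split, which is essential both for the admissibility clause and for the completeness clause of Item~2. Once this is kept straight, the $\preceq$-closure of complete extensions does all the heavy lifting.
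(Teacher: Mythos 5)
Your proof is correct and follows essentially the same route as the paper's: both items are reduced to Lemma~\ref{lm:prime:att} together with the $\preceq$-closure of complete extensions (Fact~\ref{fact:att:cmp}) and Fact~\ref{fact:le:prec}. In fact your completeness step for Item~2 spells out a detail the paper's one-line version elides, namely that attackers of $a$ from $\mathit{Arg}_{\vdash}(\mathcal{S}\cup\mathcal{S}')\setminus\mathit{Arg}_{\vdash}(\mathcal{S})$ must first be shrunk via Lemma~\ref{lm:prime:att}(2) before $\mathcal{E}_1$'s defence in $\mathcal{AF}_{\vdash}(\mathcal{S})$ can be lifted back with Fact~\ref{fact:le:prec}.
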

\begin{proof}
  \emph{Ad 1.} Suppose \(\mathcal{E}\) defends some \(a \in \mathit{Arg}_{\vdash}(\mathcal{S} \cup \mathcal{S}')\). By Lemma~\ref{lm:prime:att} and Fact~\ref{fact:att:cmp}, $\mathcal{E}_1 \cup \mathcal{E}_2$ defends $a$.

  \emph{Ad 2.} 
  Note that \(\mathcal{E}_1\) is conflict-free since \(\mathcal{E}\) is conflict-free. Suppose \(b \in \mathit{Arg}_{\vdash}(\mathcal{S})\) attacks some \(a \in \mathcal{E}_1\). Thus, there is a \(c \in \mathcal{E}\) that attacks \(b\). By Lemma~\ref{lm:prime:att} and Fact~\ref{fact:att:cmp}, $\mathcal{E}_1$ attacks $b$. Thus, $\mathcal{E}_1$ is admissible. Suppose $\mathcal{E}_1$ defends some $d \in \mathit{Arg}_{\vdash}(\mathcal{S})$. Then $\mathcal{E}$ defends $d$ and hence $d \in \mathcal{E} \cap \mathit{Arg}_{\vdash}(\mathcal{S}) = \mathcal{E}_1$. Hence, $\mathcal{E}_1$ is complete.
\end{proof}




\begin{lemma}
\label{lem:cmp:oplus}
Where \(\mathcal{S} \mid \mathcal{S}'\), \(\mathcal{E}_1 \in  \exts_\cmp(\mathcal{AF}_{\vdash}(\mathcal{S}))\), \(\mathcal{E}_2 \in \exts_\cmp(\mathcal{AF}_{\vdash}(\mathcal{S}'))\), \(\mathcal{E} = \mathsf{Defended}(\mathcal{E}_1 \cup \mathcal{E}_2, \mathcal{AF}_{\vdash}(\mathcal{S} \cup \mathcal{S}'))\), 
\begin{enumerate}
\item \(\mathcal{E} \cap \mathit{Arg}_{\vdash}(\mathcal{S}) = \mathcal{E}_1\) and \(\mathcal{E} \cap \mathit{Arg}_{\vdash}(\mathcal{S}') = \mathcal{E}_2\).
\item \(\mathcal{E} \in \exts_\cmp(\mathcal{AF}_{\vdash}(\mathcal{S} \cup \mathcal{S}'))\).
\end{enumerate}
\end{lemma}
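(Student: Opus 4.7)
The plan is to establish item~1 first and then bootstrap item~2 from it. Two preliminary observations will be used throughout: (a) by Lemma~\ref{lem:oplus:adm} the set $\mathcal{E}_1 \cup \mathcal{E}_2$ is admissible in $\mathcal{AF}_{\vdash}(\mathcal{S} \cup \mathcal{S}')$, so it defends each of its members and in particular $\mathcal{E}_1 \cup \mathcal{E}_2 \subseteq \mathcal{E}$; and (b) $\mathcal{E}$ is closed under $\preceq$, i.e., $a \in \mathcal{E}$ and $b \preceq a$ imply $b \in \mathcal{E}$, since by Fact~\ref{fact:le:prec} every attacker of $b$ also attacks $a$ and is therefore attacked by $\mathcal{E}_1 \cup \mathcal{E}_2$.

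For item~1 I prove $\mathcal{E} \cap \mathit{Arg}_{\vdash}(\mathcal{S}) = \mathcal{E}_1$ (the $\mathcal{S}'$-case is symmetric). The inclusion $\mathcal{E}_1 \subseteq \mathcal{E} \cap \mathit{Arg}_{\vdash}(\mathcal{S})$ is immediate from~(a). For the converse, let $a \in \mathcal{E} \cap \mathit{Arg}_{\vdash}(\mathcal{S})$; by completeness of $\mathcal{E}_1$ it suffices to show that $\mathcal{E}_1$ defends $a$ in $\mathcal{AF}_{\vdash}(\mathcal{S})$. Given an attacker $b \in \mathit{Arg}_{\vdash}(\mathcal{S})$ of $a$, since $a \in \mathcal{E}$ there is some $c \in \mathcal{E}_1 \cup \mathcal{E}_2$ attacking $b$. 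If $c \in \mathcal{E}_1$, we are done.

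The main obstacle is the case $c \in \mathcal{E}_2$, where $\mathsf{Supp}(c) \subseteq \mathcal{S}'$. Here Lemma~\ref{lm:prime:att}(2) (applied with $b \in \mathit{Arg}_{\vdash}(\mathcal{S})$ as target) yields an attacker $c' = (\Delta, \psi)$ of $b$ with $\Delta \subseteq \mathcal{S} \cap \mathsf{Supp}(c) \subseteq \mathcal{S} \cap \mathcal{S}'$. Because $\mathcal{S} \mid \mathcal{S}'$, every formula in $\Delta$ must be atom-free, so $\{\psi\} \mid \Delta$ holds trivially. Applying Pre-Relevance of $\vdash$ to $\Delta \vdash \psi$ (with $\mathcal{S}_1 = \emptyset$, $\mathcal{S}_2 = \Delta$, $\phi = \psi$) yields $\emptyset \vdash \psi$. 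Thus $(\emptyset, \psi) \in \mathit{Arg}_{\vdash}(\emptyset) \subseteq \mathit{Arg}_{\vdash}(\mathcal{S})$ attacks $b$ (the same conclusion as $c'$, via the same $\phi_1 \in \widehat{\mathsf{Supp}(b)}$). Since $\widehat{\cdot}$ is undefined on the empty set, $(\emptyset, \psi)$ has no attackers, is defended vacuously, and therefore belongs to the complete extension $\mathcal{E}_1$, supplying the required defender.

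For item~2, admissibility of $\mathcal{E}$ is routine: conflict-freeness follows because an attack $a \to a'$ within $\mathcal{E}$ would, via successive defense by $\mathcal{E}_1 \cup \mathcal{E}_2$ of first $a'$ and then $a$, force a conflict inside the admissible set $\mathcal{E}_1 \cup \mathcal{E}_2$; and $\mathcal{E}_1 \cup \mathcal{E}_2 \subseteq \mathcal{E}$ makes $\mathcal{E}$ defend itself against external attacks. For closure under defense, suppose $\mathcal{E}$ defends $a$ and $b$ attacks $a$; pick $c \in \mathcal{E}$ attacking $b$ and apply Lemma~\ref{lm:prime:att}(1) to obtain a $c' \preceq c$ in $\mathit{Arg}_{\vdash}(\mathcal{S}) \cup \mathit{Arg}_{\vdash}(\mathcal{S}')$ that also attacks $b$. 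By~(b), $c' \in \mathcal{E}$, and by item~1 this forces $c' \in \mathcal{E}_1 \cup \mathcal{E}_2$. Hence $\mathcal{E}_1 \cup \mathcal{E}_2$ defends $a$, so $a \in \mathcal{E}$.
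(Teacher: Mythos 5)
Your proof is correct and follows essentially the same route as the paper's: both establish item~1 by projecting the defender of $b$ into $\mathit{Arg}_{\vdash}(\mathcal{S})$ (reducing the $c \in \mathcal{E}_2$ case to an empty-support, hence unattackable, argument that must lie in the complete $\mathcal{E}_1$), and both obtain admissibility and completeness of $\mathcal{E}$ in item~2 via Lemma~\ref{lm:prime:att} together with closure of $\mathcal{E}$ under $\preceq$. Your additional Pre-Relevance step for possible atom-free formulas in $\mathcal{S} \cap \mathcal{S}'$ is merely a slightly more careful treatment of an edge case the paper glosses over when it writes $\mathit{Arg}_{\vdash}(\mathcal{S} \cap \Lambda) = \mathit{Arg}_{\vdash}(\emptyset)$.
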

\begin{proof}
  \emph{Ad 1.} Suppose \(a \in \mathit{Arg}_{\vdash}(\mathcal{S}) \cap \mathcal{E}\). Thus, $a$ it is defended by \(\mathcal{E}_1 \cup \mathcal{E}_2\) in \(\mathcal{AF}_{\vdash}(\mathcal{S} \cup \mathcal{S}')\). Suppose some \(b \in \mathit{Arg}_{\vdash}(\mathcal{S})\) attacks \(a\). Thus, there is a \(c = (\Lambda, \sigma) \in \mathcal{E}_1 \cup \mathcal{E}_2\) that attacks \(b\). If $c \in \mathcal{E}_2$,  
  by Lemma \ref{lem:arg:prerel}, there is a \(c'  \in \mathit{Arg}_{\vdash}(\mathcal{S} \cap \Lambda) = \mathit{Arg}_{\vdash}(\emptyset)\) that attacks \(b\). Since \(c'\) has no attackers, by the completeness of \(\mathcal{E}_1\), \(c' \in \mathcal{E}_1\). Altogether this shows that \(a \in \mathsf{Defended}(\mathcal{E}_1, \mathcal{AF}_{\vdash}(\mathcal{S} \cup \mathcal{S}'))\). Again, by the completeness of \(\mathcal{E}_1\), \(a \in \mathcal{E}_1\). Thus, \(\mathcal{E} \cap \mathit{Arg}_{\vdash}(\mathcal{S}) = \mathcal{E}_1\). Analogously, \(\mathcal{E} \cap \mathit{Arg}_{\vdash}(\mathcal{S}') = \mathcal{E}_2\). This is Item 1. 
  
  \medskip

  \emph{Ad 2.} Suppose there are \(a,b \in \mathcal{E}\) such that \(a\) attacks \(b\). We know that there is a \(c \in \mathcal{E}_1 \cup \mathcal{E}_2\) that attacks \(a\). 
  Without loss of generality suppose \(c \in \mathcal{E}_1\). Thus, there is a \(d \in \mathcal{E}_1 \cup \mathcal{E}_2\) that attacks \(c\). Since by Lemma~\ref{lem:oplus:adm}, \(\mathcal{E}_1 \cup \mathcal{E}_2 \in \exts_\adm(\mathcal{AF}_{\vdash}(\mathcal{S} \cup \mathcal{S}'))\) we have reached a contradiction. Thus, \(\mathcal{E}\) is conflict-free.

  Suppose now some \(a \in \mathit{Arg}_{\vdash}(\mathcal{S} \cup \mathcal{S}')\) attacks some \(b \in \mathcal{E}\). By the definition of \(\mathcal{E}\) there is a \(c \in \mathcal{E}_1 \cup \mathcal{E}_2\) that attacks \(b\). By item 1, $c \in \mathcal{E}$. Thus, \(\mathcal{E}\) is admissible.

  For completeness assume that \(\mathcal{E}\) defends some \(a \in \mathit{Arg}_{\vdash}(\mathcal{S} \cup \mathcal{S}')\). Suppose \(b = (\Lambda, \beta) \in \mathit{Arg}_{\vdash}(\mathcal{S} \cup \mathcal{S}')\) attacks \(a\). Hence, there is a \(c \in \mathcal{E}\) that attacks \(b\). In view of Lemma~\ref{lm:prime:att} and Fact~\ref{fact:le:prec} there is a $c' \in (\mathcal{E} \cap \mathit{Arg}_{\vdash}(\mathcal{S})) \cup (\mathcal{E} \cap \mathit{Arg}_{\vdash}(\mathcal{S}'))$ that attacks $b$. By Item 1, $c' \in \mathcal{E}_1 \cup \mathcal{E}_2$ and therefore $a \in \mathsf{Defended}(\mathcal{E}_1 \cup \mathcal{E}_2, \mathcal{AF}_{\vdash}(\mathcal{S} \cup \mathcal{S}')) = \mathcal{E}$.
\end{proof}

%

\begin{lemma}
  \label{lem:prf:narrowing}
  Where \(\mathcal{S} \mid \mathcal{S}'\), \(\mathcal{E} \in \exts_\prf(\mathcal{AF}_{\vdash}(\mathcal{S} \cup \mathcal{S}'))\) and \(\mathcal{E}_1 = \mathcal{E} \cap \mathit{Arg}_{\vdash}(\mathcal{S})\), also \(\mathcal{E}_1 \in \exts_\prf(\mathcal{AF}_{\vdash}(\mathcal{S}))\).
\end{lemma}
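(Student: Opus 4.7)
The plan is to argue by contradiction, leveraging the decomposition/recomposition apparatus already set up in Lemmas \ref{lem:defdef} and \ref{lem:cmp:oplus}. First I would set $\mathcal{E}_2 = \mathcal{E} \cap \mathit{Arg}_{\vdash}(\mathcal{S}')$ and invoke Lemma \ref{lem:defdef} to conclude three things: (a) $\mathcal{E}_1 \in \mathsf{Cmp}(\mathcal{AF}_{\vdash}(\mathcal{S}))$, (b) $\mathcal{E}_2 \in \mathsf{Cmp}(\mathcal{AF}_{\vdash}(\mathcal{S}'))$, and (c) $\mathcal{E} = \mathsf{Defended}(\mathcal{E}_1 \cup \mathcal{E}_2, \mathcal{AF}_{\vdash}(\mathcal{S} \cup \mathcal{S}'))$. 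Thus $\mathcal{E}_1$ is at least complete, so the only thing left to rule out is that $\mathcal{E}_1$ is properly contained in some other complete extension of $\mathcal{AF}_{\vdash}(\mathcal{S})$.

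So I would suppose for contradiction that there is an $\mathcal{E}_1^\star \in \mathsf{Cmp}(\mathcal{AF}_{\vdash}(\mathcal{S}))$ with $\mathcal{E}_1 \subsetneq \mathcal{E}_1^\star$. Applying Lemma \ref{lem:cmp:oplus} to the pair $(\mathcal{E}_1^\star, \mathcal{E}_2)$ gives that $\mathcal{E}^\star := \mathsf{Defended}(\mathcal{E}_1^\star \cup \mathcal{E}_2, \mathcal{AF}_{\vdash}(\mathcal{S} \cup \mathcal{S}'))$ is complete in $\mathcal{AF}_{\vdash}(\mathcal{S} \cup \mathcal{S}')$, and in particular $\mathcal{E}^\star \cap \mathit{Arg}_{\vdash}(\mathcal{S}) = \mathcal{E}_1^\star$ by Item 1 of that lemma. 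The aim is then to show $\mathcal{E} \subsetneq \mathcal{E}^\star$ to contradict $\mathcal{E}$ being preferred.

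The inclusion $\mathcal{E} \subseteq \mathcal{E}^\star$ follows from the trivial monotonicity of $\mathsf{Defended}(\cdot, \mathcal{AF}_{\vdash}(\mathcal{S} \cup \mathcal{S}'))$ in its first argument, since $\mathcal{E}_1 \cup \mathcal{E}_2 \subseteq \mathcal{E}_1^\star \cup \mathcal{E}_2$. Strictness is immediate: any $a \in \mathcal{E}_1^\star \setminus \mathcal{E}_1$ lies in $\mathit{Arg}_{\vdash}(\mathcal{S})$ and, by Item 1 of Lemma \ref{lem:cmp:oplus} applied to $\mathcal{E}^\star$, belongs to $\mathcal{E}^\star$ but not to $\mathcal{E}$ (since $\mathcal{E} \cap \mathit{Arg}_{\vdash}(\mathcal{S}) = \mathcal{E}_1$). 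Hence $\mathcal{E} \subsetneq \mathcal{E}^\star$, contradicting the $\subseteq$-maximality of $\mathcal{E}$ as a complete extension.

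I do not anticipate any real obstacle here: all the heavy lifting has been done by the previous two lemmas, and the only micro-step that is not stated as a named fact is the monotonicity of $\mathsf{Defended}$ in its first argument, which follows immediately from the definition of defence. The proof is therefore essentially a bookkeeping argument built on top of Lemmas \ref{lem:defdef} and \ref{lem:cmp:oplus}.
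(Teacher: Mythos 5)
Your proposal is correct and follows essentially the same route as the paper's own proof: both decompose $\mathcal{E}$ via Lemma \ref{lem:defdef}, recombine a hypothetical larger complete extension $\mathcal{E}_1^\star$ with $\mathcal{E}_2$ via Lemma \ref{lem:cmp:oplus}, and use the (implicit) monotonicity of $\mathsf{Defended}$ in its first argument together with the maximality of $\mathcal{E}$ to force $\mathcal{E}_1^\star = \mathcal{E}_1$. The only cosmetic difference is that the paper phrases this directly (any complete superset of $\mathcal{E}_1$ collapses to $\mathcal{E}_1$) rather than as a proof by contradiction.
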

\begin{proof} Let \(\mathcal{S} \mid \mathcal{S}'\), \(\mathcal{E} \in \exts_\prf(\mathcal{AF}_{\vdash}(\mathcal{S} \cup \mathcal{S}'))\), \(\mathcal{E}_1 = \mathcal{E} \cap \mathit{Arg}_{\vdash}(\mathcal{S})\), and \(\mathcal{E}_2 = \mathcal{E} \cap \mathit{Arg}_{\vdash}(\mathcal{S}')\). By Lemma~\ref{lem:defdef}, \(\mathcal{E}_1 \in \exts_\cmp(\mathcal{AF}_{\vdash}(\mathcal{S}))\) and \(\mathcal{E}_2 \in \exts_\cmp(\mathcal{AF}_{\vdash}(\mathcal{S}'))\). Suppose \(\mathcal{E}_1' \in \exts_\cmp(\mathcal{AF}_{\vdash}(\mathcal{S}))\) for which \(\mathcal{E}_1 \subseteq \mathcal{E}_1'\). By Lemma~\ref{lem:cmp:oplus}, where \(\mathcal{E}' = \mathsf{Defended}(\mathcal{E}_1' \cup \mathcal{E}_2, \mathcal{AF}_{\vdash}(\mathcal{S} \cup \mathcal{S}'))\), \(\mathcal{E}' \in \exts_\cmp(\mathcal{AF}_{\vdash}(\mathcal{S} \cup \mathcal{S}'))\). By Lemma~\ref{lem:defdef}, \(\mathcal{E} = \mathsf{Defended}(\mathcal{E}_1 \cup \mathcal{E}_2, \mathcal{AF}_{\vdash}(\mathcal{S} \cup \mathcal{S}'))\) and thus \(\mathcal{E} \subseteq \mathcal{E}'\). Since \(\mathcal{E}\) is preferred, \(\mathcal{E} = \mathcal{E}'\) and hence \(\mathcal{E}_1 = \mathcal{E}_1'\). Thus, \(\mathcal{E}_1\) is \(\subseteq\)-maximal and \(\mathcal{E}_1 \in \exts_\prf(\mathcal{AF}_{\vdash}(\mathcal{S}))\).
\end{proof}

\begin{lemma}
\label{lem:reduct:2}
Where \(\mathcal{S} \mid \mathcal{S}'\), \(\mathcal{E}_1 \in \exts_\sem(\mathcal{AF}_{\vdash}(\mathcal{S}))\), \(\sem\in\{\cmp,\prf,\grd\}\), there is a \(\mathcal{E} \in \exts_\sem(\mathcal{AF}_{\vdash}(\mathcal{S} \cup \mathcal{S}'))\) for which \(\mathcal{E}_1 = \mathcal{E} \cap \mathit{Arg}_{\vdash}(\mathcal{S})\).
\end{lemma}
\begin{proof}
(\(\mathsf{sem}= \mathsf{cmp}\)) Let \(\mathcal{E}_2\) be arbitrary in \(\exts_\cmp(\mathcal{AF}_{\vdash}(\mathcal{S}'))\). By Lemma~\ref{lem:cmp:oplus}, \(\mathcal{E} = \mathsf{Defended}(\mathcal{E}_1 \cup \mathcal{E}_2, \mathcal{AF}_{\vdash}(\mathcal{S} \cup \mathcal{S}')) \in \exts_\cmp(\mathcal{AF}_{\vdash}(\mathcal{S} \cup \mathcal{S}'))\) and \(\mathcal{E}_1 = \mathcal{E} \cap \mathit{Arg}_{\vdash}(\mathcal{S})\). 


\medskip

(\(\mathsf{sem} = \mathsf{grd}\)) Let \(\mathcal{E}_1 = \exts_\grd(\mathcal{AF}_{\vdash}(\mathcal{S}))\), \(\mathcal{E}_2 = \exts_\grd(\mathcal{AF}_{\vdash}(\mathcal{S}'))\). Again, by Lemma \ref{lem:cmp:oplus}, \(\mathcal{E} = \mathsf{Defended}(\mathcal{E}_1 \cup \mathcal{E}_2, \mathcal{AF}_{\vdash}(\mathcal{S} \cup \mathcal{S}')) \in \exts_\cmp(\mathcal{AF}_{\vdash}(\mathcal{S} \cup \mathcal{S}'))\), \(\mathcal{E}_1 = \mathcal{E} \cap \mathit{Arg}_{\vdash}(\mathcal{S})\), and \(\mathcal{E}_2 = \mathcal{E} \cap \mathit{Arg}_{\vdash}(\mathcal{S}')\). Suppose there is a \(\mathcal{E}^{\star} \subset \mathcal{E}\) such that \(\mathcal{E}^{\star} \in \exts_\cmp(\mathcal{AF}_{\vdash}(\mathcal{S} \cup \mathcal{S}'))\). By Lemma \ref{lem:defdef}, \(\mathcal{E}^{\star} \cap \mathit{Arg}_{\vdash}(\mathcal{S}) \in \exts_\cmp(\mathcal{AF}_{\vdash}(\mathcal{S}))\) and \(\mathcal{E}^{\star} \cap \mathit{Arg}_{\vdash}(\mathcal{S}') \in \exts_\cmp(\mathcal{AF}_{\vdash}(\mathcal{S}'))\). Thus, \(\mathcal{E}^{\star} \cap \mathit{Arg}_{\vdash}(\mathcal{S}) = \exts_\grd(\mathcal{AF}_{\vdash}(\mathcal{S}))\) and \(\mathcal{E}^{\star} \cap \mathit{Arg}_{\vdash}(\mathcal{S}') = \exts_\grd(\mathcal{AF}_{\vdash}(\mathcal{S}'))\). However, by Lemma \ref{lem:defdef}, \(\mathcal{E}^{\star} = \mathsf{Defended}(\mathcal{E}_1 \cup \mathcal{E}_2, \mathcal{AF}_{\vdash}(\mathcal{S}\cup \mathcal{S}')) = \mathcal{E}\), a contradiction with the assumption that \(\mathcal{E}^{\star}\subset\mathcal{E}\). 

\medskip

($\mathsf{sem} = \mathsf{prf}$) Let $\mathcal{E}_2 \in \exts_\prf(\mathcal{AF}_{\vdash}(\mathcal{S}'))$ be arbitrary. By Lemma~\ref{lem:cmp:oplus} it is known that $\mathcal{E} = \mathsf{Defended}(\mathcal{E}_1 \cup \mathcal{E}_2, \mathcal{AF}_{\vdash}(\mathcal{S} \cup \mathcal{S}')) \in \exts_\cmp(\mathcal{AF}_{\vdash}(\mathcal{S} \cup \mathcal{S}'))$. Assume for a contradiction that there is a $\mathcal{E}' \in \exts_\prf(\mathcal{AF}_{\vdash}(\mathcal{S} \cup \mathcal{S}'))$ such that $\mathcal{E}' \supset \mathcal{E}$. Let $\mathcal{E}_1' = \mathcal{E}' \cap \mathit{Arg}_{\vdash}(\mathcal{S})$ and $\mathcal{E}_2' = \mathcal{E}' \cap \mathit{Arg}_{\vdash}(\mathcal{S}')$. By Lemma~\ref{lem:defdef}, $\mathcal{E}' = \mathsf{Defended}(\mathcal{E}_1' \cup \mathcal{E}_2', \mathcal{AF}_{\vdash}(\mathcal{S} \cup \mathcal{S}'))$. Thus, $\mathcal{E}_1 \subset \mathcal{E}_1'$ or $\mathcal{E}_2 \subset \mathcal{E}_2'$. However, since by Lemma~\ref{lem:defdef}, $\mathcal{E}_1' \in \exts_\cmp(\mathcal{AF}_{\vdash}(\mathcal{S}))$ and $\mathcal{E}_2' \in \exts_\cmp(\mathcal{AF}_{\vdash}(\mathcal{S}'))$ this is a contradiction to $\mathcal{E}_1$ and $\mathcal{E}_2$ being preferred.
\end{proof}

\noninter*

\begin{proof}
Suppose \(\mathcal{S} \cup \{\phi\} \mid \mathcal{S}'\). To avoid clutter we will omit $\calAF_\vdash$ from the notation. We consider each of the consequence relations from Definition~\ref{definition:nc} for $\sem\in\{\grd,\cmp,\prf\}$:

\begin{itemize}
  \item \emph{Skeptical consequence:} We start by showing that $\calS\:\nc_\sem^\cap\:\phi$ iff $\calS\cup\calS'\:\nc_\sem^\cap\:\phi$. 
  
  ($\Ra$) Suppose that $\calS\:\nc^\cap_\sem\:\phi$. Then there is an argument $(\Gamma,\phi)$ such that $(\Gamma,\phi)\in\bigcap\exts_\sem(\calAF_\vdash(\calS))$. Let $\ext\in\exts_\sem(\calAF_\vdash(\calS\cup\calS'))$. By Lemmas~\ref{lem:defdef} and~\ref{lem:prf:narrowing} it follows that $\ext\cap\textit{Arg}_\vdash(\calS)\in\exts_\sem(\calAF_\vdash(\calS))$. Thus $(\Gamma,\phi)\in\bigcap\exts_\sem(\calAF_\vdash(\calS\cup\calS'))$ and hence $\calS\cup\calS'\:\nc^\cap_\sem\:\phi$. 
  
  ($\Leftarrow$) Suppose that $\calS\cup\calS'\:\nc^\cap_\sem\:\phi$. Thus there is a $(\Gamma,\phi)\in\bigcap\exts_\sem(\calAF_\vdash(\calS\cup\calS'))$, hence $\Gamma\vdash\phi$. By the Pre-Relevance of $\vdash$, since $\calS\cup\{\phi\}\mid\calS'$, there is some $\Gamma'\subseteq\calS$ such that $\Gamma'\vdash\phi$ as well. Note that $(\Gamma',\phi)\in\mathit{Arg}_\vdash(\calS)$. Let $\ext_1\in\exts_\sem(\calAF_\vdash(\calS))$, by Lemma~\ref{lem:reduct:2}, there is an $\ext\in\exts_\sem(\calAF_\vdash(\calS\cup\calS'))$ such that $\ext_1 = \ext\cap\textit{Arg}_\vdash(\calS)$. Therefore $(\Gamma',\phi)\in\bigcap\exts_\sem(\calAF_\vdash(\calS))$. Thus $\calS\:\nc^\cap_\sem\:\phi$.
  
  \item \emph{Weakly skeptical consequence:} We show that \(\mathcal{S} \nc_{{\sf sem}}^\Cap \phi\) iff \(\mathcal{S} \cup \mathcal{S}' \:\nc_{{\sf sem}}^\Cap\: \phi\):
  
  (\(\Rightarrow\)) Suppose \(\mathcal{S} \cup \mathcal{S}'~ {\not\!\!\nc_{{\sf sem}}^\Cap}\: \phi\). Thus, there is an \(\mathcal{E} \in \exts_\sem(\mathcal{AF}_{\vdash}(\mathcal{S} \cup \mathcal{S}'))\) for which there is no \(a \in \mathcal{E}\) with conclusion \(\phi\). By Lemmas~\ref{lem:defdef} and \ref{lem:prf:narrowing}, \(\mathcal{E}_1 = \mathcal{E} \cap \mathit{Arg}_{\vdash}(\mathcal{S}) \in \exts_\sem(\mathcal{AF}_{\vdash}(\mathcal{S}))\). Since there is no \(a \in \mathcal{E}_1\) with conclusion \(\phi\), \(\mathcal{S} ~ {\not\!\!\nc_{{\sf sem}}^\Cap}~ \phi\). 

(\(\Leftarrow\)) Suppose \(\mathcal{S} ~ {\not\!\!\nc_{{\sf sem}}^\Cap} ~ \phi\). Thus, there is an \(\mathcal{E} \in \exts_\sem(\mathcal{AF}_{\vdash}(\mathcal{S}))\) for which there is no \(a \in \mathcal{E}\) with conclusion \(\phi\). By Lemma \ref{lem:reduct:2}, there is an \(\mathcal{E}' \in \exts_\sem(\mathcal{AF}_{\vdash}(\mathcal{S} \cup \mathcal{S}'))\) for which \(\mathcal{E}' \cap \mathit{Arg}_{\vdash}(\mathcal{S}) = \mathcal{E}\). Assume for a contradiction that there is an argument \(a \in \mathcal{E}'\) with $\mathsf{Conc}(a) = \phi$. By the Pre-Relevance of $\vdash$, there is an \(a' =(\Gamma', \phi) \in \mathit{Arg}_{\vdash}(\mathcal{S} \cap \mathsf{Supp}(a))\). By Fact \ref{fact:att:cmp}, \(a' \in \mathcal{E}\) which contradicts our main supposition. Thus, \(\mathcal{S} \cup \mathcal{S}' \not\!\!\nc_{{\sf sem}}^\Cap \phi\). 

  \item \emph{Credulous consequences:} We show that \(\mathcal{S} \:\nc_{\mathsf{sem}}^\cup\: \phi\) iff \(\mathcal{S} \cup \mathcal{S}' \:\nc_{{\sf sem}}^\cup\: \phi\).
  
  (\(\Rightarrow\)) Suppose that $\calS\:\nc_{{\sf sem}}^\cup\:\phi$. Then there is some $\ext\in\exts_\sem(\AF_\vdash(\calS))$ such that there is an $a\in\ext$ with $\conc(a) = \phi$. By Lemma~\ref{lem:reduct:2}, there is an $\ext'\in\exts_\sem(\AF_\vdash(\calS\cup\calS'))$ for which $\ext = \ext'\cap\text{Arg}_\vdash(\calS)$. Thus $a\in\ext'$, from which it follows that $\phi\in{\sf Concs}(\ext')$ and thus $\calS\cup\calS'\:\nc_{{\sf sem}}^\cup\:\phi$.
  
  (\(\Leftarrow\)) Now assume that $\calS\cup\calS'\:\nc_{{\sf sem}}^\cup\:\phi$. Then there is some $\ext\in\exts_\sem(\AF_\vdash(\calS\cup\calS'))$ such that there is an $a\in\ext$ with $\conc(a) =\phi$. By the Pre-Relenvace of $\vdash$, there is an $a' = (\Gamma',\phi)\in\text{Arg}_{\vdash}(\calS\cap{\sf Supp}(a))$. By Fact~\ref{fact:att:cmp}, \(a' \in \mathcal{E}\). By Lemmas~\ref{lem:defdef} and \ref{lem:prf:narrowing}, \(\mathcal{E}' = \mathcal{E} \cap \mathit{Arg}_{\vdash}(\mathcal{S}) \in \exts_\sem(\mathcal{AF}_{\vdash}(\mathcal{S}))\) and thus $a'\in\ext'$. It follows that there is an $\ext^*\in\exts_\sem(\AF_\vdash(\calS))$ such that $a'\in\ext^*$ and hence $\phi\in{\sf Concs}(\ext)$. Therefore $\calS\:\nc_{{\sf sem}}^\cup\:\phi$. \qedhere
\end{itemize}
\end{proof}

\subsection{Semantic Relevance}
\label{sec:org67d6cb1}

We now turn to semantic relevance, concerning Cumulativity and Extensional Cumulativity for grounded semantics (Theorem~\ref{thm:cum}) and for the more general setting where $\sem = \{\grd,\prf,\stb\}$ (Theorem~\ref{thm:CUM:con}). Note that Theorem \ref{thm:cum} is a direct consequence of Theorem \ref{thm:grd:cum} below.


\begin{remark}
  The grounded extension can also be characterized  inductively by: \(\exts_\grd(\mathcal{AF}_{\vdash}(\mathcal{S})) = \bigcup_{\alpha \ge 0} \exts_\grd^{\alpha}(\mathcal{AF}_{\vdash}(\mathcal{S}))\) such that for \(\alpha = 0\): \(\exts_\grd^0(\mathcal{AF}_{\vdash}(\mathcal{S})) = \mathsf{Defended}(\emptyset, \mathcal{AF}_{\vdash}(\mathcal{S}))\), for successor ordinals $\alpha +1$ we have that: \(\exts_\grd^{\alpha+1}(\mathcal{AF}_{\vdash}(\mathcal{S})) = \mathsf{Defended}(\exts_\grd^{\alpha}(\mathcal{AF}_{\vdash}(\mathcal{S})), \mathcal{AF}_{\vdash}(\mathcal{S}))\), and for limit ordinals $\beta$ the characterization is defined by: $\exts_\grd^{\beta}(\mathcal{AF}_{\vdash}(\mathcal{S})) = \mathsf{Defended}(\bigcup_{\alpha < \beta}\exts_\grd^{\alpha}(\mathcal{AF}_{\vdash}(\mathcal{S})), \mathcal{AF}_{\vdash}(\mathcal{S}))$.
\end{remark}




\begin{theorem}
  \label{thm:grd:cum}
  Where $\mathcal{AF}_{\vdash} = (\vdash, \overline{\cdot}, \hat{\cdot})$, \(\vdash\) satisfies Cut and \(\widehat{\cdot}\) is pointed, if \(\mathcal{S} \nc_{\sf grd}^{\mathcal{AF}_{\vdash}} \phi\) then
\begin{enumerate}
\item there is a $(\Phi,\phi) \in \exts_\grd(\mathcal{AF}_{\vdash}(\mathcal{S}))$,
\item \(\exts_\grd(\mathcal{AF}_{\vdash}(\mathcal{S})) \subseteq \exts_\grd(\mathcal{AF}_{\vdash^{+\phi}}(\mathcal{S}))\),
\item \(\exts_\grd(\mathcal{AF}_{\vdash^{+\phi}}(\mathcal{S})) \cap \mathit{Arg}_{\vdash}(\mathcal{S}) = \exts_\grd(\mathcal{AF}_{\vdash}(\mathcal{S}))\),
\item for every \(a = (\Gamma,\gamma) \in \exts_\grd(\mathcal{AF}_{\vdash^{+\phi}}(\mathcal{S})) \setminus \mathit{Arg}_{\vdash}(\mathcal{S})\), \((\Gamma \cup \Phi, \gamma) \in \exts_\grd(\mathcal{AF}_{\vdash}(\mathcal{S}))\).
\end{enumerate}
\end{theorem}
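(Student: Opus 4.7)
Item 1 is immediate: $\mathcal{S} \nc_{\sf grd} \phi$ by hypothesis, so fix $(\Phi,\phi) \in \mathcal{E} := \mathsf{Grd}(\mathcal{AF}_{\vdash}(\mathcal{S}))$. The rest hinges on a single translation. For $a = (\Gamma,\gamma) \in \mathit{Arg}_{\vdash^{+\phi}}(\mathcal{S})$, set $\tau(a) := (\Gamma \cup \Phi, \gamma)$. Applying Cut to $\Phi \vdash \phi$ and $\Gamma \vdash^{+\phi} \gamma$ yields $\Gamma \cup \Phi \vdash \gamma$, so $\tau(a) \in \mathit{Arg}_{\vdash}(\mathcal{S})$. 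Pointedness gives $\widehat{\Gamma \cup \Phi} = \widehat{\Gamma} \cup \widehat{\Phi}$, from which two observations follow. First, since $\mathsf{Conc}(\tau(a)) = \mathsf{Conc}(a)$, $\tau(a)$ attacks exactly the same arguments as $a$. Second, an argument $d$ attacks $\tau(a)$ iff it attacks $a$ or it attacks $(\Phi,\phi)$. Moreover $a \preceq \tau(a)$.

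\textbf{Item 2.} I prove by transfinite induction on $\alpha$ that $\mathsf{Grd}_\alpha(\mathcal{AF}_{\vdash}(\mathcal{S})) \subseteq \mathsf{Grd}_\alpha(\mathcal{AF}_{\vdash^{+\phi}}(\mathcal{S}))$. In the base case, an argument $a$ with no attackers in the small framework has no attackers in the large one either, since any candidate attacker $b$ yields $\tau(b) \in \mathit{Arg}_{\vdash}(\mathcal{S})$ attacking $a$ (same conclusion) — contradiction. For the successor, a large-framework attacker $b$ of $a \in \mathsf{Grd}_{\alpha+1}$ gives $\tau(b) \in \mathit{Arg}_{\vdash}(\mathcal{S})$ attacking $a$; the $\mathsf{Grd}_\alpha$-defender $c$ attacks $\tau(b)$, and the second observation splits this into either $c$ attacking $b$ directly or $c$ attacking $(\Phi,\phi)$. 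The latter is impossible by conflict-freeness of $\mathcal{E}$, so $c$ attacks $b$, and the IH places $c$ in $\mathsf{Grd}_\alpha(\mathcal{AF}_{\vdash^{+\phi}}(\mathcal{S}))$. The limit case is routine.

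\textbf{Items 3 and 4.} Both follow from the central claim: for every $a \in \mathcal{E}' := \mathsf{Grd}(\mathcal{AF}_{\vdash^{+\phi}}(\mathcal{S}))$, $\tau(a) \in \mathcal{E}$. I prove this by transfinite induction on the stage at which $a$ appears in $\mathcal{E}'$. Given any $d \in \mathit{Arg}_{\vdash}(\mathcal{S})$ attacking $\tau(a)$, the second observation yields two cases: if $d$ attacks $a$ in the large framework, an $\mathcal{E}'_\alpha$-defender $c$ of $a$ gives $\tau(c) \in \mathcal{E}$ (by IH) attacking $d$; if $d$ attacks $(\Phi,\phi)$, then $\mathcal{E}$ defends $(\Phi,\phi)$ (it is complete and $(\Phi,\phi) \in \mathcal{E}$), supplying an $\mathcal{E}$-counter-attacker of $d$. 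In either case $\tau(a)$ is defended by $\mathcal{E}$ in $\mathcal{AF}_{\vdash}$, and completeness of $\mathcal{E}$ places $\tau(a) \in \mathcal{E}$. Item 4 is then just the claim restricted to $a \notin \mathit{Arg}_{\vdash}(\mathcal{S})$. For Item 3, the $\supseteq$ inclusion is Item 2, while $\subseteq$ follows from the claim together with $a \preceq \tau(a)$ and Fact \ref{fact:att:cmp}: any $a \in \mathcal{E}' \cap \mathit{Arg}_{\vdash}(\mathcal{S})$ satisfies $a \preceq \tau(a) \in \mathcal{E}$, so $a \in \mathcal{E}$.

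\textbf{Main obstacle.} The sole technical subtlety is the second case in both inductions: attacks on $\tau(a)$ that target the adjoined $\widehat{\Phi}$ rather than $\widehat{\Gamma}$. Such attacks have no direct analogue in the original framework, and they cannot be handled by $\tau$ alone. The resolution exploits exactly the hypothesis of the theorem: $(\Phi,\phi) \in \mathcal{E}$, which by conflict-freeness rules out the bad case in Item 2 and, by completeness of $\mathcal{E}$, supplies the missing defender in the claim powering Items 3 and 4.
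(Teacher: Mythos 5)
Your proposal is correct and follows essentially the same route as the paper's proof: transfinite induction along the stages of the grounded extension, using Cut to lift arguments of $\mathcal{AF}_{\vdash^{+\phi}}(\mathcal{S})$ into $\mathit{Arg}_{\vdash}(\mathcal{S})$ by adjoining $\Phi$, and using pointedness to split attacks on $\widehat{\Gamma\cup\Phi}$ into attacks on $\widehat{\Gamma}$ versus $\widehat{\Phi}$, with the latter case disposed of by conflict-freeness of, respectively defense by, the grounded extension containing $(\Phi,\phi)$. Your packaging of the translation as $\tau$ with its two observations, and the derivation of Item 3's $\subseteq$-direction from the claim plus $a \preceq \tau(a)$ and Fact~\ref{fact:att:cmp}, are only cosmetic streamlinings of the paper's case analysis.
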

\begin{proof}
  \emph{Ad 1.} This is due to the fact that $\mathcal{S} \nc_{\sf grd}^{\mathcal{AF}_{\vdash}} \phi$.
  
  \emph{Ad 2.} We give an inductive proof. 
  
  (Base) Let \(a \in \exts_\grd^0(\mathcal{AF}_{\vdash}(\mathcal{S}))\). Suppose some \(b = (\Gamma, \gamma) \in \mathit{Arg}_{\vdash^{+\phi}}(\mathcal{S})\) attacks \(a\). Thus, \(b \notin \mathit{Arg}_{\vdash}(\mathcal{S})\). Thus, \(b' = (\Gamma \cup \Phi, \gamma) \in \mathit{Arg}_{\vdash}(\mathcal{S})\) by Cut. This is a contradiction since \(b'\) attacks \(a\). So $a$ has no attackers in $\mathit{Arg}_{\vdash^{+\phi}}$ and so $a \in \exts_\grd(\mathcal{AF}_{\vdash^{+\phi}}(\mathcal{S}))$.

  (Step) We consider a successor ordinal $\alpha + 1$. 
  Let \(a \in \exts_\grd^{\alpha+1}(\mathcal{AF}_{\vdash}(\mathcal{S}))\). Suppose \(b = (\Gamma, \gamma) \in \mathit{Arg}_{\vdash^{+\phi}}(\mathcal{S})\) attacks \(a\). If \(b \in \mathit{Arg}_{\vdash}(\mathcal{S})\) there is a \(c \in \exts_\grd^{\alpha}(\mathcal{AF}_{\vdash}(\mathcal{S}))\) that attacks \(b\). By the inductive hypothesis (IH), \(c \in \exts_\grd(\mathcal{AF}_{\vdash^{+\phi}}(\mathcal{S}))\). Otherwise, by Cut \(b' = (\Gamma \cup \Phi,\gamma) \in \mathit{Arg}_{\vdash}(\mathcal{S})\) and $b'$ attacks \(a\). Thus, there is a \(d \in \exts_\grd^\alpha(\mathcal{AF}_{\vdash}(\mathcal{S}))\) that attacks \(b'\) in some \(\beta \in \widehat{\Gamma \cup \Phi}\). Since \(~\widehat{\cdot}~\) is pointed, \(\beta \in \widehat{\Gamma} \cup \widehat{\Phi}\). Since \((\Phi,\phi) \in \exts_\grd(\mathcal{AF}_{\vdash}(\mathcal{S}))\), \(\beta \in \widehat{\Gamma}\) and hence \(d\) attacks \(b\) in \(\mathcal{AF}_{\vdash^{+\phi}}(\mathcal{S})\). By IH, $d \in \exts_\grd(\mathcal{AF}_{\vdash^{+\phi}}(\mathcal{S}))$. Altogether this shows that $a$ is defended by $\exts_\grd(\mathcal{AF}_{\vdash^{+\phi}}(\mathcal{S}))$ and thus $a \in \exts_\grd(\mathcal{AF}_{\vdash^{+\phi}}(\mathcal{S}))$.

  The case for limit ordinals $\alpha'$ is analogous.


  \smallbreak

  \emph{Ad 3 and 4.} We show both simultaneously via induction. 
  
  (Base) Let \(a = (\Gamma, \gamma) \in  \exts_\grd^0(\mathcal{AF}_{\vdash^{+\phi}}(\mathcal{S}))\). Suppose first that \(a \in \mathit{Arg}_{\vdash}(\mathcal{S})\). Since \(\mathit{Arg}_{\vdash}(\mathcal{S}) \subseteq \mathit{Arg}_{\vdash^{+\phi}}(\mathcal{S})\), there are no attackers of \(a\) in \(\mathit{Arg}_{\vdash}(\mathcal{S})\) and hence \(a \in \exts_\grd^0(\mathcal{AF}_{\vdash}(\mathcal{S}))\). Suppose now that \(a \notin \mathit{Arg}_{\vdash}(\mathcal{S})\). By Cut, \(a' = (\Gamma \cup \Phi, \gamma) \in \mathit{Arg}_{\vdash}(\mathcal{S})\). Suppose some \(b \in \mathit{Arg}_{\vdash}(\mathcal{S})\) attacks \(a'\) in some \(\beta \in \widehat{\Gamma \cup \Phi}\). By the pointedness of~~\(\widehat{\cdot}\), \(\beta \in \widehat{\Gamma} \cup \widehat{\Phi}\). Note that \(\beta \notin \widehat{\Gamma}\) since otherwise \(b\) attacks \(a\) but \(a\) has no attackers. Thus, \(\beta \in \widehat{\Phi}\). Hence, \(b\) attacks \((\Phi,\phi)\) and is thus attacked by \(\exts_\grd(\mathcal{AF}_{\vdash}(\mathcal{S}))\). Thus, $a'$ is defended by $\exts_\grd(\mathcal{AF}_{\vdash}(\mathcal{S}))$ and so $a' \in \exts_\grd(\mathcal{AF}_{\vdash}(\mathcal{S}))$.

  (Step) We consider a successor ordinal $\alpha + 1$. Let \(a = (\Gamma, \gamma) \in \exts_\grd^{\alpha+1}(\mathcal{AF}_{\vdash_{+\phi}}(\mathcal{S}))\). Suppose first that \(a \in \mathit{Arg}_{\vdash}(\mathcal{S})\). Suppose some \(b \in \mathit{Arg}_{\vdash}(\mathcal{S})\) attacks \(a\). Thus, there is a \(c = (\Lambda, \sigma)\in \exts_\grd^\alpha(\mathcal{AF}_{\vdash^{+\phi}}(\mathcal{S}))\) that attacks \(b\). By the inductive hypothesis, if \(c \in \mathit{Arg}_{\vdash}(\mathcal{S})\), \(c \in \exts_\grd(\mathcal{AF}_{\vdash}(\mathcal{S}))\) and otherwise \((\Lambda \cup \Phi, \sigma) \in \exts_\grd(\mathcal{AF}_{\vdash}(\mathcal{S}))\). In either case \(\exts_\grd(\mathcal{AF}_{\vdash}(\mathcal{S}))\) defends \(a\) from the attacker and thus \(a \in \exts_\grd(\mathcal{AF}_{\vdash}(\mathcal{S}))\). Suppose now that \(a \notin \mathit{Arg}_{\vdash}(\mathcal{S})\). By Cut, \(a' = (\Gamma \cup \Phi, \gamma) \in \mathit{Arg}_{\vdash}(\mathcal{S})\). Suppose some \(b \in \mathit{Arg}_{\vdash}(\mathcal{S})\) attacks \(a'\) in some \(\beta \in \widehat{\Gamma \cup \Phi}\). By the pointedness of \(\widehat{\cdot}\), \(\beta \in \widehat{\Gamma} \cup \widehat{\Phi}\). If \(\beta \in \widehat{\Phi}\), \(b\) attacks \((\Phi,\phi)\) and is thus attacked by \(\exts_\grd(\mathcal{AF}_{\vdash}(\mathcal{S}))\). If \(\beta \in \widehat{\Gamma}\), \(b\) attacks \(a\). Thus, there is a \(c = (\Lambda, \sigma)\in \exts_\grd^\alpha(\mathcal{AF}_{\vdash^{+\phi}}(\mathcal{S}))\) that attacks \(b\). By the inductive hypothesis, if \(c \in \mathit{Arg}_{\vdash}(\mathcal{S})\), \(c \in \exts_\grd(\mathcal{AF}_{\vdash}(\mathcal{S}))\) and otherwise \((\Lambda \cup \Phi, \sigma) \in \exts_\grd(\mathcal{AF}_{\vdash}(\mathcal{S}))\). In either case \(\exts_\grd(\mathcal{AF}_{\vdash}(\mathcal{S}))\) defends \(a\) from the attacker and thus \(a \in \exts_\grd(\mathcal{AF}_{\vdash}(\mathcal{S}))\).

  The case for limit ordinals $\alpha'$ is analogous.
\end{proof}

Our previous result does not generalize to preferred semantics or to $\vdash$ that do not satisfy Cut. We give two examples.
\begin{example}[\cite{Mak03}]
\label{xmpl:ASPIC:CUM:1}
Consider an ASPIC framework with defeasible rules \(\mathcal{D} = \{ n_0: \top \Rightarrow p; \:\: n_1: p \vee q \Rightarrow \neg p\}\), facts $\mathcal{P} = \emptyset$, the strict rules induced by classical logic (see Example~\ref{xmpl:aspic}), and $\overline{\phi} = \psi$ if $\phi = \neg \psi$ and $\overline{\phi} = \neg \phi$ else. Consider the ASPIC-arguments \(\mathsf{a}_0 = \top \Rightarrow p\); \(\mathsf{a} = \mathsf{a}_0 \rightarrow (p \vee q)\) and \(\mathsf{b} = \mathsf{a} \Rightarrow \neg p\). With \((\ddagger_{\mathsf{aspic}})\) we have the arguments \(a_0 = (\{n_0, \top \Rightarrow p, p\}, p)\), \(a = (\{n_0, \top \Rightarrow p, p\}, p \vee q)\) and \(b = (\{n_0, n_1, \top \Rightarrow p, p,  p \vee q \Rightarrow \neg p, \neg p\}, \neg p)\) in \(\mathcal{AF}_{\vdash}(\mathcal{S})\) where \(\mathcal{S} = \{ n_0, \top \Rightarrow p, n_1, p \vee q \Rightarrow \neg p, p, \neg p\}\). Note that \(b\) attacks \(a_0, a\) and \(b\) while \(a_0\) attacks \(b\). Thus, the only preferred extension contains both \(a_0\) and \(a\) which means that \(\mathcal{S}\: \nc_{\cap\mathsf{prf}}\: p\) and \(\mathcal{S}\: \nc_{\cap\mathsf{prf}}\: p \vee q\). Once we move to \(\vdash^{+(p \vee q)}\) we also have the argument \(c = (\{n_1, p \vee q \Rightarrow \neg p, \neg p\}, \neg p)\) attacking \(a_0\). It is easy to see that now \(\mathcal{S} ~ {\not\!\!\nc_{\cap\mathsf{prf}}^{+(p \vee q)}} ~ p\).
\end{example}

\begin{example}
  We now consider the same example but with ${\vdash_{\star}} = \left\{ (\Gamma,\phi) \in {\vdash} \mid \Gamma\allowbreak\mbox{ is} \right. \allowbreak\left.\mathsf{CL}\mbox{-consistent} \right\}$ and grounded extension. Unlike Example~\ref{xmpl:ASPIC:CUM:1}, \(b\) is not anymore  in \(\mathcal{AF}_{\vdash_{\star}}(\mathcal{S})\). Thus, \(\mathcal{S}\: \nc_{\mathsf{grd}}^{\mathcal{AF}_{\vdash_{\star}}} \:p\) and \(\mathcal{S}\: \nc_{\mathsf{grd}}^{\mathcal{AF}_{\vdash_{\star}}}\: p \vee q\). Once we move to \(\vdash_{\star}^{+(p \vee q)}\), \(c = (\{n_{1}, p \vee q \Rightarrow \neg p, \neg p\}, \neg p)\) again attacks \(a_0\) and \(a\) and thus, \(\mathcal{S} ~{\not\!\!\nc_{\mathsf{grd}}^{\mathcal{AF}^{+(p \vee q)}_{\vdash_{\star}}}}~ p\). Note that $\vdash_{\star}$ does not satisfy Cut.
\end{example}


We now turn to the proof of Theorem~\ref{thm:CUM:con}. 

\begin{definition}
Let \(\mathcal{AF}_{\vdash} = \langle \vdash, \overline{\cdot}, \mathsf{id} \rangle\). A set \(\Theta \subseteq \mathcal{S}\) is 
\emph{maximal \(\mathcal{AF}_{\vdash}(\mathcal{S})\)-consistent} iff there is no \(\mathcal{AF}_{\vdash}(\mathcal{S})\)-consistent \(\Theta' \subseteq \mathcal{S}\) such \(\Theta \subset \Theta'\). We write \(\mathsf{CS}(\mathcal{AF}_{\vdash}(\mathcal{S}))\) [\(\mathsf{MCS}(\mathcal{AF}_{\vdash}(\mathcal{S}))\)] for all [maximal] \(\mathcal{AF}_{\vdash}(\mathcal{S})\)-consistent sets.
\end{definition}

%


\begin{lemma}
\label{lm:MCS:1}
Where \(\mathcal{S} \subseteq \mathcal{L}\), \(\mathcal{AF}_{\vdash} = (\vdash, \overline{\cdot}, \mathsf{id})\) is contrapositable, \(\vdash\) satisfies Cut, and \(\Theta \in \mathsf{MCS}(\mathcal{AF_{\vdash}}(\mathcal{S}))\), \(\mathit{Arg}_{\vdash}(\Theta) \in \exts_\sem(\mathcal{AF}_{\mathsf{con}}(\mathcal{S}))\).
\end{lemma}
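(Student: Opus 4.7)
The plan is to exhibit $\mathcal{E} := \mathit{Arg}_{\vdash}(\Theta)$ as a stable extension of $\mathcal{AF}_{\mathsf{con}}(\mathcal{S})$ by verifying: (a) $\mathcal{E} \subseteq \mathit{Arg}_{\vdash_{\mathsf{con}}}(\mathcal{S})$, (b) $\mathcal{E}$ is conflict-free, and (c) $\mathcal{E}$ attacks every $a \in \mathit{Arg}_{\vdash_{\mathsf{con}}}(\mathcal{S}) \setminus \mathcal{E}$. Admissibility then comes for free, since by (b) any attacker of an element of $\mathcal{E}$ lies outside $\mathcal{E}$ and hence is counter-attacked by (c). Item (a) is straightforward: since subsets of consistent sets are consistent (any inconsistency witness sitting inside $\Gamma \subseteq \Theta$ also witnesses inconsistency of $\Theta$), every $(\Gamma, \gamma) \in \mathcal{E}$ has a consistent support and therefore lies in $\mathit{Arg}_{\vdash_{\mathsf{con}}}(\mathcal{S})$.

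For (b), suppose toward contradiction that $a = (\Gamma_1, \gamma_1) \in \mathcal{E}$ attacks $b = (\Gamma_2, \gamma_2) \in \mathcal{E}$. Then $\gamma_1 \in \overline{\delta}$ for some $\delta \in \Gamma_2 \subseteq \Theta$. Contrapositability applied to $\Gamma_1 \vdash \gamma_1$ supplies, for each $\sigma \in \Gamma_1$, a derivation $(\Gamma_1 \cup \{\delta\}) \setminus \{\sigma\} \vdash \sigma'$ with $\sigma' \in \overline{\sigma}$, exhibiting $\Gamma_1 \cup \{\delta\} \subseteq \Theta$ as inconsistent and contradicting maximal consistency of $\Theta$. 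The degenerate case $\Gamma_1 = \emptyset$ is dispatched directly: $\emptyset \vdash \gamma_1 \in \overline{\delta}$ already witnesses that $\{\delta\} \subseteq \Theta$ is inconsistent.

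For (c), let $a = (\Gamma, \gamma) \in \mathit{Arg}_{\vdash_{\mathsf{con}}}(\mathcal{S}) \setminus \mathcal{E}$ and pick $\delta \in \Gamma \setminus \Theta$. Maximality of $\Theta$ forces $\Theta \cup \{\delta\}$ to be inconsistent, witnessed by some $\Theta' \subseteq \Theta \cup \{\delta\}$ together with $\sigma \in \Theta'$ satisfying $\Theta' \setminus \{\sigma\} \vdash \sigma'$ for some $\sigma' \in \overline{\sigma}$. Consistency of $\Theta$ forces $\delta \in \Theta'$. If $\sigma = \delta$, then $\Theta' \setminus \{\delta\} \subseteq \Theta$ and $\Theta' \setminus \{\delta\} \vdash \delta'$ directly give an attacker $(\Theta' \setminus \{\delta\}, \delta') \in \mathcal{E}$. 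Otherwise, contrapositability applied to $\Theta' \setminus \{\sigma\} \vdash \sigma'$ with the choice $\tau := \delta$ (note $\delta \in \Theta' \setminus \{\sigma\}$) yields $\Theta' \setminus \{\delta\} \vdash \delta'$ for some $\delta' \in \overline{\delta}$, again providing an attacker in $\mathcal{E}$.

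The main technical step is isolating $\delta$ as the locus of the induced inconsistency in (c); contrapositability is precisely the tool that lets us swap any chosen element into the ``attacked'' position, thereby identifying the outside-$\Theta$ premise $\delta$ as the culprit. Cut does not appear to be needed for this particular proof, although it is presumably invoked when the lemma is deployed in the proof of Theorem~\ref{thm:CUM:con}.
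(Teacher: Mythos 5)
Your proof is correct and follows essentially the same route as the paper's: conflict-freeness via contraposing an in-$\Theta$ attack into a witness of $\Theta$'s inconsistency, and stability by locating a premise $\delta \in \mathsf{Supp}(a)\setminus\Theta$, using maximality to get an inconsistency witness for $\Theta\cup\{\delta\}$, and contraposing (in the $\sigma\neq\delta$ case) to manufacture an attacker in $\mathit{Arg}_{\vdash}(\Theta)$. Your explicit handling of $\mathcal{E}\subseteq\mathit{Arg}_{\vdash_{\mathsf{con}}}(\mathcal{S})$, of the $\Gamma_1=\emptyset$ degenerate case, and of the admissibility step are small presentational additions; your observation that Cut is not needed here also matches the paper's proof, which likewise never invokes it.
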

\begin{proof}
Suppose \(\Theta \in \mathsf{MCS}(\mathcal{AF}_{\vdash}(\mathcal{S}))\). To show that \(\mathcal{E} = \mathit{Arg}_{\vdash}(\Theta)\) is conflict-free assume there are \(a,b \in \mathcal{E}\) such that \(a = (\Gamma, \gamma)\) attacks \(b = (\Lambda, \sigma)\) where \(\gamma \in \overline{\lambda}\) and \(\lambda \in \Lambda\). Then \(\Gamma \cup \{\lambda\} \subseteq \Theta\). If \(\lambda \in \Gamma\) then by contraposition, \(\Gamma \setminus \{\lambda\} \vdash \gamma'\) for some \(\gamma' \in \overline{\lambda}\) and hence \(\Theta \notin \mathsf{CS}(\mathcal{AF}_{\vdash}(\mathcal{S}))\). If \(\lambda \notin \Gamma\), also \(\Theta \notin \mathsf{CS}(\mathcal{AF}_{\vdash}(\mathcal{S}))\). We have reached a contradiction.

Consider some \(b = (\Lambda, \sigma) \in \mathit{Arg}_{\mathsf{con}}(\mathcal{S}) \setminus \mathcal{E}\). Then \(\Theta \cup \{\beta\}\) is \(\mathcal{AF}_{\vdash}(\mathcal{S})\)-inconsistent for some \(\beta \in \Lambda\). Thus, there is a \(\Theta' \subseteq \Theta\) for which \((\Theta' \cup \{\beta\}) \setminus \{\gamma\} \vdash \sigma'\) where \(\sigma' \in \overline{\gamma}\) for some \(\gamma \in \Theta \cup \{\beta\}\). If \(\gamma = \beta\), \(\Theta' \setminus \{\gamma\} \vdash \sigma'\) where \(\sigma' \in \overline{\beta}\). In this case let \(a' = (\Theta' \setminus \{\gamma\}, \sigma') \in \mathcal{E}\). If \(\gamma \neq \beta\), by contraposition \(\Theta' \cup \{\gamma\} \vdash \beta'\) for some \(\beta' \in \overline{\beta}\). In this case let \(a' = (\Theta' \cup \{\gamma\}, \beta') \in \mathcal{E}\). Since in any case \(a'\) attacks \(b\), \(\mathcal{E}\) is stable.
\end{proof}

\begin{lemma}
\label{lm:MCS:2}
Where \(\mathcal{AF}_{\vdash} = (\vdash, \overline{\cdot}, \mathsf{id})\) is contrapositable, \(\vdash\) satisfies Cut, and \(\mathcal{E} \in \exts_\prf(\mathcal{AF}_{\mathsf{con}}(\mathcal{S}))\), \(\bigcup \{ \mathsf{Supp}(a) \mid a \in \mathcal{E}\} \in \mathsf{MCS}(\mathcal{AF}_{\vdash}(\mathcal{S}))\).
\end{lemma}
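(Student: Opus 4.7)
The plan is to set $\Theta := \bigcup\{\mathsf{Supp}(a) \mid a \in \mathcal{E}\}$ and to establish in sequence that $\Theta$ is $\mathcal{AF}_{\vdash}(\mathcal{S})$-consistent and that it is maximal such.

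For consistency, I argue by contradiction. Suppose $\Theta$ is inconsistent and pick an inclusion-minimal finite witness $\Theta' \subseteq \Theta$: there are $\gamma \in \Theta'$ and $\gamma' \in \overline{\gamma}$ with $\Theta' \setminus \{\gamma\} \vdash \gamma'$, and by the minimality of $\Theta'$ the set $\Theta' \setminus \{\gamma\}$ is itself $\mathcal{AF}_{\vdash}(\mathcal{S})$-consistent. Hence $b = (\Theta' \setminus \{\gamma\}, \gamma') \in \mathit{Arg}_{\mathsf{con}}(\mathcal{S})$. Since $\gamma \in \Theta$, there is some $a \in \mathcal{E}$ with $\gamma \in \mathsf{Supp}(a)$, and $b$ attacks this $a$ (using $\widehat{\cdot} = \mathsf{id}$). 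Admissibility of $\mathcal{E}$ yields $c \in \mathcal{E}$ attacking $b$, and so $\mathsf{Conc}(c) \in \overline{\phi}$ for some $\phi \in \Theta' \setminus \{\gamma\} \subseteq \Theta$. But then $\phi \in \mathsf{Supp}(d)$ for some $d \in \mathcal{E}$, so $c$ also attacks $d$, contradicting conflict-freeness of $\mathcal{E}$.

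For maximality, extend the now-consistent $\Theta$ by Zorn's lemma to a maximal $\mathcal{AF}_{\vdash}(\mathcal{S})$-consistent set $\Theta^* \supseteq \Theta$. By Lemma~\ref{lm:MCS:1}, $\mathit{Arg}_{\vdash}(\Theta^*)$ is a stable extension of $\mathcal{AF}_{\mathsf{con}}(\mathcal{S})$, and in particular it is complete and $\subseteq$-maximal among complete extensions (hence preferred). Since $\mathsf{Supp}(a) \subseteq \Theta \subseteq \Theta^*$ for every $a \in \mathcal{E}$, we have $\mathcal{E} \subseteq \mathit{Arg}_{\vdash}(\Theta^*)$, and since $\mathcal{E}$ is itself preferred this forces $\mathcal{E} = \mathit{Arg}_{\vdash}(\Theta^*)$. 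Taking unions of supports on both sides gives $\Theta = \bigcup\{\mathsf{Supp}(a) \mid a \in \mathit{Arg}_{\vdash}(\Theta^*)\} = \Theta^*$, the second equality holding because every $\psi \in \Theta^*$ appears in the support of the singleton argument $(\{\psi\},\psi) \in \mathit{Arg}_{\vdash}(\Theta^*)$.

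I expect the main technical friction to come from the minimality argument in the consistency step — verifying carefully that $\Theta' \setminus \{\gamma\}$ is indeed consistent, so that $b$ lies in $\mathit{Arg}_{\mathsf{con}}(\mathcal{S})$ — together with the implicit appeal to singleton reflexivity of $\vdash$ at the end of the maximality argument; both are routine under the conventions in force (and certainly hold in the concrete logics of Examples~\ref{xmpl:CL:id}, \ref{xmpl:RM}, and \ref{xmpl:mcs}), but deserve to be made explicit.
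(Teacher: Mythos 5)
Your proof is correct and follows essentially the same route as the paper's: the same minimal-inconsistent-witness argument (yielding an attacker $b \in \mathit{Arg}_{\mathsf{con}}(\mathcal{S})$ whose defender would violate conflict-freeness) for consistency, and the same extension to a maximal consistent superset combined with Lemma~\ref{lm:MCS:1} and the $\subseteq$-maximality of $\mathcal{E}$ for maximality. Your closing remark about the implicit appeal to singleton reflexivity in recovering $\Theta^*$ from the supports is a fair observation; the paper glosses over the same point.
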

\begin{proof}
Let \(\mathcal{E} \in \exts_\prf(\mathcal{AF}_{\mathsf{con}}(\mathcal{S}))\). Suppose \(\Lambda = \bigcup\{\mathsf{Supp}(a) \mid a \in \mathcal{E}\} \notin \mathsf{CS}(\mathcal{AF}_{\vdash}(\mathcal{S}))\). Thus, there is a \(\subseteq\)-minimal \(\Theta \subseteq \Lambda\) such that \(\Theta \setminus \{\gamma\} \vdash \psi\) where \(\psi \in \overline{\gamma}\) and \(\gamma \in \Theta\). Thus, \(\Theta \setminus \{\gamma\} \in \mathsf{CS}(\mathcal{AF}_{\vdash}(\mathcal{S}))\) and hence \(b = (\Theta \setminus \{\gamma\}, \psi) \in \mathit{Arg}_{\mathsf{con}}(\mathcal{S})\). Also, since \(\Theta \subseteq \Lambda\), there is a \(c \in \mathcal{E}\) for which \(\gamma \in \mathsf{Supp}(c)\). Then, \(b\) attacks \(c\). Hence, there is a \(d = (\Delta, \delta') \in \mathcal{E}\) that attacks \(b\) such that \(\delta' \in \overline{\delta}\) for some \(\delta \in \Theta \setminus \{\gamma\}\). Since \(\Theta \subseteq \Lambda\), there is an \(e \in \mathcal{E}\) with \(\delta \in \mathsf{Supp}(e)\). Thus, \(d\) attacks \(e\) which contradicts the conflict-freeness of \(\mathcal{E}\). Thus, \(\Lambda \in \mathsf{CS}(\mathcal{AF}_{\vdash}(\mathcal{S}))\).

Since \(\Lambda \in \mathsf{CS}(\mathcal{AF}_{\vdash}(\mathcal{S}))\), there is a \(\Theta \in \mathsf{MCS}(\mathcal{AF}_{\vdash}(\mathcal{S}))\) for which \(\Lambda  \subseteq \Theta\). By Lemma \ref{lm:MCS:1}, \(\mathit{Arg}_{\vdash}(\Theta) \in \exts_\stb(\mathcal{AF}_{\mathsf{con}}(\mathcal{S}))\) and hence \(\mathit{Arg}_{\vdash}(\Theta) = \mathcal{E}\) by the \(\subseteq\)-max\-i\-mal\-i\-ty of \(\mathcal{E}\). Thus, \(\Lambda = \Theta \in \mathsf{MCS}(\mathcal{AF}_{\vdash}(\mathcal{S}))\).
\end{proof}

\begin{corollary}
\label{cor:stb:prf:contra}
For a set of formulas \(\mathcal{S} \subseteq \mathcal{L}\) and where \(\mathcal{AF}_{\vdash} = (\vdash, \overline{\cdot}, \mathsf{id})\) is contrapositable and \(\vdash\) satisfies Cut, \(\exts_\stb(\mathcal{AF}_{\mathsf{con}}(\mathcal{S})) = \exts_\prf(\mathcal{AF}_{\mathsf{con}}(\mathcal{S}))\).
\end{corollary}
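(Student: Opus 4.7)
The plan is to prove the two inclusions separately, leveraging Lemmas~\ref{lm:MCS:1} and~\ref{lm:MCS:2} together with the standard Dung-semantical fact that every stable extension is preferred.

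For the inclusion $\mathsf{Stb}(\mathcal{AF}_{\mathsf{con}}(\mathcal{S})) \subseteq \mathsf{Prf}(\mathcal{AF}_{\mathsf{con}}(\mathcal{S}))$, I would simply invoke the general Dung-theoretic fact: any stable extension is $\subseteq$-maximal admissible and hence preferred. This requires no use of contraposition or Cut and is immediate from Def.~\ref{definition:dung}.

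For the converse, let $\mathcal{E} \in \mathsf{Prf}(\mathcal{AF}_{\mathsf{con}}(\mathcal{S}))$ and set $\Lambda = \bigcup \{\mathsf{Supp}(a) \mid a \in \mathcal{E}\}$. By Lemma~\ref{lm:MCS:2}, $\Lambda \in \mathsf{MCS}(\mathcal{AF}_{\vdash}(\mathcal{S}))$, and then by Lemma~\ref{lm:MCS:1}, $\mathit{Arg}_{\vdash}(\Lambda) \in \mathsf{Stb}(\mathcal{AF}_{\mathsf{con}}(\mathcal{S}))$. Note that every subset of $\Lambda$ is $\mathcal{AF}_{\vdash}(\mathcal{S})$-consistent (as subsets of a consistent set remain consistent), so $\mathit{Arg}_{\vdash}(\Lambda) \subseteq \mathit{Arg}_{\mathsf{con}}(\mathcal{S})$, and clearly $\mathcal{E} \subseteq \mathit{Arg}_{\vdash}(\Lambda)$ by construction of $\Lambda$. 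Since $\mathit{Arg}_{\vdash}(\Lambda)$ is stable, it is in particular admissible; combining this with $\mathcal{E} \subseteq \mathit{Arg}_{\vdash}(\Lambda)$ and the $\subseteq$-maximality of the preferred extension $\mathcal{E}$ among admissible sets yields $\mathcal{E} = \mathit{Arg}_{\vdash}(\Lambda)$, which is stable. Hence $\mathcal{E} \in \mathsf{Stb}(\mathcal{AF}_{\mathsf{con}}(\mathcal{S}))$.

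The only subtle step is the verification that $\mathcal{E} \subseteq \mathit{Arg}_{\vdash}(\Lambda)$ actually holds in the relevant framework $\mathcal{AF}_{\mathsf{con}}(\mathcal{S})$: every $a = (\Gamma, \gamma) \in \mathcal{E}$ satisfies $\Gamma \subseteq \Lambda$ by the definition of $\Lambda$, and $\Gamma \vdash \gamma$ since $a \in \mathit{Arg}_{\mathsf{con}}(\mathcal{S})$, so $a \in \mathit{Arg}_{\vdash}(\Lambda)$. I do not expect any real obstacle here; the corollary is essentially a bookkeeping combination of the two lemmas with the maximality clause of the preferred-semantics definition.
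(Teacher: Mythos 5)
Your proof is correct and follows essentially the same route as the paper's: one direction is the standard Dung fact that stable implies preferred, and the other applies Lemma~\ref{lm:MCS:2} to get $\Lambda \in \mathsf{MCS}(\mathcal{AF}_{\vdash}(\mathcal{S}))$, Lemma~\ref{lm:MCS:1} to get $\mathit{Arg}_{\vdash}(\Lambda) \in \mathsf{Stb}(\mathcal{AF}_{\mathsf{con}}(\mathcal{S}))$, and then $\subseteq$-maximality of $\mathcal{E}$ to conclude $\mathcal{E} = \mathit{Arg}_{\vdash}(\Lambda)$. Your extra checks (that $\mathit{Arg}_{\vdash}(\Lambda) \subseteq \mathit{Arg}_{\mathsf{con}}(\mathcal{S})$ and $\mathcal{E} \subseteq \mathit{Arg}_{\vdash}(\Lambda)$) are details the paper leaves implicit, and they hold.
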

\begin{proof}
Suppose \(\mathcal{E} \in \exts_\prf(\mathcal{AF}_{\mathsf{con}}(\mathcal{S}))\). By Lemma \ref{lm:MCS:2}, \(\Theta = \bigcup \left\{ \mathsf{Supp}(a) \mid a \in \mathcal{E} \right\} \in \mathsf{MCS}(\mathcal{AF}_{\vdash}(\mathcal{S}))\). By Lemma \ref{lm:MCS:1}, \(\mathit{Arg}_{\vdash}(\Theta) \in \exts_\stb(\mathcal{AF}_{\mathsf{con}}(\mathcal{S}))\). By the \(\subseteq\)-maximality of \(\mathcal{E}\), \(\mathcal{E} = \mathit{Arg}_{\vdash}(\Theta)\) and hence \(\mathcal{E} \in \exts_\stb(\mathcal{AF}_{\mathsf{con}}(\mathcal{S}))\). It is well-known that every stable extension is preferred.
\end{proof}
%

\begin{lemma}
\label{lm:MCS:cum}
Where \(\mathcal{S} \subseteq \mathcal{L}\), \(\mathcal{AF}_{\vdash} = (\vdash, \overline{\cdot}, \mathsf{id})\) is contrapositable, \(\vdash\) satisfies Cut, and for every \(\Theta \in \mathsf{MCS}(\mathcal{AF}_{\vdash}(\mathcal{S}))\) there is a \(\Theta' \subseteq \Theta\) for which \(\Theta' \vdash \phi\), we have: \(\mathsf{MCS}(\mathcal{AF}_{\vdash}(\mathcal{S})) = \mathsf{MCS}(\mathcal{AF}_{\vdash}^{+\phi}(\mathcal{S}))\).
\end{lemma}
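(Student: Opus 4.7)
My plan is to prove the two inclusions separately. For \(\mathsf{MCS}(\mathcal{AF}_{\vdash}(\mathcal{S})) \subseteq \mathsf{MCS}(\mathcal{AF}_{\vdash}^{+\phi}(\mathcal{S}))\), I fix \(\Theta \in \mathsf{MCS}(\mathcal{AF}_{\vdash}(\mathcal{S}))\). Maximality in \(\vdash^{+\phi}\) is automatic: any \(\Lambda \supsetneq \Theta\) that is \(\vdash^{+\phi}\)-consistent is \emph{a fortiori} \(\vdash\)-consistent, since \({\vdash}\subseteq{\vdash^{+\phi}}\), and this would contradict the \(\vdash\)-maximality of \(\Theta\). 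The substantive step is to show that \(\Theta\) is still \(\vdash^{+\phi}\)-consistent. I would argue by contradiction: assume some \(\gamma \in \Theta\) and \(\gamma' \in \overline{\gamma}\) with \(\Theta \setminus \{\gamma\} \vdash^{+\phi} \gamma'\). Using the hypothesis, pick \(\Theta^{\star} \subseteq \Theta\) with \(\Theta^{\star} \vdash \phi\) and apply Cut to obtain \(\Theta^{\star} \cup (\Theta \setminus \{\gamma\}) \vdash \gamma'\). Then I case-split on whether \(\gamma \in \Theta^{\star}\): if not, the left-hand side reduces to \(\Theta \setminus \{\gamma\}\), which directly violates the \(\vdash\)-consistency of \(\Theta\); if so, the left-hand side equals \(\Theta\), so \(\Theta \vdash \gamma'\), and contraposition applied with the distinguished element \(\sigma := \gamma\) yields \(\Theta \setminus \{\gamma\} \vdash \sigma'\) for some \(\sigma' \in \overline{\gamma}\), again contradicting \(\vdash\)-consistency.

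For the reverse inclusion, I fix \(\Theta \in \mathsf{MCS}(\mathcal{AF}_{\vdash}^{+\phi}(\mathcal{S}))\). Consistency in \(\vdash\) is immediate from \({\vdash}\subseteq{\vdash^{+\phi}}\). For maximality, suppose towards a contradiction that some \(\vdash\)-consistent \(\Lambda \supsetneq \Theta\) exists in \(\mathcal{S}\). A standard Zorn-type argument, based on the fact that unions of chains of \(\vdash\)-consistent subsets of \(\mathcal{S}\) remain \(\vdash\)-consistent (because \(\vdash\) is finitary, so any witness of inconsistency of the union already fits inside some member of the chain), extends \(\Lambda\) to some \(\hat{\Theta} \in \mathsf{MCS}(\mathcal{AF}_{\vdash}(\mathcal{S}))\). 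The inclusion I proved in the previous step then places \(\hat{\Theta}\) into \(\mathsf{MCS}(\mathcal{AF}_{\vdash}^{+\phi}(\mathcal{S}))\), and \(\Theta \subsetneq \hat{\Theta}\) contradicts the \(\vdash^{+\phi}\)-maximality of \(\Theta\).

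The main obstacle is the consistency half of the first inclusion: Cut only lets me transfer the putative \(\vdash^{+\phi}\)-derivation of \(\gamma'\) into a \(\vdash\)-derivation from \(\Theta^{\star} \cup (\Theta \setminus \{\gamma\})\), and this set re-introduces \(\gamma\) precisely when \(\gamma \in \Theta^{\star}\). The essential trick is to notice that in that case contraposition can be applied at \(\sigma = \gamma\) itself, which removes \(\gamma\) again and produces the desired contradiction with \(\vdash\)-consistency of \(\Theta\). Everything else (the maximality clauses, the reverse inclusion, and the Zorn extension) is routine once this step is in hand.
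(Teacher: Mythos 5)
Your proof is correct and takes essentially the same route as the paper's: the paper also reduces the problem to showing that each maximal consistent set of one relation is still consistent with respect to the other, and for the nontrivial direction it likewise uses Cut with a $\phi$-deriving subset $\Theta^{\star}\subseteq\Theta$ and then contraposition at $\gamma$ to eliminate the re-introduced premise. Your explicit case split on $\gamma\in\Theta^{\star}$ and the Zorn-style extension for the maximality bookkeeping merely spell out steps the paper leaves implicit.
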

\begin{proof}
It suffices to show \(\mathsf{MCS}(\mathcal{AF}_{\vdash}(\mathcal{S})) \subseteq \mathsf{CS}(\mathcal{AF}_{\vdash}^{+\phi}(\mathcal{S}))\) and \(\mathsf{MCS}(\mathcal{AF}_{\vdash}^{+\phi}(\mathcal{S})) \subseteq \mathsf{CS}(\mathcal{AF}_{\vdash}(\mathcal{S}))\).

Let \(\Theta \in \mathsf{MCS}(\mathcal{AF}_{\vdash}^{+\phi}(\mathcal{S}))\). Assume \(\Theta \notin \mathsf{CS}(\mathcal{AF}_{\vdash}(\mathcal{S}))\). Thus, there is a \(\Theta' \subseteq \Theta\) and a \(\gamma \in \Theta'\) for which \(\Theta' \setminus \{\gamma\} \vdash \psi\) where \(\psi \in \overline{\gamma}\). But then \(\Theta' \setminus \{\gamma\} \vdash^{+\phi} \psi\) and hence \(\Theta \notin \mathsf{CS}(\mathcal{AF}_{\vdash}^{+\phi}(\mathcal{S}))\) which is a contradiction. Hence \(\Theta \in \mathsf{CS}(\mathcal{AF}_{\vdash}(\mathcal{S}))\).

Let \(\Theta \in \mathsf{MCS}(\mathcal{AF}_{\vdash}(\mathcal{S}))\). Assume \(\Theta \notin \mathsf{CS}(\mathcal{AF}_{\vdash}^{+\phi}(\mathcal{S}))\). Then there is a \(\Theta' \subseteq \Theta\) such that \(\Theta' \setminus \{\gamma\} \vdash^{+\phi} \psi\) where \(\psi \in \overline{\gamma}\) and \(\gamma \in \Theta'\). Since \(\Theta \in \mathsf{CS}(\mathcal{AF}_{\vdash}(\mathcal{S}))\), \(\Theta' \setminus \{\gamma\} \nvdash \psi\). Since \(\Theta'' \vdash \phi\) for some \(\Theta'' \subseteq \Theta\), by Cut, \(\Theta'' \cup (\Theta' \setminus \{\gamma\}) \vdash \psi\) and by contraposition \((\Theta'' \cup \Theta') \setminus \{\gamma\} \vdash \psi'\) for some \(\psi' \in \overline{\gamma}\). Thus, \(\Theta \notin \mathsf{CS}(\mathcal{AF}_{\vdash}(\mathcal{S}))\) which is a contradiction. Hence \(\Theta \in \mathsf{CS}(\mathcal{AF}_{\vdash}^{+\phi}(\mathcal{S}))\).
\end{proof}

\begin{lemma}
\label{lm:grd:free}
For a set of formulas \(\mathcal{S} \subseteq \mathcal{L}\) and where \(\mathcal{AF}_{\vdash}\) is contrapositable and \(\vdash\) satisfies Cut, we have that \(\exts_\grd(\mathcal{AF}_{\mathsf{con}}(\mathcal{S})) = \mathit{Arg}_{\vdash}\left( \bigcap \mathsf{MCS}(\mathcal{AF}_{\vdash}(\mathcal{S})) \right)\).
\end{lemma}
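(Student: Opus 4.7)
The plan is to prove the two inclusions of the equation separately. A preliminary observation simplifies the right-hand side: by straightforward set-theoretic reasoning, \(\bigcap_{\Theta \in \mathsf{MCS}(\mathcal{AF}_{\vdash}(\mathcal{S}))} \mathit{Arg}_{\vdash}(\Theta) = \mathit{Arg}_{\vdash}\bigl(\bigcap \mathsf{MCS}(\mathcal{AF}_{\vdash}(\mathcal{S}))\bigr)\), since an argument \((\Gamma,\gamma)\) lies in every \(\mathit{Arg}_{\vdash}(\Theta)\) iff \(\Gamma \subseteq \Theta\) for every \(\Theta \in \mathsf{MCS}\), iff \(\Gamma \subseteq \bigcap \mathsf{MCS}\).

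For the inclusion \(\mathsf{Grd}(\mathcal{AF}_{\mathsf{con}}(\mathcal{S})) \subseteq \mathit{Arg}_{\vdash}\bigl(\bigcap \mathsf{MCS}(\mathcal{AF}_{\vdash}(\mathcal{S}))\bigr)\), I would invoke the standard fact that the grounded extension is contained in every complete (and hence every stable) extension. By Lemma \ref{lm:MCS:1}, for each \(\Theta \in \mathsf{MCS}(\mathcal{AF}_{\vdash}(\mathcal{S}))\) the set \(\mathit{Arg}_{\vdash}(\Theta)\) is a stable extension of \(\mathcal{AF}_{\mathsf{con}}(\mathcal{S})\). Thus \(\mathsf{Grd}(\mathcal{AF}_{\mathsf{con}}(\mathcal{S})) \subseteq \bigcap_{\Theta \in \mathsf{MCS}} \mathit{Arg}_{\vdash}(\Theta)\), which by the preliminary observation equals \(\mathit{Arg}_{\vdash}(\bigcap \mathsf{MCS})\).

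For the reverse inclusion, my strategy is to show that every \(a = (\Gamma,\gamma) \in \mathit{Arg}_{\vdash}(\bigcap \mathsf{MCS}(\mathcal{AF}_{\vdash}(\mathcal{S})))\) is unattacked in \(\mathcal{AF}_{\mathsf{con}}(\mathcal{S})\); it will then lie in \(\mathsf{Grd}_0(\mathcal{AF}_{\mathsf{con}}(\mathcal{S})) \subseteq \mathsf{Grd}(\mathcal{AF}_{\mathsf{con}}(\mathcal{S}))\). Suppose for a contradiction that some \(b = (\Lambda,\beta) \in \mathit{Arg}_{\mathsf{con}}(\mathcal{S})\) attacks \(a\), so that \(\beta \in \overline{\phi}\) for some \(\phi \in \Gamma\). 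Since \(\Lambda\) is \(\mathcal{AF}_{\vdash}(\mathcal{S})\)-consistent, Zorn's lemma supplies a \(\Theta \in \mathsf{MCS}(\mathcal{AF}_{\vdash}(\mathcal{S}))\) with \(\Lambda \subseteq \Theta\); moreover \(\phi \in \Gamma \subseteq \bigcap \mathsf{MCS}\) gives \(\phi \in \Theta\). I then split into two cases. If \(\phi \notin \Lambda\), then \(\Lambda \cup \{\phi\} \subseteq \Theta\) together with \(\Lambda \vdash \beta \in \overline{\phi}\) directly witnesses the \(\mathcal{AF}_{\vdash}(\mathcal{S})\)-inconsistency of \(\Theta\), contradicting \(\Theta \in \mathsf{MCS}\). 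If \(\phi \in \Lambda\), I apply contraposition to \(\Lambda \vdash \beta \in \overline{\phi}\) at \(\sigma = \phi\) to obtain \(\Lambda \setminus \{\phi\} \vdash \phi'\) for some \(\phi' \in \overline{\phi}\); this witnesses the inconsistency of \(\Lambda\), contradicting \(b \in \mathit{Arg}_{\mathsf{con}}(\mathcal{S})\).

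I expect the main obstacle to be the second case of the \(\supseteq\) direction: one must apply contraposition to the correct premise, then verify that the resulting derivation instantiates the (slightly subtle) inconsistency pattern from Def.~\ref{df:consistent:AF}. There is also an implicit appeal to Zorn's lemma to extend a consistent set to an MCS; this is standard given that \(\vdash\) operates on finite left-hand sides.
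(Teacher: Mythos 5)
Your proof is correct and follows essentially the same route as the paper's: the $\subseteq$ direction via Lemma \ref{lm:MCS:1} and the fact that the grounded extension is contained in every stable (hence complete) extension, and the $\supseteq$ direction by showing arguments based on $\bigcap \mathsf{MCS}(\mathcal{AF}_{\vdash}(\mathcal{S}))$ are unattacked in $\mathcal{AF}_{\mathsf{con}}(\mathcal{S})$. Your explicit case split on whether $\phi \in \Lambda$ (using contraposition in the second case) is in fact slightly more careful than the paper's terser treatment of the same point.
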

\begin{proof}
Suppose \(a \in \exts_\grd(\mathcal{AF}_{\mathsf{con}}(\mathcal{S}))\). Then \(a \in \bigcap \exts_\prf(\mathcal{AF}_{\mathsf{con}}(\mathcal{S}))\). By Lemmas \ref{lm:MCS:1}, \ref{lm:MCS:2}, and Corollary \ref{cor:stb:prf:contra}, \(a \in \bigcap_{\Theta \in \mathsf{MCS}(\mathcal{AF}_{\vdash}(\mathcal{S}))} \mathit{Arg}_{\vdash}(\Theta)\) and hence \(a \in \mathit{Arg}_{\vdash}\left( \bigcap \mathsf{MCS}(\mathcal{AF}_{\vdash}(\mathcal{S})) \right)\).

Suppose \(\Theta \subseteq \bigcap \mathsf{MCS}(\mathcal{AF}_{\vdash}(\mathcal{S}))\) and \(\Gamma \vdash \gamma\). Consider \(a = (\Gamma, \gamma) \in \mathit{Arg}_{\vdash}(\Theta)\). Suppose some \(b = (\Lambda, \sigma) \in \mathit{Arg}_{\mathsf{con}}(\mathcal{S})\) attacks \(a\). Thus, \(\sigma \in \overline{\alpha}\) for some \(\alpha \in \Gamma\). But then \(\Lambda \cup \{\alpha\} \notin \mathsf{CS}(\mathcal{AF}_{\vdash}(\mathcal{S}))\) while \(\Lambda \in \mathsf{CS}(\mathcal{AF}_{\vdash}(\mathcal{S}))\). Thus, there is a \(\Lambda' \in \mathsf{MCS}(\mathcal{AF}_{\vdash}(\mathcal{S}))\) for which \(\Lambda \subseteq \Lambda'\) and \(\alpha \notin \Lambda'\) which contradicts that \(\alpha \in \bigcap \mathsf{MCS}(\mathcal{AF}_{\vdash}(\mathcal{S}))\). So, \(a \in \exts_\grd(\mathcal{AF}_{\mathsf{con}}(\mathcal{S}))\) since it has no attackers.
\end{proof}

\begin{corollary}
\label{cor:free:CUM}
Where \(\mathcal{AF}_{\vdash}\) is a contrapositable argumentation setting for which \(\vdash\) satisfies Cut, if there is a \(\Theta \subseteq \bigcap \mathsf{MCS}(\mathcal{AF}_{\vdash}(\mathcal{S}))\) with \(\Theta \vdash \phi\) then \(\bigcap \mathsf{MCS}(\mathcal{AF}_{\vdash}(\mathcal{S})) = \bigcap \mathsf{MCS}(\mathcal{AF}_{\vdash}^{+\phi}(\mathcal{S}))\) for every \(\mathcal{S} \subseteq \mathcal{L}\).
\end{corollary}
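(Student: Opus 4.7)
The plan is to derive this as a near-immediate consequence of Lemma~\ref{lm:MCS:cum}. First, I would observe that the hypothesis of the corollary is strictly stronger than the hypothesis required by that lemma: if there is a single $\Theta \subseteq \bigcap \mathsf{MCS}(\mathcal{AF}_{\vdash}(\mathcal{S}))$ with $\Theta \vdash \phi$, then for each individual $\Theta^{\star} \in \mathsf{MCS}(\mathcal{AF}_{\vdash}(\mathcal{S}))$ we automatically have $\Theta \subseteq \Theta^{\star}$, so choosing $\Theta' := \Theta$ witnesses the ``for every MCS there is a subset deriving $\phi$'' condition that Lemma~\ref{lm:MCS:cum} requires.

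Once the lemma's hypothesis is verified, applying Lemma~\ref{lm:MCS:cum} (which itself relies on Cut and contraposition, both available here) yields the set-level identity $\mathsf{MCS}(\mathcal{AF}_{\vdash}(\mathcal{S})) = \mathsf{MCS}(\mathcal{AF}_{\vdash}^{+\phi}(\mathcal{S}))$. Intersecting over both sides gives $\bigcap \mathsf{MCS}(\mathcal{AF}_{\vdash}(\mathcal{S})) = \bigcap \mathsf{MCS}(\mathcal{AF}_{\vdash}^{+\phi}(\mathcal{S}))$, which is exactly the claim.

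There is essentially no obstacle here: the corollary is a one-line specialization of Lemma~\ref{lm:MCS:cum} to the situation where the required ``derivation witness'' for $\phi$ can be chosen uniformly across every MCS because it sits in the intersection. All the technical content (the interplay of Cut and contraposition needed to push an ${\vdash}^{+\phi}$-derivation back into $\vdash$) has already been discharged in the proof of Lemma~\ref{lm:MCS:cum}, so the corollary's proof amounts to nothing more than verifying the antecedent and then taking intersections.
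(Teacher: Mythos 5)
Your proposal is correct and matches the paper's own (one-line) proof, which likewise derives the corollary directly from Lemma~\ref{lm:MCS:cum}; you have simply spelled out the verification that the uniform witness $\Theta \subseteq \bigcap \mathsf{MCS}(\mathcal{AF}_{\vdash}(\mathcal{S}))$ instantiates the lemma's per-MCS hypothesis. Nothing further is needed.
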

\begin{proof}
This follows directly with Lemma \ref{lm:MCS:cum}.
\end{proof}

\begin{fact} \label{fact:vdash:plus}
If \(\vdash\) satisfies Cut, \(\vdash^{+\phi}\) satisfies Cut. If \(\mathcal{AF}_{\vdash}\) is contrapositable, \(\mathcal{AF}_{\vdash^{+\phi}}\) is contrapositable.
\end{fact}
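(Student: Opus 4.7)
The plan is to treat every derivation in $\vdash^{+\phi}$ as a finite \emph{derivation tree} in the transitive closure of $\vdash \cup \{(\emptyset,\phi)\}$: leaves are either $\vdash$-instances or the axiom $(\emptyset,\phi)$, and internal nodes chain sub-derivations via the transitive-closure Cut rule. The support of such a tree is the union of its leaves' supports. Both parts of the fact reduce to how this tree structure interacts with Cut and contrapositability of the base relation.

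For the first part, observe that $(\vdash^{+\phi})^{+\psi}$ is itself the transitive closure of $\vdash \cup \{(\emptyset,\phi),(\emptyset,\psi)\}$. Given witnessing trees $T_1$ for $\Gamma \vdash^{+\phi} \psi$ and $T_2$ for $\Delta \vdash^{(+\phi)+\psi} \gamma$, I would splice a copy of $T_1$ into $T_2$ at every leaf labelled by the axiom $(\emptyset,\psi)$. The resulting tree uses only $\vdash$-instances and the axiom $(\emptyset,\phi)$, has support $\Gamma \cup \Delta$, and therefore witnesses $\Gamma \cup \Delta \vdash^{+\phi} \gamma$. Cut of $\vdash$ is used to handle $\vdash$-leaves in $T_2$ whose own support already contains $\psi$: there, the derivation of $\psi$ from $\Gamma$ is absorbed into that $\vdash$-step via Cut, so that after splicing the resulting leaf is still an instance of $\vdash$.

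For the second part, I would prove by induction on the size of the derivation tree $T$ witnessing $\Theta \vdash^{+\phi} \gamma'$ (with $\gamma' \in \overline{\gamma}$) the statement: for every $\sigma \in \Theta$ there is a $\sigma' \in \overline{\sigma}$ with $(\Theta \cup \{\gamma\}) \setminus \{\sigma\} \vdash^{+\phi} \sigma'$. If $T$ is a single $\vdash$-leaf, contrapositability of $\vdash$ applies directly; if $T$ is the axiom leaf $(\emptyset,\phi)$, then $\Theta = \emptyset$ and the claim is vacuous. Otherwise $T$ combines sub-derivations $T_i$ witnessing $A_i \vdash^{+\phi} \delta_i$ with a tree $T'$ witnessing $\{\delta_1,\dots,\delta_n\} \cup B \vdash^{+\phi} \gamma'$, where $\Theta = \bigcup_i A_i \cup B$. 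Applying the IH to the smaller tree $T'$ (its conclusion $\gamma'$ is still a contrary), we can flip any top-level premise $\tau \in \{\delta_1,\dots,\delta_n\} \cup B$ to derive some $\tau' \in \overline{\tau}$ from $(\{\delta_1,\dots,\delta_n\} \cup B \cup \{\gamma\}) \setminus \{\tau\}$; chaining the unchanged $T_i$'s back in via the transitive-closure Cut then handles the case $\sigma \in B$ immediately.

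The main obstacle is the case $\sigma \in A_j$ for some $j$. Here, flipping $\delta_j$ at the top with IH yields a tree deriving some $\delta_j' \in \overline{\delta_j}$ from $\bigcup_{i\neq j} A_i \cup B \cup \{\gamma\}$; the remaining task is to flip $\sigma$ out of the sub-derivation $T_j: A_j \vdash^{+\phi} \delta_j$, whose conclusion $\delta_j$ is not itself in $\overline{(\cdot)}$ in an obvious way and so does not directly license an IH application. My plan is to combine the flipped top with $T_j$ using the Cut property of $\vdash^{+\phi}$ (provided by part~(a)) so as to rearrange premises and then re-invoke the IH on a smaller sub-tree whose conclusion now lies in $\overline{\sigma}$. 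This step is where the Cut and contrapositability hypotheses genuinely interact, and is the most technical point of the proof.
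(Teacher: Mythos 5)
Your first part is workable (splicing $T_1$ at the $(\emptyset,\psi)$-leaves of $T_2$ does yield a tree over $\vdash\cup\{(\emptyset,\phi)\}$, and the chaining steps are absorbed by the transitive closure itself), but the second part has a genuine gap, and you flag it yourself: the case $\sigma\in A_j$ is never actually discharged. The repair you sketch does not go through as described. After applying the inductive hypothesis to $T'$ to flip $\delta_j$, the resulting derivation has support $(\{\delta_1,\dots,\delta_n\}\cup B\cup\{\gamma\})\setminus\{\delta_j\}$ and conclusion $\delta_j'\in\overline{\delta_j}$; since $\delta_j$ no longer occurs in that support, the Cut/transitive-closure rule gives you no way to chain $T_j\colon A_j\vdash^{+\phi}\delta_j$ into it, and contrapositability cannot be applied to $T_j$ either, because its conclusion $\delta_j$ need not lie in $\overline{(\cdot)}$ of anything (contrariness is not assumed symmetric, so $\delta_j\in\overline{\delta_j'}$ is not available). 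Hence the step ``re-invoke the IH on a smaller sub-tree whose conclusion now lies in $\overline{\sigma}$'' has no candidate sub-tree, and the most technical point of your proof is left unproved.

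The fix is to reverse the order of operations: chain first, contrapose once at the end. The paper avoids the tree induction entirely by using the normal form that Cut makes available: $\Theta\vdash^{+\phi}\chi$ holds iff $\Theta\vdash\chi$ or $\Theta\cup\{\phi\}\vdash\chi$ (apart from the axiom $(\emptyset,\phi)$ itself). With this, both claims are short case analyses: for Cut one combines the disjuncts using the base Cut; for contraposition one applies the base contrapositability once to $\Theta\vdash\gamma'$ or to $\Theta\cup\{\phi\}\vdash\gamma'$, and in the latter case $\phi$ simply survives into the flipped support and is re-absorbed into $\vdash^{+\phi}$. In your tree language this amounts to first flattening $T$ by chaining all the $T_i$ into $T'$ --- which your part on Cut already licenses --- so that the whole derivation becomes a single $\vdash$-instance with possibly $\phi$ adjoined, and only then contraposing. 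If you want to keep an induction, induct on this flattened form rather than contraposing at the top of the tree and trying to push the result back down through the $T_j$.
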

\begin{proof}
Suppose \(\Gamma \vdash^{+\phi} \psi\) and \(\Gamma', \psi \vdash^{+\phi} \psi'\). Then \(\Gamma \vdash \psi\) or \(\Gamma, \phi \vdash \psi\) and \(\Gamma', \psi \vdash \psi'\) or \(\Gamma',\psi, \phi \vdash \psi'\). Since \(\vdash\) satisfies Cut we get \(\Gamma, \Gamma' \vdash \psi'\) or \(\Gamma, \Gamma', \phi \vdash \psi'\). Hence, \(\Gamma, \Gamma' \vdash^{+\phi} \psi'\). 

Suppose \(\Theta \vdash^{+\phi} \gamma'\) for some \(\gamma' \in \overline{\gamma}\) and let \(\sigma \in \Theta\). Then either \(\Theta \vdash \gamma'\) or \(\Theta, \phi \vdash \gamma'\). Since \(\mathcal{AF}_{\vdash}\) is contrapositable, for some \(\sigma' \in \overline{\sigma}\), either \((\Theta \cup \{\gamma\}) \setminus \{\sigma\} \vdash \sigma'\) or \((\Theta \cup \{\gamma\}) \setminus \{\sigma\}, \phi \vdash \sigma'\). Thus, \((\Theta \cup \{\gamma\}) \setminus \{\sigma\} \vdash^{+\phi} \sigma'\).
\end{proof}

\CUMcon*

\begin{proof}
Let $\sem\in\{\prf,\stb\}$, $\star\in\{\cap,\Cap\}$ and $\mathcal{S} \subseteq \mathcal{L}$. Suppose $\mathcal{S} \nc_{\star\mathsf{sem}}^{\mathcal{AF}_{\mathsf{con}}} \phi$. Let $\Theta \in \mathsf{MCS}(\mathcal{AF}_{\vdash}(\mathcal{S}))$. By Lemma \ref{lm:MCS:1} and Corollary \ref{cor:stb:prf:contra}, $\mathit{Arg}_{\vdash}(\Theta) \in \exts_\sem(\mathcal{AF}_{\mathsf{con}}(\mathcal{S}))$. Since $\mathcal{S} \nc_{\star\mathsf{sem}}^{\mathcal{AF}_{\mathsf{con}}} \phi$, there is a $(\Gamma,\phi) \in \mathit{Arg}_{\vdash}(\Theta)$. Hence $\Gamma \vdash \phi$ and $\Gamma \subseteq \Theta$. Hence, by Lemma \ref{lm:MCS:cum}, ($\star$) $\mathsf{MCS}(\mathcal{AF}_{\vdash}(\mathcal{S})) = \mathsf{MCS}(\mathcal{AF}_{\vdash}^{+\phi}(\mathcal{S}))$.

\medskip

\emph{Extensional cumulativity.}
Suppose first that $\mathcal{E} \in \exts_\sem(\mathcal{AF}_{\mathsf{con}}(\mathcal{S}))$. Thus, by Lemma~\ref{lm:MCS:2} and Corollary~\ref{cor:stb:prf:contra}, $\bigcup \{\mathsf{Supp}(a) \mid a \in \mathcal{E} \} \in \mathsf{MCS}(\mathcal{AF}_{\vdash}(\mathcal{S}))$. By Lemma~\ref{lm:MCS:1} and Corollary~\ref{cor:stb:prf:contra}, $\mathit{Arg}_{\vdash}(\bigcup \{ \mathsf{Supp}(a) \mid a \in \mathcal{E}\}) \in \exts_\sem(\mathcal{AF}_{\mathsf{con}}(\mathcal{S}))$. Since we have that $\mathit{Arg}_{\vdash}(\bigcup \{ \mathsf{Supp}(a) \mid a \in \mathcal{E}\}) \supseteq \mathcal{E}$ and $\mathcal{E} \in \mathsf{Sem}(\mathcal{AF}_{\mathsf{con}}(\mathcal{S}))$, by the \(\subseteq\)-maximality of $\mathcal{E}$, $\mathcal{E} = \mathit{Arg}_{\vdash}(\bigcup \{ \mathsf{Supp}(a) \mid a \in \mathcal{E}\})$. Also, by ($\star$), $\bigcup \{ \mathsf{Supp}(a) \mid a \in \mathcal{E}\} \in \mathsf{MCS}(\mathcal{AF}_{\vdash}^{+\phi}(\mathcal{S}))$. Let $\mathcal{E}^+ = \mathit{Arg}_{\vdash}^{+\phi}(\bigcup \{ \mathsf{Supp}(a) \mid a \in \mathcal{E}\})$. By Fact~\ref{fact:vdash:plus}, Lemma~\ref{lm:MCS:1} and Corollary~\ref{cor:stb:prf:contra}, $\mathcal{E}^+ \in \exts_\sem(\mathcal{AF}_{\vdash}^{+\phi}(\mathcal{S}))$. Note that $\mathcal{E}^+ \cap \mathit{Arg}_{\vdash}(\mathcal{S}) = \mathcal{E}$. 

Suppose now that $\mathcal{E}^+ \in \exts_\sem(\mathcal{AF}_{\vdash}^{+\phi}(\mathcal{S}))$. By Lemma~\ref{lm:MCS:2} and Corollary~\ref{cor:stb:prf:contra}, $\bigcup \{ \mathsf{Supp}(a) \mid a \in \mathcal{E}^+\} \in \mathsf{MCS}(\mathcal{AF}_{\vdash}^{+\phi}(\mathcal{S}))$. By ($\star$) it follows that $\bigcup \{ \mathsf{Supp}(a) \mid a \in \mathcal{E}^+\} \in \mathsf{MCS}(\mathcal{AF}_{\vdash}(\calS))\) By Lemma~\ref{lm:MCS:1} and Corollary~\ref{cor:stb:prf:contra} for $\mathcal{E} = \mathit{Arg}_{\vdash}(\bigcup \{ \mathsf{Supp}(a) \mid a \in \mathcal{E}^+\})$, $\mathcal{E} \in \exts_\sem(\mathcal{AF}_{\vdash}(\mathcal{S}))$. Note that $\mathcal{E} = \mathcal{E}^+ \cap \mathit{Arg}_{\vdash}(\mathcal{S})$.

Altogether we have shown extensional cumulativity.
  
  
  \medskip
  
  \emph{Cumulativity.} We first consider $\star = \cap$. Suppose $\mathcal{S}\: \nc_{\cap\sem}^{\mathcal{AF}_{\sf con}} \:\psi$. Thus, there is a $(\Gamma, \psi) \in \bigcap \mathsf{Ext}_{\mathsf{sem}} (\mathcal{AF}_{\sf con}(\mathcal{S}))$. Let $\mathcal{E}' \in \mathsf{Ext}_{\mathsf{sem}}(\mathcal{AF}^{+\phi}_{\sf con}(\mathcal{S}))$. By extensional cumulativity it is known that $\mathcal{E}' \cap \mathrm{Arg}(\mathcal{AF}_{\sf con}(\mathcal{S})) \in \mathsf{Ext}_{\mathsf{sem}}(\mathcal{AF}_{\sf con}(\mathcal{S}))$. It this follows that $(\Gamma,\psi) \in \bigcap \mathsf{Ext}_{\mathsf{sem}}(\mathcal{AF}^{+\phi}_{\sf con}(\mathcal{S}))$ and thus $\mathcal{S}\: \nc_{\cap\sem}^{\mathcal{AF}^{+\phi}_{\sf con}} \:\psi$.

Suppose $\mathcal{S}\: \nc_{\cap\sem}^{\mathcal{AF}^{+\phi}_{\sf con}}\: \psi$. Thus, there is a $(\Gamma,\psi) \in \bigcap \mathsf{Ext}_{\mathsf{sem}}(\mathcal{AF}^{+\phi}_{\sf con}(\mathcal{S}))$. Let now $\mathcal{E}$ be arbitrary in $\mathsf{Ext}_{\mathsf{sem}}(\mathcal{AF}_{\sf con}(\mathcal{S}))$. By extensional cumulativity, there is an $\mathcal{E}' \in \mathsf{Ext}_{\mathsf{sem}}(\calAF_{\sf con}^{+\phi}(\mathcal{S}))$ for which $\mathcal{E} = \mathcal{E}' \cap \mathrm{Arg}(\mathcal{AF}_{\sf con}(\mathcal{S}))$. If $\phi \notin \Gamma$, $(\Gamma, \psi) \in \mathcal{E}$ and thus $(\Gamma, \psi ) \in \bigcap \mathsf{Ext}_{\mathsf{sem}}(\mathcal{AF}_{\sf con}(\mathcal{S}))$ since $\mathcal{E}$ was an arbitrary member of $\mathsf{Ext}_{\mathsf{sem}}(\mathcal{AF}_{\sf con}(\mathcal{S}))$. Thus, $\mathcal{S} \:\nc_{\cap\sem}^{\mathcal{AF}_{\sf con}}\: \psi$. Else, note first that since $\mathcal{S}\: \nc_{\cap\sem}^{\mathcal{AF}_{\sf con}} \:\phi$, there is an argument $(\Delta, \phi) \in \bigcap \mathsf{Ext}_{\mathsf{sem}}(\mathcal{AF}_{\sf con}(\mathcal{S}))$. So $(\Delta,\phi) \in \mathcal{E}$ and hence $(\Delta,\phi) \in \mathcal{E}'$. By Cut, $(\Delta \cup \Gamma, \psi) \in \textit{Arg}_\vdash(\calS)$. By Lemma \ref{lm:MCS:2}, $\bigcup \{ \mathsf{Supp}(a) \mid a \in \mathcal{E}'\} \in \mathsf{MCS}(\mathcal{AF}^{+\phi}_{\sf con}(\mathcal{S}))$. By Lemma \ref{lm:MCS:1}, $\textit{Arg}_\vdash(\bigcup \{\mathsf{Supp}(a) \mid a \in \mathcal{E}'\}) \in \mathsf{Ext}_{\mathsf{sem}}(\mathcal{AF}^{+\phi}_{\sf con}(\mathcal{S}))$. By the \(\subseteq\)-maximality of $\mathcal{E}'$, $\mathcal{E}' = \bigcup \{\mathsf{Supp}(a) \mid a \in \mathcal{E}'\}$ and so $(\Delta \cup \Gamma,\psi) \in \mathcal{E}'$. Since $\Delta \cup \Gamma \subseteq \mathcal{S}$ also $(\Delta \cup \Gamma, \psi) \in \mathcal{E}$. Thus, $(\Delta \cup \Gamma, \psi) \in \bigcap \mathsf{Ext}_{\mathsf{sem}}(\mathcal{AF}_{\sf con}(\calS))$ since $\mathcal{E}$ was an arbitrary member of $\mathsf{Ext}_{\mathsf{sem}}(\mathcal{AF}_{\sf con}(\mathcal{S}))$. Hence, $\mathcal{S} \:\nc_{\cap\sem}^{\mathcal{AF}_{\sf con}}\: \psi$.

\medskip
  
  Now let $\star = \Cap$. Suppose \(\mathcal{S} ~{\nnc_{\Cap\mathsf{sem}}^{\mathcal{AF}_{\mathsf{con}}}}~ \psi\).  Thus, there is an \(\mathcal{E} \in \exts_\sem(\mathcal{AF}_{\mathsf{con}}(\mathcal{S}))\) such that there is no \((\Gamma, \psi) \in \mathcal{E}\). By Lemma~\ref{lm:MCS:2} and Corollary~\ref{cor:stb:prf:contra}, $\bigcup\{ \mathsf{Supp}(a) \mid a \in \mathcal{E}\} \in \mathsf{MCS}(\mathcal{AF}_{\sf con}(\mathcal{S}))$. By Lemma~\ref{lm:MCS:1} and Corollary~\ref{cor:stb:prf:contra}, $\mathit{Arg}_{\vdash}(\bigcup \{ \mathsf{Supp}(a) \mid a \in \mathcal{E}\}) \in \exts_\sem(\mathcal{AF}_{\mathsf{con}}(\mathcal{S}))$ and thus, by the $\subseteq$-maximality of $\mathcal{E}$, $\mathcal{E} = \mathit{Arg}_{\vdash}(\bigcup \{ \mathsf{Supp}(a) \mid a \in \mathcal{E}\})$. Also, since \(\mathcal{S} \:\nc_{\Cap\mathsf{sem}}^{\mathcal{AF}_{\mathsf{con}}}\: \phi\) there is a \((\Delta, \phi) \in \mathcal{E}\). By extensional cumulativity, there is a \(\mathcal{E}' \in \exts_\sem(\mathcal{AF}_{\mathsf{con}}^{+\phi}(\mathcal{S}))\) such that \(\mathcal{E} = \mathcal{E}' \cap \mathit{Arg}_\vdash(\mathcal{S})\). Assume there is a \((\Lambda,\psi) \in \mathcal{E}' \setminus \mathcal{E}\). Thus, $\Lambda \nvdash \psi$ but $\Lambda, \phi \vdash \psi$. Since $\Delta \vdash \phi$, by Cut, $\Lambda \cup \Delta \vdash \psi$. Since $\Lambda \cup \Delta \subseteq \bigcup \{ \mathsf{Supp}(a) \mid a \in \mathcal{E}\}$, $(\Lambda \cup \Delta, \psi) \in \mathcal{E}$, which is a contradiction. Since there is no argument with conclusion $\psi$ in $\mathcal{E}'$, $\mathcal{S} ~{\nnc_{{\Cap}\mathsf{sem}}^{\mathcal{AF}_{\mathsf{con}}^{+\phi}}}~ \psi$. 

Suppose now for the other direction that \(\mathcal{S} ~{\nnc_{{\cap}\mathsf{sem}}^{\mathcal{AF}_{\mathsf{con}}^{+\phi}}}~ \psi\). Thus, there is an \(\mathcal{E}' \in \exts_\sem(\mathcal{AF}_{\mathsf{con}}^{+\phi}(\mathcal{S}))\) such that there is no \((\Gamma, \psi) \in \mathcal{E}'\). Since, by extensional cumulativity, \(\mathcal{E} = \mathcal{E}' \cap \mathit{Arg}_\vdash(\mathcal{S}) \in \exts_\sem(\mathcal{AF}_{\mathsf{con}}(\mathcal{S}))\), also \(\mathcal{S} ~{\nnc_{{\Cap}\mathsf{sem}}^{\mathcal{AF}_{\mathsf{con}}}}~ \psi\). 

\medskip

  Let \(\mathsf{sem} = \mathsf{grd}\) and suppose \(\mathcal{S} \nc_{\mathsf{\grd}}^{\mathcal{AF}_{\mathsf{con}}} \phi\). By Fact~\ref{fact:vdash:plus}, Lemma \ref{lm:grd:free} and Corollary \ref{cor:free:CUM}, \(\exts_\grd\left( \mathcal{AF}_{\mathsf{con}}(\mathcal{S}) \right) = \mathit{Arg}_{\vdash}\left( \bigcap \mathsf{MCS}(\mathcal{AF}_{\vdash}(\mathcal{S})) \right) = \mathit{Arg}_{\vdash}\bigl( \bigcap \mathsf{MCS}(\mathcal{AF}_{\vdash}^{+\phi}(\mathcal{S})) \bigr) = \exts_\grd(\mathcal{AF}_{\mathsf{con}}^{+\phi}(\mathcal{S}))\). Cumulativity then follows by an analogous argument as in the case of \(\mathsf{sem} \in \{\mathsf{prf}, \mathsf{stb}\}\).
\end{proof}

\begin{fact}
  \label{fact:extendedMCSs}
  Let $\calS,\calS'\subseteq\mathcal{L}$ where $\calS\subseteq\calS'$, then for each $\mathcal{T}\in{\sf MCS}(\AF_\vdash(\calS))$ there is a $\mathcal{T}'\in{\sf MCS}(\AF_\vdash(\calS'))$ such that $\mathcal{T}\subseteq\mathcal{T}'$. 
\end{fact}
\begin{proof}
Let $\calS,\calS'\subseteq\mathcal{L}$ such that $\calS\subseteq\calS'$, and let $\mathcal{T}\in{\sf MCS}^\preceq(\calS)$. Note that $\mathcal{T}\in{\sf CS}(\AF_\vdash(\calS'))$. Thus, by definition of a maximally consistent subset, there is some $\mathcal{T}'\in{\sf MCS}(\AF_\vdash(\calS'))$ such that $\mathcal{T}\subseteq\mathcal{T}'$. 
\end{proof}

\Monotonicity*
\begin{proof}
  Let $\sem\in\{\prf,\stb\}$ and $\mathcal{S},\calS' \subseteq \mathcal{L}$ such that $\calS\subseteq\calS'$. Suppose that $\calS\:\nc^{\AF_{\sf con}}_{\cup{\sf sem}}\:\phi$. Thus, there is some $\ext\in\exts_\sem(\AF_{\sf con}(\calS))$ such that there is an $a\in\ext$ with $a=(\Gamma,\phi)$. By Lemma~\ref{lm:MCS:2} and Corollary~\ref{cor:stb:prf:contra}, $\mathcal{T}=\bigcup\{{\sf Supp}(b)\mid b\in\ext\}\in{\sf MCS}(\AF_{\sf con}(\calS))$. By Fact~\ref{fact:extendedMCSs}, there is some $\mathcal{T}'\in{\sf MCS}(\AF_{\sf con}(\calS'))$ such that $\mathcal{T}\subseteq\mathcal{T}'$. By Lemma~\ref{lm:MCS:1} and Corollary~\ref{cor:stb:prf:contra}, $\textit{Arg}_\vdash(\mathcal{T}')\in\exts_\sem(\AF_{\sf con}(\calS'))$. Thus $a\in\textit{Arg}_\vdash(\mathcal{T}')$, from which it follows that $\calS'\:\nc^{\AF_{\sf con}}_{\cup{\sf sem}}\:\phi$. 
\end{proof}


\section{Conclusion}
\label{sec:Conclusion}


In this paper we investigated the robustness of systems of structured argumentation under the addition of irrelevant information. To this end we introduced a simple and easily accessible account of structured argumentation in which meta-theoretic properties can be studied conveniently while systems from the literature can be represented. We studied the properties  Non-Interference, Crash Resistance, and Cumulativity. In future work we plan to incorporate priorities (e.g., \cite{ABS18prio,CyTo16,ModPra12}) and to extend our study to other meta-theoretic properties, such as other properties of non-monotonic inference (\cite{KLM90}) and rationality postulates (\cite{CAm2007}). 







\paragraph{Acknowledgements}
\label{sec:Acknowledgements}

The authors are supported by the Alexander von Humboldt Foundation and the German Ministry for Education and Research. The first author is also supported by the Israel Science Foundation (grant 817/15). 

\bibliographystyle{plain}
\bibliography{thesis.bib}

\end{document}